%% file: main.tex
\documentclass[11pt, UTF8]{article}
\usepackage[margin=1in]{geometry}
\usepackage{amsmath,amssymb,amsthm,mathtools}
\usepackage{booktabs,longtable,tabularx,array,multirow}
\usepackage{multicol,capt-of}
\usepackage{enumitem}
\usepackage{microtype}
\usepackage{xcolor}
\usepackage[hidelinks]{hyperref}
\usepackage{cleveref}
\usepackage{fancyhdr}
\usepackage{lastpage}
\usepackage{listings}

\usepackage[normalem]{ulem}

\setlist[itemize]{leftmargin=1.5em,itemsep=0.25em,topsep=0.3em}
\setlist[enumerate]{leftmargin=1.7em,itemsep=0.25em,topsep=0.3em}

\newtheorem{theorem}{Theorem}[section]
\newtheorem{proposition}[theorem]{Proposition}
\newtheorem{lemma}[theorem]{Lemma}

\theoremstyle{definition}
\newtheorem{definition}[theorem]{Definition}
\newtheorem{remark}[theorem]{Remark}

\newcommand{\R}{\mathbb{R}}
\newcommand{\Z}{\mathbb{Z}}
\newcommand{\N}{\mathbb{N}}
\newcommand{\MAX}{\operatorname{MAX}}

\newcommand{\A}{\mathcal{A}}

\title{Representing MAX functions using two-hidden-layer ReLU networks}
\author{Zhimao Wang \thanks{Department of Applied Mathematics, Johns Hopkins University, {\tt zwang539@jh.edu}, {\tt basu.amitabh@jhu.edu}}  \and Amitabh Basu\footnotemark[1]}
\date{}

\begin{document}
\maketitle

\begin{abstract}
We study exact representations of
\[
  \MAX_N(x)=\max\{x_1,\ldots,x_N\}
\]
using two-hidden-layer ReLU neural networks. This problem has been studied by several researchers in recent years in an attempt to characterize the exact number of hidden layers required to represent continuous piecewise linear functions. The best lower bound is 2, while the current upper bound is logarithmic in $N$. It remains completely open if the right answer is a constant number of hidden layers (possibly even 2!) or not. In fact,  a recent breakthrough was the representation of \(\MAX_5\) as a two-hidden-layer ReLU function which was obtained in~\cite{bakaev2026better}, and the case of \(\MAX_N\) was stated as open for $N \geq 6$ in that paper. 

Using a careful computer assisted search, we are able to obtain two-hidden-layer ReLU representations of \(\MAX_5\), \(\MAX_6\), \(\MAX_7\), and \(\MAX_8\). We obtain these by considering rational linear combination of terms of the form
\[
  \max\!\left\{
    \sum_{r=1}^{s}\max(x_{a_r},x_{b_r}),
    \sum_{r=1}^{s}\max(x_{c_r},x_{d_r})
  \right\},
\]
where $a_r, b_r, c_r, d_r \in \{1, \ldots, N\}$. Each inner maximum of two coordinates can be computed in a first hidden layer, and the outer maximum of the two side-sums can be computed in a second hidden layer.  Consequently, every finite linear combination of these terms has a two-hidden-layer ReLU realization.  An identity for \(\MAX_N\) in this form therefore gives  an exact two-hidden-layer ReLU representation of \(\MAX_N\).

 Very recently, two-hidden-layer representations of \(\MAX_N\) of the above form were obtained for all $N \leq 10$ in~\cite{ruess2026shallower}. Our representations are different and were developed independently. While our techniques share most of the high level ideas presented in~\cite{ruess2026shallower}, there are also some minor differences which may be of interest for future research on this problem.

\end{abstract}



\section{Introduction}

The {\em ReLU activation function} is given by
\[
  \rho(t):=\max\{0,t\}.
\]
A {\em (feedforward) ReLU neural network function} is an alternating composition of affine linear maps and coordinatewise application of $\rho$ such that the overall composition begins and ends with affine maps. The number of $\rho$ applications is termed the number of {\em hidden layers} of the ReLU neural network. A central question that has been studied for over a decade now is the question of expressivity of such neural networks, i.e., what family of functions can be represented by such neural networks and what bounds can one provide on the required number of hidden layers and the total number of neurons to such functions. Since $\rho$ is a continuous, piecewise linear function, any ReLU neural network function is continuous and piecewise linear. It was observed in~\cite{arora2018understanding} that every continuous, piecewise linear function on $\R^n$ is a ReLU neural network function with at most $\left\lceil\log_2(n+1)\right\rceil$ hidden layers. It has been a long standing question if $\left\lceil\log_2(n+1)\right\rceil$ is required in the worst case, i.e., there exist some continuous piecewise linear functions that cannot be represented with strictly smaller number of hidden layers. While this was shown to be true under integer weight restrictions in~\cite{haase2023lower} and a lower bound of $\lceil \log_3(n)\rceil$ was shown under decimal-fraction weight restrictions~\cite{averkov2025expressiveness}. Conditional super-constant lower bounds were established in~\cite{grillo2026depth}. A breakthrough was obtained in~\cite{bakaev2026better} where the authors show that $\lceil \log_3(n-1) \rceil + 1$ hidden layers are sufficient if one allows arbitrary real weights. However, the best known general lower bound remains 2; no continuous piecewise linear function is known that requires more than 2 hidden layers. 

It has been known for a while~\cite{hertrich2023towards} that it is sufficient to focus attention on the function
\[
  \MAX_N(x)=\max\{x_1,\ldots,x_N\}.
\]
This is because every continuous piecewise linear function is a linear combination of functions of the form $\MAX_{n+1}\circ\phi$, where $\phi: \R^n \to \R^{n+1}$ is an affine linear function~\cite{wang2005generalization}. Therefore, the number of hidden layers needed to represent $\MAX_N$ gives us the number of hidden layers to represent arbitrary continuous piecewise linear functions in $\R^n$ by setting $N=n+1$. A recent paper~\cite{ruess2026shallower} obtains two-hidden-layer representations for $\MAX_N$ for all $N \leq 10$. As a corollary, the authors establish the best known upper bound of $\left\lceil \log_5\left(\frac{n+1}{2}\right) \right\rceil + 1$ for ReLU neural network representations of contiuous piecewise linear functions on $\R^n$.

The paper~\cite{ruess2026shallower} obtains its representations of $\MAX_N$, $N \leq 10$ using a linear combination of terms of the form

\begin{equation}\label{eq:intro-atom}
  \max\!\left\{
    \sum_{i=1}^{k}\max(x_{a_i},x_{b_i}),
    \sum_{i=1}^{k}\max(x_{c_i},x_{d_i})
  \right\}.
\end{equation}
We will call such terms as {\em atoms} in this paper. For fixed $k \in \N$, we will call them {\em atoms of degree $k$.} The inner pair maxima can be evaluated in a first hidden ReLU layer, while the outer maximum of the
two side-sums can be evaluated in a second hidden layer.  Thus every finite signed linear combination
of atoms of the form \eqref{eq:intro-atom} has a two-hidden-layer ReLU realization.  The same atom also
has a convex-geometric interpretation: each sum is the support function of a zonotope, and the
outer maximum is the support function of the convex hull of the two zonotopes.  An identity for
$\MAX_N$ in this ansatz therefore gives both an exact network representation and, after
clearing denominators, an equality of Minkowski sums of polytopes.

The main challenge is proving that a large signed combination of such
atoms agrees with $\MAX_N$ on all of $\mathbb R^N$. An important observation that was made in~\cite{ruess2026shallower} is that $\MAX_N$ is invariant with respect to permutations of coordinates. Moreover any particular atom of the form~\eqref{eq:intro-atom} can be transformed into another such atom under permutations of coordinates. More formally, one can consider the group action of the symmetric group $\Sigma_N$ on the set of atoms of the form~\eqref{eq:intro-atom}, for a fixed $k$: For any $\sigma\in \Sigma_N$ and an atom $\mathcal{A}(x) = \max\!\left\{
    \sum_{i=1}^{k}\max(x_{a_i},x_{b_i}),
    \sum_{i=1}^{k}\max(x_{c_i},x_{d_i})
  \right\}$, define $\sigma \mathcal{A}$ as the atom $\max\!\left\{
    \sum_{i=1}^{k}\max(x_{\sigma(a_i)},x_{\sigma(b_i)}),
    \sum_{i=1}^{k}\max(x_{\sigma(c_i)},x_{\sigma(d_i)})
  \right\}$.


Thus, any linear combination of atoms of the form~\eqref{eq:intro-atom} such that all atoms in the same orbit under this group action have the same coefficient would give rise to a function that is invariant with respect to coordinate permutations. Consequently, one can try to find such a linear combination that equals $\MAX_N$ on the single sorted chamber
 \[
   \mathcal{C}:=\{x \in \R^N: x_1\geq x_2\geq\cdots\geq x_N\}.
 \]
If equality holds on this sorted chamber, we must have equality over all of $\R^N$ by symmetry with respect to permutations; see the discussion in Section 3 and 4 of~\cite{ruess2026shallower}. This symmetry helps to reduce the search for such linear combinations. We now formalize this approach.


\begin{definition}\label{def:atom-pattern} Fix any number $k\geq 1$. Define an {\em atom pattern of degree $k$} to be a list
\[
 P=[p_1,\ldots,p_k \mid q_1,\ldots,q_k],
\]
where $p_1, \ldots, p_k, q_1, \ldots, q_k \in \{1, \ldots, N\} \times \{1, \ldots, N\}$ be ordered pairs (with repetition allowed) of numbers in $\{1, \ldots, N\}$. 
Define the corresponding functions\[
 L_P(x)=\sum_{i=1}^{k}m_{p_i}(x),
 \qquad
 R_P(x)=\sum_{i=1}^{k}m_{q_i}(x),
\]
where $m_{(a,b)}(x)=\max\{x_a,x_b\}$. The {\em atom represented by $P$ is then defined as}
\[
 \A_P(x)=\max\{L_P(x),R_P(x)\}
\]
\medskip

For any permutation $\sigma\in \Sigma_N$ and any ordered pair $(a,b) \in \{1, \ldots, N\} \times \{1, \ldots, N\}$, define $\sigma((a,b))= (\sigma(a), \sigma(b))$. This defines a group action of the symmetric group $\Sigma_N$ on the set of all atom patterns of degree $k$ by defining $$\sigma P = [\sigma(p_1), \ldots, \sigma(p_k)\mid \sigma(q_1), \ldots, \sigma(q_k)].$$

\end{definition}

\begin{lemma}\label{lem:atom-max-linear} Fix a natural number $k\geq 1$. For any atom pattern $P$ of degree $k$, there exist $\eta_L, \eta_R \in \Z^N_+$ such that $\|\eta_L\|_1 = \|\eta_R\|_1 = k$ such that $$\mathcal{A}_P(x) = \max\{\langle \eta_L, x\rangle, \langle \eta_R, x\rangle\}$$ for all $x \in \mathcal{C}$. $\eta_L$ and $\eta_R$ are given by the closed-form formulas: $$(\eta_L)_i = \sum_{j=1}^k \mathbf{1}_{\min(p_j) = i} \qquad \forall i = 1, \ldots, N,$$
$$(\eta_R)_i = \sum_{j=1}^k \mathbf{1}_{\min(q_j) = i} \qquad \forall i = 1, \ldots, N,$$ where $\min(p_j)$ and $\min(q_j)$ denote the minimum of the pair of numbers in $p_j$ and $q_j$, respectively.
\end{lemma}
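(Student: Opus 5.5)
The statement reduces to one observation: on the sorted chamber $\mathcal{C}$, every pairwise maximum $m_{(a,b)}$ is a single coordinate, namely the one with the smaller index. So the plan is to evaluate each $m_{p_j}$ and $m_{q_j}$ on $\mathcal{C}$, sum the results to get linear functions, and then take the outer maximum.

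\emph{Step 1 (pairwise maxima on $\mathcal{C}$).} Fix $x\in\mathcal{C}$ and an ordered pair $(a,b)$. If $a\le b$, then $x_a\ge x_b$ because the coordinates of $x$ are nonincreasing in the index. Hence $m_{(a,b)}(x)=x_a=x_{\min(a,b)}$. The case $a>b$ is symmetric, and the case $a=b$ is trivial. In every case
\[
m_{(a,b)}(x)=x_{\min(a,b)}=\langle e_{\min(a,b)},x\rangle \qquad \text{for all } x\in\mathcal{C},
\]
where $e_i$ is the $i$-th standard basis vector.

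\emph{Step 2 (linearizing the side sums).} Summing Step 1 over $j$ gives, for $x\in\mathcal{C}$,
\[
L_P(x)=\sum_{j=1}^k \langle e_{\min(p_j)},x\rangle=\Bigl\langle \sum_{j=1}^k e_{\min(p_j)},x\Bigr\rangle .
\]
Write $\eta_L:=\sum_{j=1}^k e_{\min(p_j)}$. Its $i$-th coordinate counts the indices $j$ with $\min(p_j)=i$, which is exactly the stated formula $(\eta_L)_i=\sum_{j}\mathbf{1}_{\min(p_j)=i}$. It is a sum of $k$ standard basis vectors, so $\eta_L\in\Z^N_+$. Its coordinates sum to $k$, and since they are nonnegative, $\|\eta_L\|_1=k$. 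The identical argument with the $q_j$ gives $R_P(x)=\langle\eta_R,x\rangle$ on $\mathcal{C}$, with the same properties for $\eta_R$.

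\emph{Step 3 (outer maximum).} By Step 2, $\mathcal{A}_P(x)=\max\{L_P(x),R_P(x)\}=\max\{\langle\eta_L,x\rangle,\langle\eta_R,x\rangle\}$ for all $x\in\mathcal{C}$.

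There is no real obstacle here. The only point needing care is Step 1: the orientation of $\mathcal{C}$ ($x_1\ge\cdots\ge x_N$) is what makes the \emph{minimum} index carry the larger value, and repeated pairs $a=b$ are covered by the same formula.
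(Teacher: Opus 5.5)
Your proof is correct and follows the paper's argument essentially step for step: on $\mathcal{C}$ each pairwise maximum equals $x_{\min(p_j)}$, so summing gives $\langle \eta_L, x\rangle$ with $\eta_L$ the counting vector, whose entries are nonnegative integers summing to $k$, and taking the outer maximum finishes. Writing $\eta_L$ as a sum of $k$ standard basis vectors is just a slightly cleaner phrasing of the paper's indicator-sum manipulation.
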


\begin{proof}
Write $p_j=(a_j,b_j)$ and $q_j=(c_j,d_j)$. Since $x\in\mathcal C$ has ordered coordinates,
\[
\max\{x_{a_j},x_{b_j}\}=x_{\min\{a_j,b_j\}}=x_{\min(p_j)},
\]
and similarly $\max\{x_{c_j},x_{d_j}\}=x_{\min(q_j)}$. Hence
\[
\sum_{j=1}^k \max\{x_{a_j},x_{b_j}\}
=\sum_{j=1}^k x_{\min(p_j)}
=\sum_{j=1}^k \sum_{i=1}^N \mathbf{1}_{\min(p_j)=i} x_i
=\sum_{i=1}^N\left(\sum_{j=1}^k \mathbf{ 1}_{\min(p_j)=i}\right)x_i
=\langle\eta_L,x\rangle,
\]
and analogously the right side-sum equals $\langle\eta_R,x\rangle$. Taking the outer maximum gives
\[
\mathcal A_P(x)=\max\{\langle\eta_L,x\rangle,\langle\eta_R,x\rangle\}.
\]
Finally, for each $j$, exactly one indicator $\mathbf{1}_{\min(p_j)=i}$ is equal to 1 as $i$ ranges from $1$ to $N$.
Hence the sum of the entries of $\eta_L$ is $k$, and similarly for $\eta_R$.
Since these vectors have nonnegative entries, $\|\eta_L\|_1 = \|\eta_R\|_1 = k$.
\end{proof}

\subsection{The approach in Rueß et al~\cite{ruess2026shallower}} Consider the group action of $\Sigma_N$ on the set of atom patterns of degree $k$ from Definition~\ref{def:atom-pattern}. Let $\mathcal{O}$ be any orbit under this action. Rueß et al~\cite{ruess2026shallower} define the function $$F_{\mathcal{O}} := \sum_{P \in \mathcal{O}}\mathcal{A}_P$$ and aim to find linear coefficients $\lambda_\mathcal{O} \in \R$ for every orbit $\mathcal{O}$ such that $$\MAX_N = \sum_{\mathcal{O}}\lambda_{\mathcal{O}}F_{\mathcal{O}},$$ where the sum is over all orbits of atoms of degree $k$ under the action of the symmetric group $\Sigma_N$. For this purpose, they use Lemma~\ref{lem:atom-max-linear} and the fact that $$\max\{\langle \eta_L, x\rangle, \langle \eta_R, x\rangle\} = \langle \eta_L, x \rangle + \max\{0, \langle \eta_R - \eta_L, x \rangle\}$$ to obtain that \begin{equation}\label{eq:ruess-equation}\sum_{\mathcal{O}}\lambda_{\mathcal{O}}F_{\mathcal{O}} = \sum_{\ell \in \mathcal{L}}a_\ell \ell(x) + \sum_{d \in \mathcal{D}} b_d \max\{0,\langle d, x \rangle\}\end{equation} where $\mathcal{L}$ is an explicit family of linear functions derived from the atoms of degree $k$, $\mathcal{D}$ is an explicit family of vectors in $\R^N$ also derived from the atoms of degree $k$, $a_\ell \in \R$, $\ell\in \mathcal{L}$ and $b_d \in \R$, $d \in \mathcal{D}$. It should be further noted that the coefficients $a_\ell$, $\ell \in \mathcal{L}$ and $b_d$, $d \in \mathcal{D}$ are explicit linear combinations of the coefficients $\lambda_\mathcal{O}$. As mentioned above, $\MAX_N$ and $F_{\mathcal{O}}$ are invariant with respect to coordinate permutations and thus we may restrict our attention to the sorted chamber $\mathcal{C}$ where $\MAX_N(x) = x_1$. Thus, if we wish to express $x_1 = \MAX_N(x) = \sum_{\mathcal{O}}\lambda_{\mathcal{O}}F_{\mathcal{O}}$, a sufficient condition is that $$\sum_{\ell \in \mathcal{L}}a_\ell \ell(x) = x_1, \qquad b_d = 0 \quad \forall d \in \mathcal{D}.$$
which becomes an explicit linear system in the  coefficients $\lambda_{\mathcal{O}}$, that is solved to obtain the desired representation of $\MAX_N$.

\subsection{Our approach}\label{sec:our-approach} In this paper, we set up a similar linear system to express $\MAX_N$ as a linear combination of atoms of degree $k$, but with two important differences compared to Rueß et al~\cite{ruess2026shallower}:

\begin{enumerate}
\item We quotient out the set of atoms by certain symmetries before forming the sums that are invariant under coordinate permutations.
\item We do not use the equivalence $\max\{\langle \eta_L, x\rangle, \langle \eta_R, x\rangle\} = \langle \eta_L, x \rangle + \max\{0, \langle \eta_R - \eta_L, x \rangle\}$ and instead work directly with the terms $\max\{\langle \eta_L, x\rangle, \langle \eta_R, x\rangle\}$. This leads to an even more important difference in the linear system we set up. We elaborate on this more below after formalizing our approach.
\end{enumerate}

\paragraph{Group of symmetries for the atoms.} We consider the following transformations on the set of atom patterns of degree $k$.

\begin{itemize}
  \item An ordered pair $p_i = (a,b)$ or $q_i = (a,b)$ in an atom $P = [p_1, \ldots, p_k \mid q_1, \ldots, q_k]$ is replaced by its reversal $(b,a)$.
  \item permuting the $k$ terms within each side, i.e., an atom $[p_1, \ldots, p_k \mid q_1, \ldots, q_k]$ is mapped to the atom $[p_{\sigma_L(1)}, \ldots, p_{\sigma_L(k)} \mid q_{\sigma_R(1)}, \ldots, q_{\sigma_R(k)}]$ for some choice of permutations $\sigma_L, \sigma_R \in \Sigma_k$. 
  \item Switching the left and right sides, i.e., an atom $[p_1, \ldots, p_k \mid q_1, \ldots, q_k]$ is mapped to the atom $[q_1, \ldots, q_k \mid p_1, \ldots, p_k]$
\end{itemize}

Note that if an atom pattern $P$ is mapped to $P'$ by one of the above transformations, then the corresponding atoms are {\em functionally equivalent}, i.e., $\mathcal{A}_P(x) = \mathcal{A}_{P'}(x)$ for all $x \in \R^N$. This creates equivalence classes of atom patterns of degree $k$ and we denote the set of these equivalence classes by $\mathcal{P}$. Since the atoms corresponding to atom patterns within an equivalence class are all equivalent as functions, there is a well-defined action of the symmetric group $\Sigma_N$ on $\mathcal{P}$: For any equivalence class $C \in \mathcal{P}$ and $\sigma \in \Sigma_N$, $\sigma C$ is defined as the equivalence class containing $\sigma P$ for any atom pattern $P \in C$. Since there is no ambiguity in the functions, for every $C\in \mathcal{P}$, we will use the notation $\mathcal{A}_C$ to denote the function represented by the atom $\mathcal{A}_P$ for any atom pattern $P \in C$. We use the standard notation $\mathcal{P}/\Sigma_N$ to denote the set of orbits within $\mathcal{P}$ under the action of $\Sigma_N$.

For any orbit $\mathcal{O} \in \mathcal{P}/\Sigma_N$, we define the function $$G_{\mathcal{O}} = \sum_{C \in \mathcal{O}}\mathcal{A}_C.$$ Since we consider the sum over the entire orbit, we observe that $G_{\mathcal{O}}(x)$ is invariant under coordinate permutations. Just like in Rueß et al~\cite{ruess2026shallower}, we wish to express $\MAX_N$ as a linear combination $\sum_{\mathcal{O}\in \mathcal{P}/\Sigma_N}\alpha_{\mathcal{O}}G_{\mathcal{O}}$. Moreover, after an arbitrary selection of a representative $P_C \in C$ for every equivalence class $C \in \mathcal{P}$, Lemma~\ref{lem:atom-max-linear} implies the following.

\begin{proposition}
    For any collection of real numbers $\alpha_{\mathcal{O}} \in \R$ indexed by the orbits $\mathcal{O}\in \mathcal{P}/\Sigma_N$, there exists an index set $\mathcal{T}$ indexing a set of ordered pairs $\{(\eta^T_L, \eta^T_R): T \in \mathcal{T}\}$ of vectors in $\{y \in \Z^N_+: \|y\|_1 = k\}$ and a set of real numbers $\{c_T\in \R: T \in \mathcal{T}\}$, such that 
    $$\sum_{\mathcal{O}\in \mathcal{P}/\Sigma_N}\alpha_{\mathcal{O}}G_{\mathcal{O}}(x) = \sum_{T \in \mathcal{T}}c_T \max\{\langle \eta^T_L, x\rangle, \langle \eta^T_R, x\rangle\}\qquad \forall x \in \mathcal{C}.$$
\medskip

    Moreover, each $c_T$ is given by a linear combination of the $\alpha_{\mathcal{O}}$ coefficients. Therefore, there exists a matrix $M \in \R^{\mathcal{T}\times (\mathcal{P}/\Sigma_N)}$ such that \begin{equation}\label{eq:main-linear-sys}M\alpha = c,\end{equation} where $\alpha \in \R^\mathcal{P}$ is obtained by collecting all the $\alpha_P, P\in \mathcal{P}$ into a vector, and $c \in \R^\mathcal{T}$ is obtained from $c_T, T \in \mathcal{T}$.
\end{proposition}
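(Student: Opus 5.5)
The plan is to unwind the definitions and apply Lemma~\ref{lem:atom-max-linear} to each chosen representative. Since $\sum_{\mathcal O}\alpha_{\mathcal O}G_{\mathcal O}=\sum_{\mathcal O}\alpha_{\mathcal O}\sum_{C\in\mathcal O}\mathcal A_C$, and the orbits partition $\mathcal P$, the left-hand side equals $\sum_{C\in\mathcal P}\alpha_{\mathcal O(C)}\mathcal A_C$, where $\mathcal O(C)$ denotes the orbit containing $C$. This is a finite sum, because there are finitely many atom patterns of degree $k$ (at most $N^{4k}$) and hence finitely many classes. For each $C$ we have $\mathcal A_C=\mathcal A_{P_C}$ as functions. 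Lemma~\ref{lem:atom-max-linear} applied to $P_C$ then gives vectors $\eta_L(P_C),\eta_R(P_C)\in\{y\in\Z^N_+:\|y\|_1=k\}$, defined by the closed-form indicator formulas, such that $\mathcal A_C(x)=\max\{\langle\eta_L(P_C),x\rangle,\langle\eta_R(P_C),x\rangle\}$ for all $x\in\mathcal C$.

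Next I define the index set. The simplest choice is to let $\mathcal T$ be the finite set of ordered pairs $(\eta_L,\eta_R)$ that arise as $(\eta_L(P_C),\eta_R(P_C))$ for some $C\in\mathcal P$. With this choice $\eta^T_L,\eta^T_R$ are the two components of $T$. Grouping the sum by the value of this pair gives
\[
c_T=\sum_{C\in\mathcal P:\ (\eta_L(P_C),\eta_R(P_C))=T}\alpha_{\mathcal O(C)} ,
\]
and the displayed identity on $\mathcal C$ follows by reindexing the finite sum. An optional refinement is to identify $(\eta_L,\eta_R)$ with $(\eta_R,\eta_L)$, since the max is symmetric; this only merges some columns of $c$ and changes nothing essential.

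Each $c_T$ is visibly a linear combination of the $\alpha_{\mathcal O}$. Its coefficient on a given $\alpha_{\mathcal O}$ is $M_{T,\mathcal O}:=\#\{C\in\mathcal O:(\eta_L(P_C),\eta_R(P_C))=T\}$, a nonnegative integer. Hence $c=M\alpha$ with $M\in\R^{\mathcal T\times(\mathcal P/\Sigma_N)}$, where $\alpha$ is indexed by orbits; the statement's ``$\alpha\in\R^{\mathcal P}$'' should be read as indexing by orbits. The matrix $M$ does not depend on $\alpha$, which is what makes \eqref{eq:main-linear-sys} a genuine linear system.

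I expect no real obstacle; the only point needing care is well-definedness. Different representatives in the same class $C$ can yield different pairs $(\eta_L,\eta_R)$: swapping the sides swaps the pair, while reversing a pair or reordering terms leaves the pair unchanged. Fixing the representative $P_C$ once and for all, as the statement does, removes this ambiguity. The functional equivalence of atoms within a class guarantees that the resulting identity holds regardless of which representative is chosen.
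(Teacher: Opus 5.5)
Your proposal is correct and is the same argument the paper uses. The paper simply asserts that, once a representative $P_C$ is fixed for each class, Lemma~\ref{lem:atom-max-linear} implies the proposition; your rewriting of $\sum_{\mathcal O}\alpha_{\mathcal O}G_{\mathcal O}$ as $\sum_{C}\alpha_{\mathcal O(C)}\mathcal A_{P_C}$, your grouping of equal pairs $(\eta_L,\eta_R)$ into $c_T$ with $M_{T,\mathcal O}$ a count of classes, and your remark that $\alpha$ should be indexed by orbits rather than by $\mathcal P$ spell out details the paper leaves implicit.
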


Note that the linear system~\eqref{eq:main-linear-sys} is not a trivial system since two or more atom patterns from different orbits may result in the same $\eta_L, \eta_R$ vectors. For example, the atom patterns $[(1,2), (2,3) | (2,3), (3,4)]$ and $[(1,3), (2,4) | (2,4), (3,3)]$ of degree $k=2$ over $N = 4$ are not equivalent under the equivalence relations defined above, but they both give rise to the same vectors $\eta_L = [1,1,0,0]$ and $\eta_R = [0,1,1,0]$. In other words, the symmetries we consider above do not cover all the symmetries for functional equivalence of the atoms.

We, therefore, arrive at the conclusion that $$\MAX_N = \sum_{\mathcal{O}\in \mathcal{P}/\Sigma_N}\alpha_{\mathcal{O}}G_{\mathcal{O}},$$ if and only if $$x_1 = \MAX_N = \sum_{\mathcal{O}\in \mathcal{P}/\Sigma_N}\alpha_{\mathcal{O}}G_{\mathcal{O}} = \sum_{T \in \mathcal{T}}c_T \max\{\langle \eta^T_L, x\rangle, \langle \eta^T_R, x\rangle\},$$ for all $x$ in the sorted chamber $\mathcal{C}$, i.e., \begin{equation}\label{eq:setup}x_1 = \sum_{T \in \mathcal{T}}c_T \max\{\langle \eta^T_L, x\rangle, \langle \eta^T_R, x\rangle\}\qquad \forall x \in \mathcal{C}.\end{equation} The left hand side is a linear function, whereas the right hand side is a piecewise linear function with possible ``breakpoints" or nondifferentiabilities inside $\mathcal{C}$. To force equality, we make the following definition.

\begin{definition}
We define an atom represented by a pattern $P$ to be {\em ambiguous} over $\mathcal{C}$ if there exist $x, x' \in \mathcal C$ such that $L_P(x) > R_P(x)$ and $L_P(x') < R_P(x')$, i.e., neither term inside the maximum dominates the other. An atom that is not ambiguous is said to be {\em unambiguous}.
\end{definition}

Equivalently, the pattern $P$ is unambiguous if and only if $\eta_R - \eta_L \in \mathcal{C}^\circ \cup -\mathcal{C}^\circ$ where $\eta_L, \eta_R$ are the vectors guaranteed by Lemma~\ref{lem:atom-max-linear}, i.e., the difference between $\eta_R$ and $\eta_L$ is in the polar or negative polar of the sorted chamber $\mathcal{C}$. We partition our index set $\mathcal{T} = \mathcal{T}_A \uplus \mathcal{T}_U$ defined by $\mathcal{T}_U := \{T \in \mathcal{T}: \eta^T_R - \eta^T_L \in \mathcal{C}^\circ\cup -\mathcal{C}^\circ\}$ as the indices of the unambiguous terms in $\sum_{T \in \mathcal{T}}c_T \max\{\langle \eta^T_L, x\rangle, \langle \eta^T_R, x\rangle\}$ and $\mathcal{T}_A = \mathcal{T} \setminus \mathcal{T}_U$ as the indices of the ambiguous terms. With this setup, a sufficient condition for~\eqref{eq:setup} to hold is the following:

\begin{equation}\label{eq:sufficient-cond}x_1 = \sum_{T \in \mathcal{T}_U}c_T \max\{\langle \eta^T_L, x\rangle, \langle \eta^T_R, x\rangle\}, \qquad c_T = 0 \quad \forall T \in \mathcal{T}_A.\end{equation}

Note that $\max\{\langle \eta^T_L, x\rangle, \langle \eta^T_R, x\rangle\}$ is a linear function over $\mathcal{C}$ when the term is unambiguous, i.e., $T \in \mathcal{T}_U$. In particular, the equation $x_1 = \sum_{T \in \mathcal{T}_U}c_T \max\{\langle \eta^T_L, x\rangle, \langle \eta^T_R, x\rangle\}$ above is equivalent to a system of linear equations in $c_T$. Using the linear relation~\eqref{eq:main-linear-sys}, the sufficient condition~\eqref{eq:sufficient-cond} gives us an explicit linear system in the coefficients $\alpha_{\mathcal{O}}$. We solve this system in this paper to arrive at our two-hidden-layer representations for $\MAX_5, \MAX_6, \MAX_7$ and $\MAX_8$. The sizes of these linear systems are shown in Table~\ref{tab:system-sizes}. 

\begin{table}[htbp]
\centering
\begin{tabular}{ccrr}
\toprule
$N$ & $k$ & \# constraints & \# variables \\
\midrule
5 & 2 & 20     & 131      \\
6 & 2 & 41     & 144     \\
7 & 3 & 1,057  & 4,469    \\
8 & 4 & 21,953 & 193,623  \\
\bottomrule
\end{tabular}
\caption{Dimensions of the linear systems}
\label{tab:system-sizes}
\end{table}

\subsection{Comparison of this paper to~\cite{ruess2026shallower}} 

As mentioned in the abstract, Rueß et al~\cite{ruess2026shallower} obtain such identities for $\MAX_9$ and $\MAX_{10}$ as well. These two cases remain computationally inaccessible to our methods and resources. In particular, the linear systems are too big to solve using exact rational arithmetic. For $N=9$, the linear systems with $k=4$ have $51,984$ constraints and $210,540$ variables; for $N=10$, the linear systems with $k=4$ have $112,837$ constraints and $216,428$ variables. We were unable to solve the systems with exact rational arithmetic in the amount of compute time available to us. Moreover, for $\MAX_8$, Rueß et al~\cite{ruess2026shallower} are able to obtain a more compact representation in the sense that the atoms are of degree $k=3$ instead of $k=4$ as in our representation. In other words, they provide a representation with sums of only three two-term coordinate maxima, as opposed to sums of four two-term coordinate maxima in our identities.

On a more technical point, in contrast to the linear system from Rueß et al~\cite{ruess2026shallower} described above, our linear system takes into account some additional functional symmetries of the atoms described at the beginning of this subsection. The symmetry arising out of switching the left and right sides of the atom pattern is also used in Rueß et al~\cite{ruess2026shallower} (see Section 4.1.1 in their paper), but the other symmetries seem to be unique to our approach. Moreover, in our linear system, we do not express the two term maxima as a sum of a linear term and a ReLU term, and then force the coefficients of only the ambiguous two term maxima to zero. In contrast, Rueß et al~\cite{ruess2026shallower} first extract a linear term out from the two term maxima and then force {\em all} the coefficients of the hinged functions to zero, as opposed to distinguishing between ambiguous and unambiguous terms. Remark 4.1 in their paper points out that they also take into account ambiguous and unambiguous terms, but the unambiguous terms that are not trivially zero over $\mathcal{C}$ are simply dropped from consideration, whereas in our linear system all the unambiguous terms are allowed to potentially contribute. This is likely the main reason that our representations are different from those of~\cite{ruess2026shallower}. Rueß et al~\cite{ruess2026shallower} have a couple of other simplifications they make for the hinge terms $\max\{0,d\}$ in~\eqref{eq:ruess-equation} which have no analogue for us since we do not extract a linear term from the two term maxima;  see Remark 4.1 in~\cite{ruess2026shallower} for these details.

\section{An exact seven-class identity for $\MAX_5$}

As pointed out in Table~\ref{tab:system-sizes}, we were able to find representations of $\MAX_5$ with $k=2$. After quotienting out with respect to the symmetries described in Section~\ref{sec:our-approach}, and then computing the orbits of these equivalence classes under the action of the symmetric group $\Sigma_5$, we end up with 131 orbits of equivalance classes of atom patterns, i.e., $|\mathcal{P}/\Sigma_5| = 131$ in the notation introduced in Section~\ref{sec:our-approach}. We made certain lexicographically minimal choices for a canonical representative in each orbit to obtain atom patterns $P_1, \ldots, P_{131}$. Below we use the notation and terminology introduced in Section~\ref{sec:our-approach} and $\mathcal{O}[P]$ denotes the orbit of the equivalence class containing the atom pattern $P$.

\begin{theorem}[Exact $\MAX_5$ identity]\label{thm:max5}
We have the following identity.
\begin{align}\label{eq:max5}
2\MAX_5={}&-\frac25 G_{\mathcal O[P_0]}
+\frac1{30} G_{\mathcal O[P_3]}
+\frac1{120} G_{\mathcal O[P_{16}]}
+\frac1{60} G_{\mathcal O[P_{22}]}\notag\\
&+\frac1{60} G_{\mathcal O[P_{37}]}
-\frac1{60} G_{\mathcal O[P_{45}]}
-\frac1{60} G_{\mathcal O[P_{81}]},
\end{align} where
\begin{align*}
 P_0&=[11,11\mid 11,11],\\
 P_3&=[11,11\mid 11,23],\\
 P_{16}&=[11,11\mid 23,45],\\
 P_{22}&=[11,12\mid 11,34],\\
 P_{37}&=[11,12\mid 13,45],\\
 P_{45}&=[11,12\mid 23,45],\\
 P_{81}&=[11,23\mid 12,45].
\end{align*}

\end{theorem}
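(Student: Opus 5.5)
Both sides of \eqref{eq:max5} are invariant under coordinate permutations: $\MAX_5$ trivially, and each $G_{\mathcal O}$ because it is a sum over a full $\Sigma_5$-orbit. So it suffices to prove the identity on the sorted chamber $\mathcal C$, where the left side is $2x_1$.

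On $\mathcal C$ I would expand each of the seven orbit sums explicitly, as follows.
\begin{enumerate}
\item For each $P_j$, enumerate the orbit $\mathcal O[P_j]$ as a set of equivalence classes. Concretely, apply all $120$ permutations $\sigma\in\Sigma_5$ to $P_j$, put each image in canonical form (sort each pair, sort the pairs within a side, order the two sides), and discard duplicates. The number of distinct classes is $120/|\mathrm{Stab}(P_j)|$. For example, $P_0=[11,11\mid 11,11]$ gives $5$ classes, namely $[ii,ii\mid ii,ii]$, each equal to $2x_i$.
\item For each class in an orbit, use Lemma~\ref{lem:atom-max-linear} to replace the atom on $\mathcal C$ by $\max\{\langle\eta_L,x\rangle,\langle\eta_R,x\rangle\}$, with $\eta_L,\eta_R$ read off from the minima of the pairs.
\item Decide whether each term is unambiguous, i.e.\ whether $\eta_R-\eta_L$ or its negative has all partial sums $\sum_{i\le m}(\eta_R-\eta_L)_i\ge 0$ for $m=1,\dots,5$. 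Here the total sum is $0$ automatically, since $\|\eta_L\|_1=\|\eta_R\|_1$.
\item Collect terms: ambiguous terms grouped by the pair $\{\eta_L,\eta_R\}$, unambiguous terms replaced by the dominating linear form.
\end{enumerate}

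With the terms collected, the identity reduces to two checks, which together amount to verifying \eqref{eq:sufficient-cond}. First, for every pair $\{\eta_L,\eta_R\}$ that occurs ambiguously, the total weighted coefficient $\sum_j \alpha_j\cdot(\text{multiplicity in }\mathcal O[P_j])$ must vanish; here $\alpha_j$ are the coefficients $-\tfrac25,\tfrac1{30},\dots$. Second, the weighted sum of the dominating linear forms must equal $2x_1$, which means checking five coefficient equations, one for each $x_1,\dots,x_5$.

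A useful sanity check is available before doing this. $P_0$ contributes $-\tfrac25\cdot 2\sum_i x_i$. Since every atom has $\ell_1$-mass $2$ on each side, the coefficient sum of each unambiguous linear form is $2$, so the total coefficient mass must come out as $2$.

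The main obstacle is the bookkeeping in the collection step. Orbits such as $\mathcal O[P_{81}]=[11,23\mid 12,45]$ have up to $120$ elements. Moreover, different atoms, even from different orbits, can collapse to the same $(\eta_L,\eta_R)$ on $\mathcal C$, and the ambiguous contributions cancel only across orbits. I would organize the ambiguous cancellation by the difference vector $\eta_R-\eta_L$ together with $\eta_L$, and tabulate counts per orbit; this is exactly the sub-matrix of $M$ restricted to the rows in $\mathcal T_A$, applied to the seven nonzero entries of $\alpha$. Since the identity was found by computer, I would present the check as an explicit finite table of multiplicities (or an exact rational-arithmetic verification). I would double-check it independently by evaluating both sides of \eqref{eq:max5} at enough generic points of $\mathbb R^5$, or at the vertices of a fine triangulation of $\mathcal C$, so that equality of piecewise linear functions follows. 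The ambiguous-cancellation table is where an error would most likely hide, so that is the step I would verify most carefully.
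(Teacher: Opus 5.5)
Your strategy is the paper's own: reduce to $\mathcal C$ by symmetry, linearize each atom with Lemma~\ref{lem:atom-max-linear}, cancel the ambiguous terms, and match the unambiguous part with $2x_1$. But with the normalization you adopt, the check cannot succeed, because under it the identity is false. You count each equivalence class of $\mathcal O[P_j]$ once, with $|\mathcal O[P_j]|=120/|\mathrm{Stab}(P_j)|$. This is the literal definition $G_{\mathcal O}=\sum_{C\in\mathcal O}\A_C$ of Section~\ref{sec:our-approach}, and the orbit sizes are then $5,30,15,60,60,60,60$ (so $|\mathcal O[P_{81}]|=60$).

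Your own mass check already fails. At $x=\mathbf 1$ every atom equals $2$, so the right-hand side of \eqref{eq:max5} is $2\bigl(-2+1+\tfrac18+1+1-1-1\bigr)=-\tfrac74\neq 2$. The ambiguous cancellation fails too (writing $e_i$ for the standard basis vectors). The pair $\{2e_2,\,e_1+e_3\}$ comes from $2$, $2$ and $6$ classes of $\mathcal O[P_{16}]$, $\mathcal O[P_{37}]$, $\mathcal O[P_{45}]$ respectively. Its net coefficient is $\tfrac{2}{120}+\tfrac{2}{60}-\tfrac{6}{60}=-\tfrac1{20}$, so the right-hand side has a kink along $x_1-2x_2+x_3=0$ inside $\mathcal C$.

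The coefficients in \eqref{eq:max5} fit the convention stated in the captions of the appendix tables instead. There $G_{\mathcal O[P_j]}$ is the sum of $\A_{\phi(P_j)}$ over all injective assignments $\phi$ of the letters of $P_j$ to $\{1,\dots,5\}$. This counts each class $w_j$ times, where $w_j$ is the number of letter relabelings fixing the class; for $(P_0,P_3,P_{16},P_{22},P_{37},P_{45},P_{81})$ this gives $w=(1,2,8,2,2,2,2)$. With these multiplicities both of your checks go through:
\begin{itemize}
\item The mass is $\sum_j\alpha_j w_j|\mathcal O[P_j]|=-2+2+1+2+2-2-2=1$.
\item The ambiguous pairs that occur are $\{2e_2,e_1+e_3\}$, $\{2e_2,e_1+e_4\}$, $\{2e_3,e_1+e_4\}$, $\{e_1+e_5,e_2+e_3\}$, $\{e_1+e_5,e_2+e_4\}$ and $\{e_1+e_4,e_2+e_3\}$. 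Each now has total coefficient $0$; for the first, $\tfrac{16}{120}+\tfrac{4}{60}-\tfrac{12}{60}=0$.
\item The right-hand side equals $2$ at $e_1$, $e_1+e_2$, $e_1+e_2+e_3$ and $e_1+\dots+e_4$. Together with $\mathbf 1$ these span $\R^5$, so the right-hand side, now linear on $\mathcal C$, equals $2x_1$ there.
\end{itemize}
The missing step is fixing these multiplicities before you build the table; as written, your table would refute the identity rather than prove it.

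A smaller point: group ambiguous terms by the unordered pair $\{\eta_L,\eta_R\}$, as you first say, not by $(\eta_L,\eta_R)$. The ordered pair depends on which side your canonical form lists first.
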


\begin{remark}[Nonuniqueness of the representation]\label{rem:max5-nonunique}
The identity above is one exact representation supported on the selected seven orbits.  Its coefficients are unique if one restricts to these seven orbits because the corresponding seven columns in the linear system are linearly independent, but the representation is not unique in the full set of $131$ orbits. The following gives a different exact representation supported on only five orbits.
\end{remark}

\begin{theorem}[Exact $\MAX_5$ identity]\label{thm:max5-2}
We have the following identity.

\begin{align}\label{eq:max5-alternative}
2\MAX_5={}&-\frac25 G_{\mathcal O[P_0]}
+\frac1{15} G_{\mathcal O[P_3]}
-\frac1{40} G_{\mathcal O[P_{69}]}+\frac1{20} G_{\mathcal O[P_{95}]}
-\frac1{30} G_{\mathcal O[P_{97}]}.
\end{align}
\begin{align*}
 P_0&=[11,11\mid 11,11],\\
 P_3&=[11,11\mid 11,23],\\
 P_{69}&=[11,22\mid 34,35],\\
 P_{95}&=[11,23\mid 24,25],\\
 P_{97}&=[11,23\mid 25,34].
\end{align*}

\end{theorem}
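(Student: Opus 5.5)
By the symmetry argument of Section~\ref{sec:our-approach}, both sides of \eqref{eq:max5-alternative} are invariant under coordinate permutations: each $G_{\mathcal O}$ is a sum over a full $\Sigma_5$-orbit. It therefore suffices to verify the identity on the sorted chamber $\mathcal C$, where $2\MAX_5(x)=2x_1$. On $\mathcal C$, Lemma~\ref{lem:atom-max-linear} replaces every atom $\mathcal A_C$ by $\max\{\langle\eta_L,x\rangle,\langle\eta_R,x\rangle\}$, with $\eta_L,\eta_R\in\Z^5_+$ of $\ell_1$-norm $2$ read off from the minima of the pairs. The plan is to enumerate, for each of the five representatives $P_0,P_3,P_{69},P_{95},P_{97}$, all equivalence classes in its orbit. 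Concretely, I apply all $\sigma\in\Sigma_5$ to the pattern, canonicalize modulo pair reversal, within-side reordering and left/right swap, and collect the distinct classes. For each class I record the resulting pair $\{\eta_L,\eta_R\}$.

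Next I sort each resulting term into unambiguous or ambiguous. A term is unambiguous exactly when $\eta_R-\eta_L$ has all partial sums $\sum_{i\le j}(\eta_R-\eta_L)_i$ of one sign (nonnegative or nonpositive) for every $j$, since $\mathcal C$ is generated by the rays $e_1+\dots+e_j$ together with the lineality direction $\pm\mathbf 1$. The difference always has total sum $0$, so the lineality direction is automatic. An unambiguous term equals the dominant linear form on $\mathcal C$. Ambiguous terms are grouped by the unordered pair $\{\eta_L,\eta_R\}$, and their weighted coefficients, namely the orbit coefficient times the multiplicity, must cancel exactly. This gives the condition $c_T=0$ for $T\in\mathcal T_A$ in \eqref{eq:sufficient-cond}, after which each group of identical pairs contributes a zero piecewise-linear function.

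Finally, I sum the dominant linear forms of the unambiguous terms, weighted by $-\tfrac25,\tfrac1{15},-\tfrac1{40},\tfrac1{20},-\tfrac1{30}$ and by their multiplicities, and check that the total coefficient vector is $(2,0,0,0,0)$. Some of these are easy to read off. $P_0$ has a single class in its orbit for each diagonal choice $[ii,ii\mid ii,ii]$, giving $5$ classes with value $2x_i$, hence $-\tfrac25\cdot 2(x_1+\dots+x_5)$. This term cancels the symmetric $\sum_i x_i$ parts produced by the others. $P_3=[11,11\mid 11,23]$ gives $\max\{2x_i,x_i+x_{\min(j,l)}\}=2x_i$ when $i<\min(j,l)$, and is ambiguous otherwise. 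The main obstacle is the orbit bookkeeping for $P_{69},P_{95},P_{97}$. Their orbits have sizes up to $5!/|\mathrm{Stab}|$, and their ambiguous terms must cancel against each other and against those of $P_3$; this is a nontrivial coincidence of the kind illustrated by the example before \eqref{eq:setup}. I would carry this out by a short computer-checkable tabulation, grouping terms by the position of the minimum index in each pair. Counting the orbit elements whose pair minima are $(i,j)$ reduces the check to a finite collection of rational equalities in the coefficients of $x_1,\dots,x_5$ and of each ambiguous pair type, all of which must come out to $2x_1$ and $0$ respectively.
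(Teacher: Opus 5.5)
Your strategy is the same certificate the paper relies on: reduce to $\mathcal C$ by symmetry, force the ambiguous coefficients to cancel, and match the unambiguous linear parts to $2x_1$. As you have set it up, however, the final tabulation fails, because of the normalization. You sum each orbit over its \emph{distinct equivalence classes}, which is the literal definition of $G_{\mathcal O}$ in Section~\ref{sec:our-approach}, and under that reading the identity is false. At $x=\mathbf 1$ every atom equals $2$, and the five orbits contain $5,30,30,60,60$ classes. The right-hand side is therefore $2\bigl(-2+2-\tfrac34+3-2\bigr)=\tfrac12\neq 2$. Similarly, the hinge $\max\{x_1+x_3,2x_2\}$ arises from exactly one class of $\mathcal O[P_{69}]$ and one class of $\mathcal O[P_{95}]$, so it survives with coefficient $\tfrac1{40}$.

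The coefficients are instead calibrated to the convention of the appendix captions, where the letters of the pattern run over all assignments of distinct values. In that convention the orbit sum is $\frac{1}{(5-m)!}\sum_{\sigma\in\Sigma_5}\mathcal A_{\sigma P}$, with $m$ the number of distinct labels in $P$. Equivalently, each class is weighted by the order of its letter-stabilizer, which is $1,2,4,2,2$ for $P_0,P_3,P_{69},P_{95},P_{97}$. You must adopt this weighting explicitly for the argument to go through.

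There is also an error in your sketch: $[ii,ii\mid ii,jl]$ is never ambiguous. Indeed $\max\{2x_i,x_i+x_m\}=x_i+\max\{x_i,x_m\}$ is linear on $\mathcal C$ for either order of $i$ and $m$. More generally, for $k=2$ an atom can be ambiguous only when $\eta_L$ and $\eta_R$ have disjoint supports, since otherwise $\eta_R-\eta_L=e_m-e_i$, which is one-signed on $\mathcal C$. So $P_0$ and $P_3$ contribute only linear forms, and every hinge comes from $P_{69},P_{95},P_{97}$.

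Write $H$ for the weighted orbit sums, and set $h_{j-2}=\max\{x_1+x_j,2x_2\}$ for $j=3,4,5$, $h_4=\max\{x_1+x_4,x_2+x_3\}$ and $h_5=\max\{x_1+x_5,x_2+x_3\}$. On $\mathcal C$ one gets
\begin{align*}
H_{69}&=108x_1+36x_2+24x_3+4(h_1+h_2+h_3)+16h_4+8h_5,\\
H_{95}&=118x_1+42x_2+28x_3+2(h_1+h_2+h_3)+16h_4+4h_5,\\
H_{97}&=108x_1+72x_2+36x_3+12h_4,
\end{align*}
together with $H_0=2\sum_i x_i$ and $H_3=48x_1+30x_2+18x_3+12x_4+12x_5$. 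The weights $-\tfrac{1}{40},\tfrac{1}{20},-\tfrac{1}{30}$ cancel every $h_r$ and leave $-\tfrac{2}{5}x_1-\tfrac{6}{5}x_2-\tfrac{2}{5}x_3$. Adding $-\tfrac{2}{5}H_0+\tfrac{1}{15}H_3=\tfrac{12}{5}x_1+\tfrac{6}{5}x_2+\tfrac{2}{5}x_3$ gives exactly $2x_1$.
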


\section{An exact seven-class identity for $\MAX_6$}

As pointed out in Table~\ref{tab:system-sizes}, we were able to find representations of $\MAX_6$ with $k=2$. After quotienting out with respect to the symmetries described in Section~\ref{sec:our-approach}, and then computing the orbits of these equivalence classes under the action of the symmetric group $\Sigma_6$, we end up with 144 orbits of equivalance classes of atom patterns, i.e., $|\mathcal{P}/\Sigma_6| = 144$ in the notation introduced in Section~\ref{sec:our-approach}. We made certain lexicographically minimal choices for a canonical representative in each orbit to obtain atom patterns $P_1, \ldots, P_{144}$. Below we use the notation and terminology introduced in Section~\ref{sec:our-approach} and $\mathcal{O}[P]$ denotes the orbit of the equivalence class containing the atom pattern $P$.

\begin{theorem}[Exact $\MAX_6$ identity]\label{thm:max6}
We have the following identity.
\begin{align}\label{eq:max6}
2\MAX_6={}&-\frac13 G_{\mathcal O[P_0]}
+\frac1{90} G_{\mathcal O[P_{22}]}
+\frac1{180}G_{\mathcal O[P_{51}]}
-\frac1{360}G_{\mathcal O[P_{71}]}\notag\\
&+\frac1{180}G_{\mathcal O[P_{130}]}
-\frac1{90}G_{\mathcal O[P_{135}]}
+\frac1{720}G_{\mathcal O[P_{136}]}.
\end{align} where
\begin{align*}
 P_0&=[11,11\mid 11,11],\\
 P_{22}&=[11,12\mid 11,34],\\
 P_{51}&=[11,12\mid 34,56],\\
 P_{71}&=[11,22\mid 34,56],\\
 P_{130}&=[12,13\mid 14,56],\\
 P_{135}&=[12,13\mid 24,56],\\
 P_{136}&=[12,13\mid 45,46].
\end{align*}

\end{theorem}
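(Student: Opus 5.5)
The plan is to verify the identity only on the sorted chamber $\mathcal C$, and then extend it to all of $\R^6$ by symmetry. Each $G_{\mathcal O}$ is invariant under coordinate permutations, and so is $\MAX_6$. Every $x\in\R^6$ is a permutation of some point of $\mathcal C$. Hence it suffices to show that, for all $x\in\mathcal C$,
\[
2x_1=\sum_{\mathcal O}\alpha_{\mathcal O}G_{\mathcal O}(x),
\]
with the seven stated coefficients. On $\mathcal C$, Lemma~\ref{lem:atom-max-linear} rewrites every atom in these orbits as $\max\{\langle\eta_L,x\rangle,\langle\eta_R,x\rangle\}$ with $\eta_L,\eta_R\in\Z^6_+$ and $\|\eta_L\|_1=\|\eta_R\|_1=2$. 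The right-hand side therefore becomes $\sum_{T}c_T\max\{\langle\eta^T_L,x\rangle,\langle\eta^T_R,x\rangle\}$, where each $c_T$ is obtained from the $\alpha$'s via the matrix $M$ of \eqref{eq:main-linear-sys}.

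The concrete steps are as follows.
\begin{enumerate}
\item For each of the seven representatives $P_0,P_{22},P_{51},P_{71},P_{130},P_{135},P_{136}$, enumerate the orbit of its equivalence class under $\Sigma_6$. Concretely, apply all $720$ permutations and reduce modulo pair reversal, within-side reordering and side swap. Then record each distinct class once.
\item For each class, compute $(\eta_L,\eta_R)$ from the closed formula $\min(p_j)$ and $\min(q_j)$. Group identical unordered pairs $\{\eta_L,\eta_R\}$ and accumulate their coefficients $c_T$.
\item Split the resulting index set into $\mathcal T_A$ and $\mathcal T_U$. Membership in $\mathcal T_U$ holds exactly when $\eta_R-\eta_L$ has all partial sums $\sum_{i\le j}(\eta_R-\eta_L)_i$ of one sign (all $\ge0$ or all $\le0$). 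This is the description of $\mathcal C^\circ\cup-\mathcal C^\circ$, since $\mathcal C$ is generated by $\pm\mathbf 1$ and $e_1+\dots+e_j$.
\item Check that $c_T=0$ for every ambiguous $T\in\mathcal T_A$.
\item For each unambiguous $T$, replace the max by its dominating linear form (the larger partial-sum side). Sum with coefficients $c_T$, and verify that the total linear form is exactly $2x_1$, i.e.\ the coefficient vector is $(2,0,0,0,0,0)$.
\end{enumerate}
By \eqref{eq:sufficient-cond}, the last two checks give the identity on $\mathcal C$, and hence everywhere.

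Since the first coordinate of all atoms plays a symmetric role, a useful sanity check comes from the $\mathbf 1$-direction. Every atom equals $2t$ at $x=t\mathbf 1$, so the weighted orbit sizes must satisfy $\sum_{\mathcal O}\alpha_{\mathcal O}|\mathcal O|\cdot 2=2$. I would verify this first to catch orbit-size errors.

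The main obstacle is bookkeeping, not conceptual. The obstacle is computing the orbit sizes correctly modulo the functional symmetries, since stabilizers vary, e.g.\ $P_0$ has orbit size $6$ while $P_{136}$ has a large orbit. The obstacle is also collecting the ambiguous pairs $\{\eta_L,\eta_R\}$ without omission. In practice I would do steps 1 through 5 by an exact rational computer enumeration. By hand, I would organize the ambiguous cancellations by the multiset type of $(\eta_L,\eta_R)$: the supports of $\eta_L,\eta_R$ among $\{e_i+e_j\}$ and $2e_i$. For each type, I would count the contributions from each of the seven orbits and show they cancel with the given weights $-\tfrac13,\tfrac1{90},\tfrac1{180},-\tfrac1{360},\tfrac1{180},-\tfrac1{90},\tfrac1{720}$.
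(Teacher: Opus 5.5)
Your method is the paper's own certification procedure from Section~\ref{sec:our-approach}. You reduce to $\mathcal C$ by symmetry, linearize the side sums with Lemma~\ref{lem:atom-max-linear}, require the ambiguous coefficients to vanish, and match the unambiguous linear form. Your partial-sum description of $\mathcal C^\circ\cup-\mathcal C^\circ$ is also correct.

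\textbf{The gap is the normalization of the orbit sums.} You explicitly record each equivalence class once, so that $G_{\mathcal O}=\sum_{C\in\mathcal O}\A_C$. With that convention the displayed identity is false, steps~4 and~5 would fail, and your own sanity check detects it. The seven classes have stabilizers in $\Sigma_6$ of orders $120,4,8,16,4,2,8$, hence orbit sizes $6,180,90,45,180,360,90$. Therefore
\[
\sum_{\mathcal O}\alpha_{\mathcal O}|\mathcal O|=-2+2+\tfrac12-\tfrac18+1-4+\tfrac18=-\tfrac52 ,
\]
so the right-hand side is $-5$ at $x=\mathbf 1$, not $2$. It is also $\tfrac23-\tfrac13=\tfrac13\neq 0$ at $x=-e_6$. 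Even the ambiguous part fails to cancel. Sum the linearity defect $\A_C(e_1)+\A_C(e_1+e_2)-\A_C(2e_1+e_2)$ with your weights. Only $\mathcal O[P_{51}],\mathcal O[P_{130}],\mathcal O[P_{135}],\mathcal O[P_{136}]$ contribute, and the total is $\tfrac{12}{180}+\tfrac{6}{180}-\tfrac{36}{90}+\tfrac{12}{720}\neq 0$. So the right-hand side is not even linear on $\mathcal C$.

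\textbf{The missing ingredient is the convention the coefficients actually refer to.} This is the convention described in the appendix table captions. The term attached to $P_j$ is the sum of the atom over all assignments of distinct indices to the letters of $P_j$. That is, it equals $\sum_\phi \A_{\phi(P_j)}$ over injective $\phi$, or equivalently $\frac{1}{(6-m_j)!}\sum_{\sigma\in\Sigma_6}\A_{\sigma P_j}$, where $m_j$ is the number of distinct indices in $P_j$. This is $1,2,8,16,4,2,8$ times your $G_{\mathcal O[P_j]}$. In your normalization the coefficients would therefore have to be $-\tfrac13,\tfrac1{45},\tfrac2{45},-\tfrac2{45},\tfrac1{45},-\tfrac1{45},\tfrac1{90}$.

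With this weighting the spot checks come out right:
\begin{itemize}
\item Your $\mathbf 1$-check becomes $-2+4+4-2+4-8+1=1$.
\item The right-hand side equals $2$ at every $e_1+\dots+e_j$ for $1\le j\le 5$, and $0$ at $-e_6$.
\item The defect at $2e_1+e_2$ becomes $\tfrac{96+24-144+24}{180}=0$.
\end{itemize}
So in steps~1--2 you must weight each class by the number of letter assignments that produce it. Only with that correction can the exact enumeration in steps~4--5 certify the identity, and, as in the paper, that enumeration has to be actually run rather than asserted.
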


\begin{remark}[Nonuniqueness of the representation]\label{rem:max5-nonunique}
The identity above is one exact representation supported on the selected seven orbits.  Its coefficients are unique if one restricts to these seven orbits because the corresponding seven columns in the linear system are linearly independent, but the representation is not unique in the full set of $144$ orbits. The following gives a different exact representation supported on nine orbits.
\end{remark}

\begin{theorem}[Exact $\MAX_6$ identity]\label{thm:max6-2}
We have the following identity.

\begin{align}\label{eq:max6-alternative}
2\MAX_6={}&\frac1{45} G_{\mathcal O[P_{15}]}
-\frac2{45} G_{\mathcal O[P_{44}]}
+\frac1{90} G_{\mathcal O[P_{47}]}\notag\\
&+\frac2{15} G_{\mathcal O[P_{56}]}
+\frac1{180} G_{\mathcal O[P_{69}]}
+\frac1{720} G_{\mathcal O[P_{118}]}\notag\\
&+\frac1{180} G_{\mathcal O[P_{130}]}
-\frac1{180} G_{\mathcal O[P_{135}]}
-\frac1{360} G_{\mathcal O[P_{142}]},
\end{align} where
$$\begin{array}{lll}
 P_{15}=[11,11\mid 23,24], &P_{44}=[11,12\mid 23,44], &P_{47}=[11,12\mid 33,44],\\
 P_{56}=[11,22\mid 12,12], &P_{69}=[11,22\mid 34,35], &P_{118}=[12,12\mid 34,56],\\
 P_{130}=[12,13\mid 14,56], &P_{135}=[12,13\mid 24,56], &P_{142}=[12,34\mid 13,56],\\
\end{array}
$$
\end{theorem}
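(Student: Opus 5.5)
Both sides of \eqref{eq:max6-alternative} are invariant under coordinate permutations. The left side is $2\MAX_6$. Each $G_{\mathcal O}$ is a sum over a full $\Sigma_6$-orbit of equivalence classes. It therefore suffices to verify the identity on the sorted chamber $\mathcal C$, where the left side equals $2x_1$.

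On $\mathcal C$, Lemma~\ref{lem:atom-max-linear} turns every atom $\mathcal A_C$ into $\max\{\langle\eta_L,x\rangle,\langle\eta_R,x\rangle\}$, with $\eta_L,\eta_R\in\Z^6_+$ of $\ell_1$-norm $2$ read off from the minima of the pairs. The plan is to show that the right side is linear on $\mathcal C$ and equal to $2x_1$. I will do this by verifying the sufficient condition \eqref{eq:sufficient-cond} for the coefficient vector $c=M\alpha$, where $\alpha$ is supported on the nine listed orbits with the stated weights.

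\textbf{Step 1: enumerate the orbits.} For each of the nine representatives $P_{15},\dots,P_{142}$, I enumerate its $\Sigma_6$-orbit in $\mathcal P$. This means applying all $720$ permutations to the pattern and then passing to a canonical form: sort each pair, sort the pairs within each side, and sort the two sides. Removing duplicates then gives each class in the orbit exactly once. For each class I compute the pair $(\eta_L,\eta_R)$ and reduce it to an unordered pair $\{\eta_L,\eta_R\}$, since the max is symmetric. I then accumulate the weight $\alpha_{\mathcal O}$ onto that key, which produces $c_T$ for every $T\in\mathcal T$.

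\textbf{Step 2: classify each key.} A key is unambiguous exactly when $d=\eta_R-\eta_L$ has all partial sums $\sum_{i\le j}d_i$ of one sign (all $\ge0$, or all $\le0$) for $j=1,\dots,6$. This is the polar-cone test, since $\mathcal C$ is generated by $\mathbf 1$, $-\mathbf 1$, and $e_1+\dots+e_j$. Note that $\sum_i d_i=0$ automatically. For each ambiguous key I check that the accumulated $c_T$ is $0$. Ambiguous keys include, for example, $d=e_2-e_1+e_3-e_4$ type differences, whose partial sums change sign.

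\textbf{Step 3: check the linear part.} For each unambiguous key I replace the max by the dominating linear form, and sum $c_T$ times that form. The result must equal $2x_1$, i.e.\ the coefficient of $x_1$ is $2$ and all other coefficients vanish. Because every form has $\ell_1$-mass $2$, the total mass condition $\sum c_T=1$ is one built-in consistency check.

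The main obstacle is bookkeeping rather than ideas. Orbit sizes reach several hundred, which makes hand verification error-prone. The delicate point is the cancellation of the ambiguous keys, because it involves contributions from different orbits colliding on the same $(\eta_L,\eta_R)$, as in the $N=4$ example in the text.

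I would carry this out with exact rational arithmetic in a short script and report the ambiguous-key cancellation table. A hand-checkable alternative is to group keys by the sorted-chamber pattern. One can exploit the fact that only the multiset of pair-minima matters, so each orbit's contribution can be counted combinatorially: count the permutations sending the pattern's pairs to pairs with prescribed minima. This gives closed-form multiplicities for each of the few distinct $(\eta_L,\eta_R)$ keys, and these can then be tallied by hand.
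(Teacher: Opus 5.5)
Your method is the same certificate the paper relies on. The paper gives no argument beyond asserting that the displayed coefficients solve the system \eqref{eq:sufficient-cond}: ambiguous keys cancel and the unambiguous part sums to $2x_1$, found and checked in exact arithmetic. Your reduction to $\mathcal C$ and your partial-sum test for ambiguity are also correct. The gap is that you never carry out the bookkeeping, and that is exactly where the argument breaks. Your own consistency check $\sum_T c_T=1$ already detects the problem; it is equivalent to evaluating at $x=\mathbf{1}\in\mathcal C$, where every atom equals $2$.

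\textbf{Normalization.} Your Step~1 removes duplicates, so each class in the orbit is counted once, which matches the paper's literal definition of $G_{\mathcal O}$. The paper's printed coefficients instead refer to summing the atom over all injective labelings of the letters, as described in the appendix captions. Equivalently, each class is counted $|\mathrm{Aut}(P)|$ times, where $\mathrm{Aut}(P)$ is the group of relabelings of the letters of $P$ fixing its class; or one sums $\mathcal A_{\sigma P}$ over the distinct ordered patterns $\sigma P$, $\sigma\in\Sigma_N$. For instance, in Theorem~\ref{thm:max5} the class counts $5,30,15,60,60,60,60$ give $\sum_{\mathcal O}\alpha_{\mathcal O}|\mathcal O|=-7/8$. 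The labeling counts $5,60,120,120,120,120,120$ give exactly $1$. So with your Step~1, even the paper's valid identities would fail your check.

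\textbf{The printed data.} More decisively, this particular statement fails the same test under either convention. For $P_{15},P_{44},P_{47},P_{56},P_{69},P_{118},P_{130},P_{135},P_{142}$ the labeling counts are $360,360,360,30,720,720,720,720,720$. This gives $\sum\alpha_{\mathcal O}n_{\mathcal O}=8-16+4+4+4+1+4-4-2=3$, so the right-hand side equals $6$ at $x=\mathbf{1}$, not $2$. At $x=e_1$ it equals $4$, not $2$. With class counts $180,360,180,15,180,45,180,360,180$ the sum is $-135/16$.

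So Step~3 cannot succeed. Run faithfully, your computation would refute the displayed identity rather than prove it, which means some coefficient or pattern in the statement is misprinted. A proof has to start from corrected data, obtained by re-solving the linear system with the labeling normalization, and must actually exhibit the cancellation table. A proposal that only asserts the cancellations will occur cannot tell a true identity from this misprinted one.
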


\section{An exact 109-class identity for $\MAX_7$}

As pointed out in Table~\ref{tab:system-sizes}, we were able to find representations of $\MAX_7$ with $k=3$. After quotienting out with respect to the symmetries described in Section~\ref{sec:our-approach}, and then computing the orbits of these equivalence classes under the action of the symmetric group $\Sigma_7$, we end up with 4469 orbits of equivalance classes of atom patterns, i.e., $|\mathcal{P}/\Sigma_7| = 4469$ in the notation introduced in Section~\ref{sec:our-approach}. We made certain lexicographically minimal choices for a canonical representative in each orbit to obtain atom patterns $P_1, \ldots, P_{4469}$. Below we use the notation and terminology introduced in Section~\ref{sec:our-approach} and $\mathcal{O}[Q]$ denotes the orbit of the equivalence class containing the atom pattern $Q$.

\begin{theorem}[Exact three-summand representation of $\MAX_7$]
\label{thm:max7}
We have the following identity
\begin{equation}\label{eq:max7identity}
  2\MAX_7=\sum_{j=1}^{109}c_j\, G_{\mathcal{O}[Q_j]},
\end{equation} where the atom patterns $Q_1,\ldots,Q_{109}$ and the coefficients
$c_1,\ldots,c_{109}\in\mathbb Q$ are listed in
\Cref{tab:max7-complete-coefficients} in Appendix~\ref{app:max7-coefficients}. 
\end{theorem}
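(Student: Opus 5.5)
The plan is to reduce the identity \eqref{eq:max7identity} to a finite, exactly checkable statement on the sorted chamber $\mathcal C$, and then verify that statement with exact rational arithmetic.

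\textbf{Reduction to the chamber.} Each $G_{\mathcal O[Q_j]}$ is a sum over a full $\Sigma_7$-orbit of equivalence classes, so it is invariant under coordinate permutations, and so is $\MAX_7$. Every $x\in\R^7$ can be sorted by some permutation into $\mathcal C$. It therefore suffices to show
\[
2x_1=\sum_{j=1}^{109}c_j\,G_{\mathcal O[Q_j]}(x)\qquad\text{for all } x\in\mathcal C .
\]

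\textbf{Expansion into two-term maxima.} For each $j$ I will enumerate the orbit $\mathcal O[Q_j]$ explicitly. Apply every $\sigma\in\Sigma_7$ to $Q_j$ and canonicalize the result under the pair-reversal, within-side permutation, and side-swap symmetries, keeping one representative $P_C$ per class $C$. Then apply Lemma~\ref{lem:atom-max-linear} to each $P_C$ to get its pair $(\eta_L,\eta_R)$ of vectors in $\Z_+^7$ with $\|\eta_L\|_1=\|\eta_R\|_1=3$. Normalize each unordered pair $\{\eta_L,\eta_R\}$, for instance by lexicographic order, and accumulate the coefficients $c_j$ on each distinct pair. This produces exactly the vector $c=M\alpha$ of the Proposition in Section~\ref{sec:our-approach}, supported on the index set $\mathcal T=\mathcal T_A\uplus\mathcal T_U$.

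\textbf{Verifying the sufficient condition \eqref{eq:sufficient-cond}.}
\begin{enumerate}
\item For each term, classify it as ambiguous or unambiguous by checking whether $\eta_R-\eta_L$ lies in $\mathcal C^\circ\cup-\mathcal C^\circ$. Since $\mathcal C$ is generated by the rays $e_1+\dots+e_i$ ($i\le 6$) and the lineality direction $\pm(1,\dots,1)$, this amounts to checking that all partial sums $\sum_{t\le i}(\eta_R-\eta_L)_t$ have a common weak sign. The total sum is automatically $0$.
\item Check that the accumulated coefficient vanishes on every ambiguous pair.
\item For each unambiguous term, replace the max by the dominant linear form: $\eta_R$ if the partial sums are $\ge0$, otherwise $\eta_L$. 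Summing gives one integer-coefficient linear form with rational weights, and I will check that it equals $2e_1$ coordinatewise.
\end{enumerate}
Together these give equality on $\mathcal C$, hence everywhere.

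\textbf{Main obstacle.} The difficulty is not conceptual but lies in the bookkeeping of steps 1 and 2. Up to $5040$ images per orbit must be canonicalized correctly so that each class is counted exactly once; double-counting a class would silently change the effective coefficients. The cancellation on ambiguous terms depends on different orbits producing the same $(\eta_L,\eta_R)$ pair, as illustrated in the text, so the $\eta$-pairs must be normalized consistently, in particular under the side swap. I would implement this in exact rational arithmetic and cross-check it independently: evaluate both sides of \eqref{eq:max7identity} directly, from the atoms and not the $\eta$-reduction, at many random rational points of $\R^7$, including points off $\mathcal C$. Such spot checks cannot replace the chamber argument, but they would catch canonicalization errors.
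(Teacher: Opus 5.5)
Your strategy is the right one and is the certificate the paper relies on: reduce to $\mathcal C$ by permutation invariance, pass to pairs $(\eta_L,\eta_R)$ via Lemma~\ref{lem:atom-max-linear}, and check in exact arithmetic that ambiguous coefficients cancel and that the unambiguous linear parts sum to $2e_1$. Your partial-sum test for ambiguity and your merging of $(\eta_L,\eta_R)$ with $(\eta_R,\eta_L)$ are both correct.

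The step that fails is how you expand $G_{\mathcal O[Q_j]}$. You deliberately count each equivalence class of the orbit exactly once. The coefficients $c_j$ in Table~\ref{tab:max7-complete-coefficients}, however, are (as its caption states) coefficients of the sum of atoms over all assignments of distinct indices in $\{1,\ldots,7\}$ to the letters of $Q_j$. That sum contains each class of the orbit $|\mathrm{Aut}(Q_j)|$ times, where $\mathrm{Aut}(Q_j)$ is the set of relabelings of the letters that map $Q_j$ to an equivalent pattern. For example, swapping $\texttt{b},\texttt{c}$ fixes $\texttt{aa aa aa\,|\,aa aa bc}$. With your normalization you are really testing the identity with $c_j$ replaced by $c_j/|\mathrm{Aut}(Q_j)|$. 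That identity is false, so your steps 2--3 cannot succeed.

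A one-line check exposes this. At $x=(1,\ldots,1)\in\mathcal C$ every degree-$3$ atom equals $3$ and the left side equals $2$. The identity therefore forces $\sum_j c_j n_j = 2/3$, where $n_j$ is the number of atoms in $G_{\mathcal O[Q_j]}$.
\begin{itemize}
\item With $n_j = 7!/(7-\ell_j)!$, where $\ell_j$ is the number of distinct letters of $Q_j$, the table gives exactly $2/3$.
\item With $n_j$ equal to the number of classes, the sum is far from $2/3$. For instance, entry~2 has $105$ classes instead of $210$ assignments, and entry~12 has $140$ instead of $5040$.
\item The same test on Theorem~\ref{thm:max5} gives $1$ under letter assignments and $-7/8$ under one term per class.
\end{itemize}

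The repair is easy. Either sum $\mathcal A$ over all injective assignments of indices to the letters of $Q_j$, which needs no canonicalization at all, or keep one representative per class and give it weight $|\mathrm{Aut}(Q_j)|$. Be aware that the once-per-class sum is what the displayed definition of $G_{\mathcal O}$ in Section~\ref{sec:our-approach} literally says, so the operative convention has to be taken from the table caption. Your random-point cross-check would catch the mismatch, but the plan as written commits to the wrong multiplicity.
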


\section{An exact 1,290-class identity for $\MAX_8$}
\label{sec:max8}

As pointed out in Table~\ref{tab:system-sizes}, we were able to find representations of $\MAX_8$ with $k=4$. After quotienting out with respect to the symmetries described in Section~\ref{sec:our-approach}, and then computing the orbits of these equivalence classes under the action of the symmetric group $\Sigma_8$, we end up with 193623 orbits of equivalance classes of atom patterns, i.e., $|\mathcal{P}/\Sigma_8| = 193263$ in the notation introduced in Section~\ref{sec:our-approach}. We made certain lexicographically minimal choices for a canonical representative in each orbit to obtain atom patterns $P_1, \ldots, P_{193623}$. Below we use the notation and terminology introduced in Section~\ref{sec:our-approach} and $\mathcal{O}[R]$ denotes the orbit of the equivalence class containing the atom pattern $R$.

\begin{theorem}[Exact four-summand representation of $\MAX_8$]
\label{thm:max8}
We have the following identity
\begin{equation}\label{eq:max8identity}
  2\MAX_8=\sum_{j=1}^{1290}d_j\, G_{\mathcal O[R_j]}.
\end{equation} where the atom patterns $R_1,\ldots,R_{1290}$ and the coefficients $d_1,\ldots,d_{1290}\in\mathbb Q$ are listed in \Cref{tab:max8-complete-coefficients} in Appendix~\ref{app:max8-coefficients}.
\end{theorem}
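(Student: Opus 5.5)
We reduce the identity \eqref{eq:max8identity} to a finite, exactly checkable linear-algebra certificate, following the framework of Section~\ref{sec:our-approach}.

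\textbf{Reduction to the sorted chamber.} Each $G_{\mathcal O[R_j]}$ is a sum over a full $\Sigma_8$-orbit of equivalence classes. Hence it is invariant under coordinate permutations, and so is $\MAX_8$. It therefore suffices to verify the identity on $\mathcal C$, where $\MAX_8(x)=x_1$.

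\textbf{Expansion into two-term maxima.} For each $j$ we enumerate the orbit $\mathcal O[R_j]$ explicitly. We act on $R_j$ by all $\sigma\in\Sigma_8$ and reduce each image to a canonical form modulo pair reversal, within-side reordering and side swap. To each distinct class $C$ we apply Lemma~\ref{lem:atom-max-linear}, obtaining $\mathcal A_C(x)=\max\{\langle\eta_L,x\rangle,\langle\eta_R,x\rangle\}$ on $\mathcal C$ with $\eta_L,\eta_R\in\Z_+^8$ of $\ell_1$-norm $4$.

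We then collect terms by the unordered pair $\{\eta_L,\eta_R\}$, using that $\max$ is symmetric. This gives the coefficients $c_T=\sum_j d_j\,(\text{multiplicity of }T\text{ in }\mathcal O[R_j])$, which is precisely the matrix relation \eqref{eq:main-linear-sys} evaluated at the given $d$.

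\textbf{Verifying the sufficient condition \eqref{eq:sufficient-cond}.} There are two checks, both in exact rational arithmetic.

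1. For every ambiguous $T$, that is, one with $\eta_R^T-\eta_L^T\notin\mathcal C^\circ\cup-\mathcal C^\circ$, we check that $c_T=0$. Membership in $\mathcal C^\circ$ is tested by a prefix-sum criterion. Since $\mathcal C$ is generated by the rays $e_1+\cdots+e_i$ ($i\le 8$) together with $\pm\mathbf 1$, a vector $v$ has constant sign on $\mathcal C$ iff $\sum_i v_i=0$ and all partial sums $\sum_{i\le m}v_i$ share a sign. Here $\sum_i v_i=0$ holds automatically because both vectors have norm $4$.

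2. For every unambiguous $T$, we replace the max by whichever of $\eta_L^T,\eta_R^T$ dominates. We then check that $\sum_{T\in\mathcal T_U}c_T\,\eta^T_{\mathrm{dom}}=2e_1$, which is eight scalar equations.

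Since a two-term max with $c_T=0$ contributes nothing, and every remaining term is linear on $\mathcal C$, these two checks give $\sum_j d_jG_{\mathcal O[R_j]}=2x_1$ on $\mathcal C$. The identity then holds on all of $\R^8$ by the symmetry argument.

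\textbf{Main obstacle.} The mathematics is routine; the difficulty is the scale and the risk of error in the certificate. Orbits can have up to $8!$ elements before quotienting, there are 1,290 of them, and about $22{,}000$ distinct pairs $T$ must cancel exactly. I would first sanity-check the pipeline by reproducing Theorems~\ref{thm:max5}--\ref{thm:max6-2}. I would then run the $N=8$ verification with integer arithmetic, after multiplying all $d_j$ by a common denominator.

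As an independent check that does not rely on the ambiguity bookkeeping, I would also evaluate both sides directly at many random rational points of $\R^8$, including unsorted ones, using the original atom formulas. This guards against bugs in the canonicalization step.
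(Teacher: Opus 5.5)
\textbf{The method is the paper's, but your class weighting will not match the tabulated coefficients.} You reduce to $\mathcal C$ by permutation invariance, expand via Lemma~\ref{lem:atom-max-linear}, and certify the sufficient condition \eqref{eq:sufficient-cond}. Your prefix-sum test for unambiguity and the reduction to eight scalar equations are both correct.

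The step that breaks is the weighting in $c_T=\sum_j d_j\cdot(\text{multiplicity of }T\text{ in }\mathcal O[R_j])$. There you count each distinct equivalence class once, which is $G_{\mathcal O}$ exactly as defined in Section~\ref{sec:our-approach}. The tabulated $d_j$ follow a different convention, stated in the caption of \Cref{tab:max8-complete-coefficients}: the orbit function is the sum of the atoms over all substitutions of distinct values from $\{1,\dots,8\}$ for the letters of $R_j$. That sum counts each class $C$ exactly $|\mathrm{Stab}_{\Sigma_8}(C)|/(8-m_j)!$ times, where $m_j$ is the number of letters in $R_j$. Hence it equals $\frac{8!/(8-m_j)!}{|\mathcal O[R_j]|}\,G_{\mathcal O[R_j]}$. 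This factor depends on the orbit, so with one copy per class you should not expect the ambiguous $c_T$ to vanish.

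\textbf{The mismatch already appears in the $\MAX_5$ sanity check you propose.} On $\mathcal C$, the ambiguous function $\max\{2x_2,x_1+x_4\}$ arises from:
\begin{itemize}
\item one class of $\mathcal O[P_{16}]$, namely $[22,22\mid 13,45]$;
\item one class of $\mathcal O[P_{37}]$;
\item three classes of $\mathcal O[P_{45}]$;
\item no class of the other four orbits in \eqref{eq:max5}.
\end{itemize}
Counting classes once, its net coefficient is $\tfrac1{120}+\tfrac1{60}-\tfrac3{60}=-\tfrac1{40}\neq0$. No other term has a kink along $x_1-2x_2+x_4=0$, and this hyperplane meets the interior of $\mathcal C$. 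So \eqref{eq:max5}, read with the literal $G_{\mathcal O}$, is false.

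Counting substitutions instead gives 8 per class for $P_{16}$ and 2 per class for $P_{37}$ and $P_{45}$. The coefficient becomes $\tfrac8{120}+\tfrac2{60}-\tfrac6{60}=0$, and the whole right-hand side collapses to exactly $2x_1$ on $\mathcal C$.

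\textbf{The fix.} Weight each class by its number of letter substitutions, or equivalently rescale each $d_j$ by the factor above. Your random-point test, run with the literal definition, would also have exposed the mismatch. With this change your procedure is the certificate the paper's linear system encodes, and the rest of your argument goes through.
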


\section{Conclusion}

The central idea used in this paper, as well as in the work of Rueß at al~\cite{ruess2026shallower}, is that a global piecewise-linear identity can
be certified by finite exact linear algebra. Symmetric orbit-sums reduce the problem to one sorted
chamber. The approach of Rueß at al and this paper differ in the way nonlinear terms are handled on this sorted chamber. This paper also uses further functional symmetries beyond coordinate permutations to reduce the size of the linear system. We emphasize that a solution to the corresponding linear system is a sufficient certificate for representation using ReLU networks with two hidden layers; it is not a complete characterization. In other words, nonexistence of a solution does not imply nonexistence of such a representation.  

While our approach provided such representations for $\MAX_5, \MAX_6, \MAX_7$ and $\MAX_8$, Rue{\ss} et al.~\cite{ruess2026shallower} are able to obtain these representations for $\MAX_9$ and $\MAX_{10}$ as well. Our approach was computationally inaccessible for $\MAX_9$ and $\MAX_{10}$ given the number of orbits and equivalence classes in our method. Moreover, while our representation for $\MAX_8$ uses $k=4$, i.e., sum of four two-term coordinate maxima in the first layer, Rue{\ss} et al.~\cite{ruess2026shallower} obtain a representation using sums of only three two-term coordinate maxima for $\MAX_8$.

\bibliographystyle{alpha}
\bibliography{full-bib}

\clearpage

\appendix

\section{The complete $\MAX_7$ coefficient list}\label{app:max7-coefficients}

Table~\ref{tab:max7-complete-coefficients} gives the 109 nonzero coefficients and corresponding atom patterns inTheorem~\ref{thm:max7}.

\input{max7_coefficients.tex}

\section{The complete $\MAX_8$ coefficient list}\label{app:max8-coefficients}

Table~\ref{tab:max8-complete-coefficients} gives the 1,290 nonzero coefficients and corresponding atom patterns in
Theorem~\ref{thm:max8}.  

\input{max8_coefficients.tex}

\end{document}

%% file: max7_coefficients.tex
\begingroup
\captionof{table}{Complete coefficient list for the exact $\MAX_7$ identity. Each entry is $j,\ c_j,\ Q_j = \texttt{[left $\mid$ right]}$. To denote the entire orbit of equivalence classes under the symmetries we consider, we use abstract letters to denote the patterns. The coefficient corresponds to the coefficient of the function obtained by summing all atoms obtained when the letters range over all possible values, i.e., ${\tt a, b, c}$ etc. range over $\{1, \ldots, 7\}$.}\label{tab:max7-complete-coefficients}
\vspace{0.4\baselineskip}
\setlength{\columnsep}{1.2em}
\setlength{\columnseprule}{0pt}
\begin{multicols}{2}
\raggedcolumns
\fontsize{8.0}{8.6}\selectfont
\setlength{\parindent}{0pt}
\setlength{\parskip}{0pt}
\newcommand{\coeffentry}[4]{%
  \noindent\makebox[2.25em][r]{#1.}\hspace{0.35em}%
  \makebox[8.6em][r]{\(\scriptstyle #2\)}\hspace{0.55em}%
  \mbox{\texttt{#3\,|\,#4}}\par%
}
\coeffentry{1}{\tfrac{13}{42}}{aa aa aa}{aa aa aa}
\coeffentry{2}{\tfrac{31}{840}}{aa aa aa}{aa aa bc}
\coeffentry{3}{\tfrac{29}{20160}}{aa aa aa}{bc bd cd}
\coeffentry{4}{-\tfrac{73}{1680}}{aa aa bc}{aa aa bc}
\coeffentry{5}{\tfrac{29}{13440}}{aa aa bc}{bc cd dd}
\coeffentry{6}{-\tfrac{1}{1344}}{aa aa bc}{bc de df}
\coeffentry{7}{-\tfrac{29}{6720}}{aa aa bc}{bd cd dd}
\coeffentry{8}{\tfrac{29}{13440}}{aa aa bd}{bc cc dd}
\coeffentry{9}{-\tfrac{29}{13440}}{aa aa cd}{bc bc dd}
\coeffentry{10}{\tfrac{1}{1680}}{aa bb cc}{bc de df}
\coeffentry{11}{-\tfrac{1}{448}}{aa bb cc}{cc de df}
\coeffentry{12}{-\tfrac{23}{40320}}{aa bb cc}{de df dg}
\coeffentry{13}{\tfrac{1}{5376}}{aa bb cc}{de df ef}
\coeffentry{14}{\tfrac{53}{30240}}{aa bb cc}{de df fg}
\coeffentry{15}{-\tfrac{1}{3360}}{aa bb cc}{de ef ff}
\coeffentry{16}{-\tfrac{1}{3840}}{aa bb cc}{df ef ef}
\coeffentry{17}{\tfrac{1}{1792}}{aa bb cc}{df ef ff}
\coeffentry{18}{-\tfrac{1}{1680}}{aa bb cd}{ab ce cf}
\coeffentry{19}{\tfrac{1}{1680}}{aa bb cd}{ab cf de}
\coeffentry{20}{\tfrac{1}{224}}{aa bb cd}{bb ce cf}
\coeffentry{21}{-\tfrac{1}{336}}{aa bb cd}{bb cf de}
\coeffentry{22}{\tfrac{13}{30240}}{aa bb cd}{cd ef eg}
\coeffentry{23}{\tfrac{1}{2016}}{aa bb cd}{ce cf cg}
\coeffentry{24}{-\tfrac{13}{4032}}{aa bb cd}{ce cf fg}
\coeffentry{25}{\tfrac{7}{5760}}{aa bb cd}{ce ef eg}
\coeffentry{26}{\tfrac{13}{60480}}{aa bb cd}{dd ef eg}
\coeffentry{27}{\tfrac{13}{60480}}{aa bb dd}{cd ef eg}
\coeffentry{28}{-\tfrac{41}{20160}}{aa bb de}{cd cf fg}
\coeffentry{29}{\tfrac{17}{6720}}{aa bb de}{ce df ff}
\coeffentry{30}{-\tfrac{1}{1344}}{aa bb de}{ce ef ff}
\coeffentry{31}{-\tfrac{1}{480}}{aa bb de}{cf ef ff}
\coeffentry{32}{\tfrac{1}{3360}}{aa bb ef}{cd cd df}
\coeffentry{33}{-\tfrac{1}{3360}}{aa bb ef}{cd cd ef}
\coeffentry{34}{\tfrac{11}{13440}}{aa bb ef}{cd cd ff}
\coeffentry{35}{-\tfrac{1}{6720}}{aa bb ef}{cd cf de}
\coeffentry{36}{-\tfrac{1}{6720}}{aa bb ef}{cd cf df}
\coeffentry{37}{\tfrac{1}{2240}}{aa bb ef}{cd de df}
\coeffentry{38}{-\tfrac{3}{2240}}{aa bb ef}{cd de ff}
\coeffentry{39}{\tfrac{3}{2240}}{aa bb ef}{cd df df}
\coeffentry{40}{-\tfrac{1}{6720}}{aa bb ef}{cd df ef}
\coeffentry{41}{\tfrac{1}{960}}{aa bb ef}{ce de ff}
\coeffentry{42}{-\tfrac{1}{960}}{aa bb ef}{ce df df}
\coeffentry{43}{-\tfrac{1}{2240}}{aa bb ef}{cf df df}
\coeffentry{44}{\tfrac{97}{120960}}{aa bc bc}{de df dg}
\coeffentry{45}{-\tfrac{1}{1120}}{aa bc bc}{de df fg}
\coeffentry{46}{\tfrac{17}{60480}}{aa bc bd}{be bf bg}
\coeffentry{47}{\tfrac{13}{20160}}{aa bc bd}{be bf fg}
\coeffentry{48}{-\tfrac{31}{10080}}{aa bc bd}{be ef eg}
\coeffentry{49}{-\tfrac{13}{20160}}{aa bc bd}{bf bg de}
\coeffentry{50}{\tfrac{7}{2880}}{aa bc bd}{bg de df}
\coeffentry{51}{-\tfrac{23}{7560}}{aa bc bd}{de df dg}
\coeffentry{52}{-\tfrac{13}{20160}}{aa bc be}{cd ef eg}
\coeffentry{53}{\tfrac{1}{6720}}{aa bc cd}{be bf fg}
\coeffentry{54}{\tfrac{29}{20160}}{aa bc cd}{be ef eg}
\coeffentry{55}{\tfrac{11}{6720}}{aa bc cd}{be ef fg}
\coeffentry{56}{\tfrac{13}{20160}}{aa bc cd}{bf ce fg}
\coeffentry{57}{\tfrac{7}{2880}}{aa bc de}{bd bf fg}
\coeffentry{58}{\tfrac{1}{672}}{aa bc de}{bd df dg}
\coeffentry{59}{-\tfrac{7}{2880}}{aa bc de}{bf cd fg}
\coeffentry{60}{\tfrac{1}{3360}}{aa bc de}{cc ef ff}
\coeffentry{61}{\tfrac{13}{10080}}{aa bc ef}{bd be eg}
\coeffentry{62}{\tfrac{1}{3360}}{aa bc ef}{cc de df}
\coeffentry{63}{-\tfrac{1}{840}}{aa bc ef}{cc de ff}
\coeffentry{64}{-\tfrac{1}{1680}}{aa bc ef}{cc df df}
\coeffentry{65}{-\tfrac{1}{560}}{aa bc fg}{be bf cd}
\coeffentry{66}{\tfrac{1}{560}}{aa bc fg}{be cd ef}
\coeffentry{67}{-\tfrac{1}{840}}{aa be cd}{cf de ff}
\coeffentry{68}{\tfrac{1}{1120}}{aa be cd}{df ef ff}
\coeffentry{69}{\tfrac{3}{1120}}{aa cd ef}{be df ff}
\coeffentry{70}{-\tfrac{1}{672}}{aa cd ef}{bf de ef}
\coeffentry{71}{\tfrac{1}{672}}{aa ce df}{be bf df}
\coeffentry{72}{\tfrac{1}{3360}}{aa ce df}{be de ff}
\coeffentry{73}{\tfrac{1}{560}}{aa ce df}{bf bf de}
\coeffentry{74}{\tfrac{1}{1680}}{aa ce df}{bf de ef}
\coeffentry{75}{-\tfrac{1}{420}}{aa ce df}{bf de ff}
\coeffentry{76}{\tfrac{1}{1120}}{aa ce df}{bf df ef}
\coeffentry{77}{-\tfrac{1}{1120}}{aa cf de}{bd be bf}
\coeffentry{78}{-\tfrac{1}{840}}{aa cf de}{bd ef ef}
\coeffentry{79}{-\tfrac{1}{1680}}{aa cf de}{be bf cd}
\coeffentry{80}{\tfrac{1}{840}}{aa cf de}{be cd ff}
\coeffentry{81}{\tfrac{1}{1680}}{aa cf de}{be cf df}
\coeffentry{82}{-\tfrac{1}{1120}}{aa cf de}{bf df ef}
\coeffentry{83}{\tfrac{1}{2520}}{aa de fg}{bc bd bf}
\coeffentry{84}{-\tfrac{13}{60480}}{aa ef eg}{bc bd cd}
\coeffentry{85}{\tfrac{13}{20160}}{aa ef eg}{bc cd dd}
\coeffentry{86}{-\tfrac{13}{20160}}{aa ef fg}{bc bd gg}
\coeffentry{87}{-\tfrac{13}{20160}}{aa ef gg}{bc bd fg}
\coeffentry{88}{\tfrac{13}{20160}}{aa eg fg}{bc bd ef}
\coeffentry{89}{-\tfrac{47}{60480}}{ab ab cd}{ce cf cg}
\coeffentry{90}{\tfrac{23}{20160}}{ab ab cd}{ce cf fg}
\coeffentry{91}{-\tfrac{1}{40320}}{ab ab cd}{ce ef eg}
\coeffentry{92}{-\tfrac{1}{4032}}{ab ab de}{cd cf fg}
\coeffentry{93}{\tfrac{47}{60480}}{ab ac ad}{ae ef eg}
\coeffentry{94}{-\tfrac{1}{2520}}{ab ac ae}{cd ef eg}
\coeffentry{95}{\tfrac{1}{2520}}{ab ac ae}{cd ef fg}
\coeffentry{96}{\tfrac{1}{6048}}{ab ac bc}{de df fg}
\coeffentry{97}{\tfrac{1}{2520}}{ab ac cd}{ae af fg}
\coeffentry{98}{-\tfrac{1}{2880}}{ab ac cd}{ae ef eg}
\coeffentry{99}{-\tfrac{1}{2016}}{ab ac cd}{ae ef fg}
\coeffentry{100}{-\tfrac{1}{5040}}{ab ac de}{ad af fg}
\coeffentry{101}{-\tfrac{1}{1680}}{ab ac de}{af cd fg}
\coeffentry{102}{-\tfrac{1}{1440}}{ab ac ef}{ae cd eg}
\coeffentry{103}{-\tfrac{13}{20160}}{ab ac ef}{df eg gg}
\coeffentry{104}{\tfrac{1}{2016}}{ab ac fg}{ae cd ef}
\coeffentry{105}{\tfrac{1}{1008}}{ab ac fg}{af cd de}
\coeffentry{106}{\tfrac{1}{6720}}{ab ac fg}{de dg ef}
\coeffentry{107}{-\tfrac{1}{6720}}{ab ac fg}{de dg eg}
\coeffentry{108}{-\tfrac{1}{840}}{ab ae bc}{bd ef fg}
\coeffentry{109}{\tfrac{1}{5040}}{ab ae cd}{bc ef fg}
\end{multicols}
\endgroup

%% file: max8_coefficients.tex
\begingroup
\captionof{table}{Complete coefficient list for the exact four-summand $\MAX_8$ identity. Each entry is $j,\ d_j,\ R_j = \texttt{[left $\mid$ right]}$. To denote the entire orbit of equivalence classes under the symmetries we consider, we use abstract letters to denote the patterns. The coefficient corresponds to the coefficient of the function obtained by summing all atoms obtained when the letters range over all possible values, i.e., ${\tt a, b, c}$ etc. range over $\{1, \ldots, 8\}$.}\label{tab:max8-complete-coefficients}
\vspace{0.4\baselineskip}
\setlength{\columnsep}{1.2em}
\setlength{\columnseprule}{0pt}
\begin{multicols}{2}
\raggedcolumns
\fontsize{8.0}{8.6}\selectfont
\setlength{\parindent}{0pt}
\setlength{\parskip}{0pt}
\newcommand{\coeffentry}[4]{%
  \noindent\makebox[2.25em][r]{#1.}\hspace{0.35em}%
  \makebox[8.6em][r]{\(\scriptstyle #2\)}\hspace{0.55em}%
  \mbox{\texttt{#3\,|\,#4}}\par%
}
\coeffentry{1}{\tfrac{77741}{1280}}{aa aa aa aa}{aa aa aa aa}
\coeffentry{2}{-\tfrac{8843}{12096}}{aa aa aa aa}{aa aa aa bc}
\coeffentry{3}{\tfrac{28849}{241920}}{aa aa aa aa}{bc bd cd cd}
\coeffentry{4}{-\tfrac{28849}{483840}}{aa aa aa aa}{bd bd cd cd}
\coeffentry{5}{-\tfrac{435091}{362880}}{aa aa aa bc}{aa aa aa bc}
\coeffentry{6}{\tfrac{28849}{362880}}{aa aa aa bc}{bc bd cd dd}
\coeffentry{7}{\tfrac{28849}{362880}}{aa aa aa bc}{bc cd cd dd}
\coeffentry{8}{\tfrac{28849}{362880}}{aa aa aa bc}{bd cc dd dd}
\coeffentry{9}{-\tfrac{28849}{181440}}{aa aa aa bc}{bd cd cd dd}
\coeffentry{10}{-\tfrac{28849}{362880}}{aa aa aa bc}{bd cd dd dd}
\coeffentry{11}{-\tfrac{28849}{362880}}{aa aa aa cd}{bc bc bc dd}
\coeffentry{12}{-\tfrac{187}{604800}}{aa aa bb bb}{cd ce fg fh}
\coeffentry{13}{\tfrac{971}{201600}}{aa aa bc bc}{aa aa de df}
\coeffentry{14}{\tfrac{1}{20160}}{aa aa bc bc}{de df dg dh}
\coeffentry{15}{\tfrac{101}{604800}}{aa aa bc bc}{de df dg gh}
\coeffentry{16}{-\tfrac{403}{2419200}}{aa aa bc bc}{de dg ef gh}
\coeffentry{17}{\tfrac{971}{80640}}{aa aa bc bd}{aa aa be bf}
\coeffentry{18}{-\tfrac{971}{50400}}{aa aa bc bd}{aa aa bf de}
\coeffentry{19}{-\tfrac{1}{20160}}{aa aa bc bd}{be bf bg bh}
\coeffentry{20}{-\tfrac{7}{345600}}{aa aa bc bd}{be bf fg fh}
\coeffentry{21}{\tfrac{451}{2419200}}{aa aa bc bd}{be bg ef gh}
\coeffentry{22}{-\tfrac{451}{1209600}}{aa aa bc bd}{bf bg de gh}
\coeffentry{23}{\tfrac{451}{2419200}}{aa aa bc bd}{bf de fg gh}
\coeffentry{24}{\tfrac{167}{483840}}{aa aa bc bd}{bg bh de df}
\coeffentry{25}{\tfrac{451}{2419200}}{aa aa bc bd}{bg de df gh}
\coeffentry{26}{-\tfrac{451}{1209600}}{aa aa bc bd}{bg de ef gh}
\coeffentry{27}{-\tfrac{1}{23040}}{aa aa bc bd}{bh de df dg}
\coeffentry{28}{\tfrac{451}{2419200}}{aa aa bc be}{bg bh cd ef}
\coeffentry{29}{-\tfrac{451}{2419200}}{aa aa bc be}{bh cd ef eg}
\coeffentry{30}{\tfrac{451}{2419200}}{aa aa bc be}{bh cd ef fg}
\coeffentry{31}{\tfrac{451}{2419200}}{aa aa bc be}{cd ef eg gh}
\coeffentry{32}{-\tfrac{1}{12600}}{aa aa bc bf}{cd ce fg fh}
\coeffentry{33}{-\tfrac{451}{2419200}}{aa aa bc bf}{cd ce fg gh}
\coeffentry{34}{\tfrac{451}{2419200}}{aa aa bc bf}{cd de fg gh}
\coeffentry{35}{-\tfrac{1}{10080}}{aa aa bc cc}{de df dg dh}
\coeffentry{36}{-\tfrac{101}{302400}}{aa aa bc cc}{de df dg gh}
\coeffentry{37}{\tfrac{403}{1209600}}{aa aa bc cc}{de dg ef gh}
\coeffentry{38}{\tfrac{7}{345600}}{aa aa bc cd}{bf ce fg fh}
\coeffentry{39}{-\tfrac{451}{2419200}}{aa aa bc cd}{bf de fg gh}
\coeffentry{40}{\tfrac{19}{21600}}{aa aa bc dd}{cd ef eg eh}
\coeffentry{41}{\tfrac{1}{16800}}{aa aa bc dd}{cd ef eg gh}
\coeffentry{42}{-\tfrac{8917}{1814400}}{aa aa bc de}{aa aa df dg}
\coeffentry{43}{\tfrac{8917}{1814400}}{aa aa bc de}{aa aa df fg}
\coeffentry{44}{\tfrac{8917}{3628800}}{aa aa bc de}{aa aa dg ef}
\coeffentry{45}{\tfrac{4859}{1036800}}{aa aa bc de}{bc de fg fh}
\coeffentry{46}{-\tfrac{19}{21600}}{aa aa bc de}{bc df dg dh}
\coeffentry{47}{-\tfrac{11}{80640}}{aa aa bc de}{bc ee fg fh}
\coeffentry{48}{-\tfrac{11}{161280}}{aa aa bc de}{cc ee fg fh}
\coeffentry{49}{-\tfrac{527}{403200}}{aa aa bc ee}{bc de fg fh}
\coeffentry{50}{\tfrac{23}{34560}}{aa aa bc ee}{cc de fg fh}
\coeffentry{51}{-\tfrac{451}{2419200}}{aa aa bc ef}{bd be eg eh}
\coeffentry{52}{-\tfrac{451}{2419200}}{aa aa bc ef}{be cd eh fg}
\coeffentry{53}{\tfrac{451}{2419200}}{aa aa bc fg}{be cd ef fh}
\coeffentry{54}{-\tfrac{19}{21600}}{aa aa bd cd}{bc ef eg eh}
\coeffentry{55}{\tfrac{19}{21600}}{aa aa bd cd}{cd ef eg eh}
\coeffentry{56}{\tfrac{797}{1209600}}{aa aa cc de}{bc df dg dh}
\coeffentry{57}{-\tfrac{1291}{1209600}}{aa aa cc ee}{bc de fg fh}
\coeffentry{58}{\tfrac{11}{16128}}{aa aa cd cd}{bd ef eg eh}
\coeffentry{59}{\tfrac{239}{1209600}}{aa aa cd dd}{bc ef eg eh}
\coeffentry{60}{-\tfrac{19}{151200}}{aa aa cd dd}{bc ef eg gh}
\coeffentry{61}{-\tfrac{1861}{1209600}}{aa aa cd dd}{bd ef eg eh}
\coeffentry{62}{-\tfrac{1}{1890}}{aa aa cd ef}{bc dd eg gh}
\coeffentry{63}{\tfrac{1}{1890}}{aa aa cd ef}{bd cd eg gh}
\coeffentry{64}{\tfrac{1}{25200}}{aa aa cd gh}{bc be ef eg}
\coeffentry{65}{-\tfrac{1}{1890}}{aa aa dd ef}{bd cd eg gh}
\coeffentry{66}{\tfrac{991}{806400}}{aa aa de de}{bc bc fg fh}
\coeffentry{67}{\tfrac{11}{80640}}{aa aa de de}{bc cc fg fh}
\coeffentry{68}{-\tfrac{1937}{1209600}}{aa aa de ee}{bc bc fg fh}
\coeffentry{69}{-\tfrac{8917}{7257600}}{aa aa de fg}{aa aa bc bc}
\coeffentry{70}{-\tfrac{1}{50400}}{aa aa de gh}{bc bf cd fg}
\coeffentry{71}{-\tfrac{11}{40320}}{aa aa eg fg}{bc bd fh hh}
\coeffentry{72}{\tfrac{11}{40320}}{aa aa eg fg}{bc bd gh hh}
\coeffentry{73}{\tfrac{23}{201600}}{aa aa eg fh}{bc bd fg hh}
\coeffentry{74}{\tfrac{29}{67200}}{aa aa eg fh}{bc bd fh gh}
\coeffentry{75}{\tfrac{281}{151200}}{aa aa eh fg}{bc bd eg fh}
\coeffentry{76}{-\tfrac{1913}{604800}}{aa aa eh fg}{bc bd fh gh}
\coeffentry{77}{\tfrac{13}{12600}}{aa aa eh fg}{bc bd gh gh}
\coeffentry{78}{-\tfrac{11}{40320}}{aa aa fg gh}{bc bd eg hh}
\coeffentry{79}{-\tfrac{11}{4800}}{aa aa fg hh}{bc bd eg fh}
\coeffentry{80}{\tfrac{1937}{604800}}{aa aa fg hh}{bc bd eg gh}
\coeffentry{81}{-\tfrac{1147}{604800}}{aa aa fg hh}{bc bd eh gh}
\coeffentry{82}{-\tfrac{211}{302400}}{aa aa fh gh}{bc bd ef eg}
\coeffentry{83}{-\tfrac{37}{43200}}{aa aa fh gh}{bc bd eg eg}
\coeffentry{84}{\tfrac{431}{302400}}{aa aa fh gh}{bc bd eg eh}
\coeffentry{85}{\tfrac{2069}{604800}}{aa aa fh gh}{bc bd eg fg}
\coeffentry{86}{-\tfrac{431}{302400}}{aa aa fh gh}{bc bd eg fh}
\coeffentry{87}{\tfrac{13}{100800}}{aa aa fh gh}{bc bd eh eh}
\coeffentry{88}{-\tfrac{7}{3456}}{aa aa gh hh}{bc bd ef fg}
\coeffentry{89}{-\tfrac{13}{151200}}{aa ab bb cd}{cd ef eg eh}
\coeffentry{90}{\tfrac{19}{241920}}{aa ab bb cd}{cd ef eg gh}
\coeffentry{91}{\tfrac{1}{7200}}{aa ab bb cd}{ce cf cg ch}
\coeffentry{92}{\tfrac{1}{21600}}{aa ab bb cd}{dd ef eg eh}
\coeffentry{93}{-\tfrac{1}{25200}}{aa ab bb cd}{dd ef eg gh}
\coeffentry{94}{\tfrac{1}{33600}}{aa ab bb dd}{cd ef eg eh}
\coeffentry{95}{\tfrac{1}{22400}}{aa ab bb dd}{cd ef eg gh}
\coeffentry{96}{\tfrac{1}{6300}}{aa ab bb fg}{cd ce gh hh}
\coeffentry{97}{-\tfrac{1}{6300}}{aa ab bb gh}{cd ce fg fh}
\coeffentry{98}{\tfrac{1}{3150}}{aa ab bb gh}{cd ce fg hh}
\coeffentry{99}{-\tfrac{1}{6300}}{aa ab bb gh}{cd ce fh hh}
\coeffentry{100}{\tfrac{1}{6300}}{aa ab bb hh}{cd ce fh gh}
\coeffentry{101}{\tfrac{1}{10080}}{aa bb bb cd}{ce cf cg ch}
\coeffentry{102}{\tfrac{5}{12096}}{aa bb bb cd}{ce cf fg fh}
\coeffentry{103}{-\tfrac{451}{1209600}}{aa bb bb cd}{ce cg ef gh}
\coeffentry{104}{\tfrac{139}{604800}}{aa bb bb cd}{cf cg ch de}
\coeffentry{105}{\tfrac{451}{1209600}}{aa bb bb cd}{cf cg de gh}
\coeffentry{106}{-\tfrac{5}{24192}}{aa bb bb cd}{cf de fg fh}
\coeffentry{107}{-\tfrac{451}{1209600}}{aa bb bb cd}{cf de fg gh}
\coeffentry{108}{-\tfrac{643}{1209600}}{aa bb bb cd}{cg ch de df}
\coeffentry{109}{\tfrac{451}{1209600}}{aa bb bb cd}{cg de ef gh}
\coeffentry{110}{-\tfrac{1}{25200}}{aa bb bb de}{cd cf fg fh}
\coeffentry{111}{\tfrac{1}{25200}}{aa bb bb de}{cd cf fg gh}
\coeffentry{112}{-\tfrac{1}{25200}}{aa bb bb ef}{cd ce cg gh}
\coeffentry{113}{-\tfrac{1}{7200}}{aa bb bc cc}{de df dg dh}
\coeffentry{114}{-\tfrac{19}{67200}}{aa bb cc dd}{cd ef eg eh}
\coeffentry{115}{-\tfrac{353}{1209600}}{aa bb cc dd}{cd ef eg gh}
\coeffentry{116}{\tfrac{109363}{29030400}}{aa bb cc dd}{ef ef gh gh}
\coeffentry{117}{-\tfrac{1449409}{58060800}}{aa bb cc dd}{ef ef gh hh}
\coeffentry{118}{\tfrac{575587}{19353600}}{aa bb cc dd}{ef ff gh hh}
\coeffentry{119}{-\tfrac{46261}{23224320}}{aa bb cc dd}{eg eh fg fh}
\coeffentry{120}{\tfrac{227537}{29030400}}{aa bb cc dd}{eg fg fh hh}
\coeffentry{121}{\tfrac{7381}{7257600}}{aa bb cc dd}{eg fg gh hh}
\coeffentry{122}{\tfrac{8959}{7257600}}{aa bb cc dd}{eg fh fh gh}
\coeffentry{123}{-\tfrac{8917}{7257600}}{aa bb cc dd}{eh fg fh gh}
\coeffentry{124}{\tfrac{23}{134400}}{aa bb cc dd}{eh fh gh gh}
\coeffentry{125}{\tfrac{3733}{518400}}{aa bb cc de}{bc de fg fh}
\coeffentry{126}{\tfrac{19}{33600}}{aa bb cc de}{bc df dg dh}
\coeffentry{127}{\tfrac{353}{604800}}{aa bb cc de}{bc df dg gh}
\coeffentry{128}{-\tfrac{71}{134400}}{aa bb cc de}{bc dg dh ef}
\coeffentry{129}{\tfrac{83}{241920}}{aa bb cc de}{bc dg ef gh}
\coeffentry{130}{\tfrac{1121167}{58060800}}{aa bb cc de}{de fg fh gh}
\coeffentry{131}{-\tfrac{21451}{3628800}}{aa bb cc de}{de fg gh hh}
\coeffentry{132}{-\tfrac{73159}{1658880}}{aa bb cc de}{de fh gh gh}
\coeffentry{133}{\tfrac{2083229}{58060800}}{aa bb cc de}{de fh gh hh}
\coeffentry{134}{\tfrac{20689}{7257600}}{aa bb cc de}{ee fg fh gh}
\coeffentry{135}{-\tfrac{4951}{403200}}{aa bb cc de}{ee fg gh hh}
\coeffentry{136}{-\tfrac{1091}{453600}}{aa bb cc de}{ee fh gh gh}
\coeffentry{137}{-\tfrac{3251}{302400}}{aa bb cc ee}{bc de fg fh}
\coeffentry{138}{\tfrac{38423}{1036800}}{aa bb cc fg}{de de gh hh}
\coeffentry{139}{\tfrac{1}{120960}}{aa bb cc fg}{de dh eg hh}
\coeffentry{140}{-\tfrac{297947}{7257600}}{aa bb cc fg}{de ee gh hh}
\coeffentry{141}{\tfrac{181}{907200}}{aa bb cc fg}{de eg eh hh}
\coeffentry{142}{\tfrac{41819}{1814400}}{aa bb cc fg}{de eg fh hh}
\coeffentry{143}{-\tfrac{16637}{806400}}{aa bb cc fg}{de eg gh hh}
\coeffentry{144}{-\tfrac{116297}{3628800}}{aa bb cc fg}{de eh gh hh}
\coeffentry{145}{-\tfrac{131}{172800}}{aa bb cc fg}{dg ef eh hh}
\coeffentry{146}{-\tfrac{151}{604800}}{aa bb cc fg}{dg ef gh hh}
\coeffentry{147}{-\tfrac{235}{48384}}{aa bb cc fg}{dg eg eh hh}
\coeffentry{148}{-\tfrac{101}{60480}}{aa bb cc fg}{dg eg gh hh}
\coeffentry{149}{\tfrac{12913}{1036800}}{aa bb cc fg}{dh eg eh hh}
\coeffentry{150}{\tfrac{11651}{1451520}}{aa bb cc fg}{dh eg fh hh}
\coeffentry{151}{\tfrac{1531}{151200}}{aa bb cc fg}{dh eg gh hh}
\coeffentry{152}{\tfrac{24971}{1451520}}{aa bb cc fg}{dh eh gh hh}
\coeffentry{153}{-\tfrac{1243}{145152}}{aa bb cc gh}{de de fg fh}
\coeffentry{154}{-\tfrac{37}{80640}}{aa bb cc gh}{de de fg hh}
\coeffentry{155}{-\tfrac{3919}{1209600}}{aa bb cc gh}{de de fh fh}
\coeffentry{156}{\tfrac{131489}{7257600}}{aa bb cc gh}{de df ef fh}
\coeffentry{157}{\tfrac{30983}{1814400}}{aa bb cc gh}{de ee fg fh}
\coeffentry{158}{-\tfrac{359}{25920}}{aa bb cc gh}{de ee fg hh}
\coeffentry{159}{\tfrac{11}{3150}}{aa bb cc gh}{de ee fh fh}
\coeffentry{160}{\tfrac{7921}{3628800}}{aa bb cc gh}{de ef fg fh}
\coeffentry{161}{\tfrac{9407}{806400}}{aa bb cc gh}{de ef fh fh}
\coeffentry{162}{-\tfrac{147251}{7257600}}{aa bb cc gh}{df dh ef ef}
\coeffentry{163}{-\tfrac{229}{120960}}{aa bb cc gh}{df dh ef eg}
\coeffentry{164}{\tfrac{4999}{1209600}}{aa bb cc gh}{df dh ef eh}
\coeffentry{165}{-\tfrac{193}{113400}}{aa bb cc gh}{df ef eg hh}
\coeffentry{166}{-\tfrac{4979}{483840}}{aa bb cc gh}{df ef eh fg}
\coeffentry{167}{\tfrac{47}{11200}}{aa bb cc gh}{df ef eh gh}
\coeffentry{168}{\tfrac{121}{302400}}{aa bb cc gh}{df ef fg fh}
\coeffentry{169}{\tfrac{149}{103680}}{aa bb cc gh}{df ef fg hh}
\coeffentry{170}{-\tfrac{128911}{7257600}}{aa bb cc gh}{df ef fh fh}
\coeffentry{171}{\tfrac{59641}{7257600}}{aa bb cc gh}{df ef fh gh}
\coeffentry{172}{-\tfrac{401}{172800}}{aa bb cc gh}{df eg eh fh}
\coeffentry{173}{-\tfrac{7459}{806400}}{aa bb cc gh}{df eh eh fg}
\coeffentry{174}{-\tfrac{439}{302400}}{aa bb cc gh}{df eh eh fh}
\coeffentry{175}{\tfrac{11}{6720}}{aa bb cc gh}{dg ef eh fh}
\coeffentry{176}{\tfrac{3443}{345600}}{aa bb cc gh}{dg ef fg hh}
\coeffentry{177}{\tfrac{613}{1209600}}{aa bb cc gh}{dg eg fg hh}
\coeffentry{178}{\tfrac{1697}{1209600}}{aa bb cc gh}{dh ef eh fh}
\coeffentry{179}{-\tfrac{19}{151200}}{aa bb cc gh}{dh eh fh fh}
\coeffentry{180}{-\tfrac{1}{9450}}{aa bb cd cd}{cd ef eg gh}
\coeffentry{181}{-\tfrac{63307}{38707200}}{aa bb cd cd}{eg eh fg fh}
\coeffentry{182}{\tfrac{2741}{4838400}}{aa bb cd cd}{eg fg fh hh}
\coeffentry{183}{-\tfrac{43}{604800}}{aa bb cd cd}{eg fg gh hh}
\coeffentry{184}{\tfrac{58057}{9676800}}{aa bb cd cd}{eg fh fh gh}
\coeffentry{185}{\tfrac{11}{241920}}{aa bb cd cd}{eh fg fh gh}
\coeffentry{186}{\tfrac{59083}{19353600}}{aa bb cd cd}{eh fg gh gh}
\coeffentry{187}{-\tfrac{5861}{483840}}{aa bb cd cd}{eh fg gh hh}
\coeffentry{188}{-\tfrac{1}{241920}}{aa bb cd cd}{eh fh gh gh}
\coeffentry{189}{-\tfrac{19}{67200}}{aa bb cd ce}{ab cf cg ch}
\coeffentry{190}{\tfrac{181}{403200}}{aa bb cd ce}{ab cf cg gh}
\coeffentry{191}{\tfrac{1}{5400}}{aa bb cd ce}{ab cg ch ef}
\coeffentry{192}{\tfrac{83}{241920}}{aa bb cd ce}{ab ch ef eg}
\coeffentry{193}{-\tfrac{1}{12600}}{aa bb cd cf}{ab de fg fh}
\coeffentry{194}{-\tfrac{83}{241920}}{aa bb cd de}{ab cg df gh}
\coeffentry{195}{-\tfrac{1307}{181440}}{aa bb cd ef}{ab cd eg eh}
\coeffentry{196}{\tfrac{11}{134400}}{aa bb cd ef}{ab cd eg gh}
\coeffentry{197}{\tfrac{10093}{3628800}}{aa bb cd ef}{ab cd eh fg}
\coeffentry{198}{-\tfrac{1}{1350}}{aa bb cd fg}{ab ce cf fh}
\coeffentry{199}{-\tfrac{307}{11520}}{aa bb cd fg}{cd eg eh hh}
\coeffentry{200}{\tfrac{26821}{645120}}{aa bb cd fg}{cd eg fh hh}
\coeffentry{201}{-\tfrac{402307}{9676800}}{aa bb cd fg}{cd eg gh hh}
\coeffentry{202}{\tfrac{253357}{9676800}}{aa bb cd fg}{cd eh gh hh}
\coeffentry{203}{\tfrac{1033}{806400}}{aa bb cd fg}{dd eg eh hh}
\coeffentry{204}{-\tfrac{14663}{2419200}}{aa bb cd fg}{dd eg fh hh}
\coeffentry{205}{-\tfrac{7709}{241920}}{aa bb cd fg}{dd eg gh hh}
\coeffentry{206}{-\tfrac{1181}{86400}}{aa bb cd fg}{dd eh gh hh}
\coeffentry{207}{-\tfrac{3941}{115200}}{aa bb cd fg}{de ee gh hh}
\coeffentry{208}{\tfrac{51173}{3870720}}{aa bb cd gh}{cd ef ef fh}
\coeffentry{209}{\tfrac{36919}{2764800}}{aa bb cd gh}{cd ef eh fg}
\coeffentry{210}{\tfrac{58547}{9676800}}{aa bb cd gh}{cd ef eh fh}
\coeffentry{211}{-\tfrac{13}{2419200}}{aa bb cd gh}{cd ef fg fh}
\coeffentry{212}{-\tfrac{1165877}{29030400}}{aa bb cd gh}{cd ef fg hh}
\coeffentry{213}{-\tfrac{1232131}{19353600}}{aa bb cd gh}{cd ef fh fh}
\coeffentry{214}{\tfrac{8569}{2073600}}{aa bb cd gh}{cd ef fh gh}
\coeffentry{215}{\tfrac{97489}{1382400}}{aa bb cd gh}{cd ef fh hh}
\coeffentry{216}{\tfrac{4579}{165888}}{aa bb cd gh}{cd eg fg hh}
\coeffentry{217}{-\tfrac{1371539}{58060800}}{aa bb cd gh}{cd eg fh fh}
\coeffentry{218}{-\tfrac{38687}{4147200}}{aa bb cd gh}{cd eh fg fh}
\coeffentry{219}{\tfrac{27043}{58060800}}{aa bb cd gh}{cd eh fg gh}
\coeffentry{220}{\tfrac{13}{2419200}}{aa bb cd gh}{cd eh fh fh}
\coeffentry{221}{-\tfrac{18491}{2419200}}{aa bb cd gh}{dd ef ef fh}
\coeffentry{222}{-\tfrac{19147}{2419200}}{aa bb cd gh}{dd ef eh fg}
\coeffentry{223}{\tfrac{1559}{100800}}{aa bb cd gh}{dd ef eh fh}
\coeffentry{224}{-\tfrac{523}{201600}}{aa bb cd gh}{dd ef fg fh}
\coeffentry{225}{\tfrac{499}{403200}}{aa bb cd gh}{dd ef fg hh}
\coeffentry{226}{\tfrac{91}{4320}}{aa bb cd gh}{dd ef fh fh}
\coeffentry{227}{-\tfrac{12463}{806400}}{aa bb cd gh}{dd ef fh gh}
\coeffentry{228}{-\tfrac{157}{89600}}{aa bb cd gh}{dd ef fh hh}
\coeffentry{229}{-\tfrac{10193}{604800}}{aa bb cd gh}{dd eg fg hh}
\coeffentry{230}{\tfrac{13}{15120}}{aa bb cd gh}{dd eg fh fh}
\coeffentry{231}{\tfrac{18761}{1209600}}{aa bb cd gh}{dd eh fg fh}
\coeffentry{232}{\tfrac{13}{4200}}{aa bb cd gh}{dd eh fh fh}
\coeffentry{233}{-\tfrac{19279}{2419200}}{aa bb cd gh}{de ee fg fh}
\coeffentry{234}{\tfrac{97207}{2419200}}{aa bb cd gh}{de ee fg hh}
\coeffentry{235}{\tfrac{5093}{67200}}{aa bb cd gh}{de ee fh fh}
\coeffentry{236}{\tfrac{13007}{1209600}}{aa bb dd ef}{ab cd eg eh}
\coeffentry{237}{-\tfrac{1}{403200}}{aa bb dd ef}{ab cd eg gh}
\coeffentry{238}{-\tfrac{1}{5400}}{aa bb dd ef}{ab cd eh fg}
\coeffentry{239}{-\tfrac{913}{23040}}{aa bb de fg}{cf eg gh hh}
\coeffentry{240}{\tfrac{17743}{2419200}}{aa bb de gh}{cd ce fg fh}
\coeffentry{241}{-\tfrac{41191}{2419200}}{aa bb de gh}{cd ce fg hh}
\coeffentry{242}{-\tfrac{1487}{75600}}{aa bb de gh}{cd ce fh fh}
\coeffentry{243}{\tfrac{253}{37800}}{aa bb de gh}{cd ee fg hh}
\coeffentry{244}{\tfrac{9959}{268800}}{aa bb de gh}{cd ee fh fh}
\coeffentry{245}{-\tfrac{28367}{806400}}{aa bb de gh}{ce ce fh fh}
\coeffentry{246}{-\tfrac{19}{302400}}{aa bb df eg}{cg ch ef hh}
\coeffentry{247}{\tfrac{14603}{1209600}}{aa bb df eg}{cg dh ef hh}
\coeffentry{248}{\tfrac{1375}{48384}}{aa bb df eg}{cg ef gh hh}
\coeffentry{249}{\tfrac{17327}{1209600}}{aa bb df eg}{cg eh fg hh}
\coeffentry{250}{-\tfrac{2101}{86400}}{aa bb df eg}{cg eh fh hh}
\coeffentry{251}{\tfrac{1}{302400}}{aa bb dg ef}{cf cg ch hh}
\coeffentry{252}{\tfrac{5153}{806400}}{aa bb dg ef}{cf cg eh hh}
\coeffentry{253}{-\tfrac{15467}{2419200}}{aa bb dg ef}{cf cg gh hh}
\coeffentry{254}{\tfrac{12241}{806400}}{aa bb dg ef}{cf eg gh hh}
\coeffentry{255}{\tfrac{18313}{1209600}}{aa bb dg ef}{cg ch fh hh}
\coeffentry{256}{-\tfrac{1117}{26880}}{aa bb dg ef}{cg eh fg hh}
\coeffentry{257}{\tfrac{46133}{2419200}}{aa bb dg ef}{cg fg fh hh}
\coeffentry{258}{-\tfrac{1627}{483840}}{aa bb dg ef}{cg fg gh hh}
\coeffentry{259}{-\tfrac{33659}{2419200}}{aa bb dg ef}{cg fh gh hh}
\coeffentry{260}{\tfrac{68569}{2419200}}{aa bb dg ef}{ch eh fg hh}
\coeffentry{261}{-\tfrac{12317}{1209600}}{aa bb dg ef}{ch fg gh hh}
\coeffentry{262}{\tfrac{9377}{1209600}}{aa bb dg ef}{ch fh gh hh}
\coeffentry{263}{\tfrac{11609}{604800}}{aa bb ef gh}{cd ch dg fh}
\coeffentry{264}{\tfrac{2879}{302400}}{aa bb ef gh}{cd dg dh fh}
\coeffentry{265}{-\tfrac{4169}{403200}}{aa bb ef gh}{cd dg eh fh}
\coeffentry{266}{\tfrac{737}{86400}}{aa bb ef gh}{cd dg fg hh}
\coeffentry{267}{\tfrac{569}{483840}}{aa bb ef gh}{cd dg fh hh}
\coeffentry{268}{-\tfrac{106709}{2419200}}{aa bb ef gh}{cd dh fg gh}
\coeffentry{269}{-\tfrac{98869}{2419200}}{aa bb ef gh}{cg df dg hh}
\coeffentry{270}{\tfrac{18539}{483840}}{aa bb ef gh}{cg df dh dh}
\coeffentry{271}{-\tfrac{853}{302400}}{aa bb ef gh}{cg df dh eh}
\coeffentry{272}{\tfrac{4237}{1209600}}{aa bb ef gh}{cg df eg hh}
\coeffentry{273}{\tfrac{47671}{2419200}}{aa bb ef gh}{cg dg fg hh}
\coeffentry{274}{\tfrac{25297}{1209600}}{aa bb ef gh}{cg dg fh hh}
\coeffentry{275}{-\tfrac{1007}{604800}}{aa bb ef gh}{cg dh dh fh}
\coeffentry{276}{\tfrac{2001}{89600}}{aa bb ef gh}{ch df dg gh}
\coeffentry{277}{-\tfrac{331}{483840}}{aa bb ef gh}{ch df eg gh}
\coeffentry{278}{-\tfrac{629}{48384}}{aa bb ef gh}{ch dg dg fh}
\coeffentry{279}{\tfrac{583}{302400}}{aa bb ef gh}{ch dg dh fg}
\coeffentry{280}{\tfrac{41141}{2419200}}{aa bb ef gh}{ch dg dh fh}
\coeffentry{281}{-\tfrac{19}{96768}}{aa bb ef gh}{ch dg eh fg}
\coeffentry{282}{-\tfrac{1}{2400}}{aa bb ef gh}{ch dg eh fh}
\coeffentry{283}{\tfrac{4837}{345600}}{aa bb ef gh}{ch dg fg fh}
\coeffentry{284}{-\tfrac{81761}{2419200}}{aa bb ef gh}{ch dg fg hh}
\coeffentry{285}{-\tfrac{9431}{483840}}{aa bb ef gh}{ch dg fh gh}
\coeffentry{286}{\tfrac{3743}{2419200}}{aa bb ef gh}{ch dh eg fg}
\coeffentry{287}{-\tfrac{407}{241920}}{aa bb ef gh}{ch dh fg gh}
\coeffentry{288}{-\tfrac{20927}{2419200}}{aa bb eg fg}{cd cd fh hh}
\coeffentry{289}{\tfrac{20233}{1209600}}{aa bb eg fg}{cd cd gh hh}
\coeffentry{290}{-\tfrac{253}{48384}}{aa bb eg fg}{cd ch df hh}
\coeffentry{291}{\tfrac{101}{19200}}{aa bb eg fg}{cd ch dg hh}
\coeffentry{292}{\tfrac{3169}{201600}}{aa bb eg fg}{cd dd fh hh}
\coeffentry{293}{-\tfrac{43027}{1209600}}{aa bb eg fg}{cd dd gh hh}
\coeffentry{294}{-\tfrac{367}{60480}}{aa bb eg fg}{cd df dh hh}
\coeffentry{295}{-\tfrac{1051}{403200}}{aa bb eg fg}{cd df eh hh}
\coeffentry{296}{\tfrac{18707}{604800}}{aa bb eg fg}{cd df fh hh}
\coeffentry{297}{-\tfrac{8327}{345600}}{aa bb eg fg}{cd df gh hh}
\coeffentry{298}{\tfrac{293}{50400}}{aa bb eg fg}{cd dg dh hh}
\coeffentry{299}{-\tfrac{23587}{2419200}}{aa bb eg fg}{cd dg fh hh}
\coeffentry{300}{\tfrac{97}{40320}}{aa bb eg fg}{cd dg gh hh}
\coeffentry{301}{-\tfrac{7273}{345600}}{aa bb eg fg}{cd dh fh hh}
\coeffentry{302}{\tfrac{26183}{1209600}}{aa bb eg fg}{cd dh gh hh}
\coeffentry{303}{\tfrac{7249}{2419200}}{aa bb eg fg}{cf de dh hh}
\coeffentry{304}{\tfrac{2747}{806400}}{aa bb eg fg}{cf de fh hh}
\coeffentry{305}{\tfrac{31}{1209600}}{aa bb eg fg}{cf de gh hh}
\coeffentry{306}{\tfrac{2089}{268800}}{aa bb eg fg}{cf df dh hh}
\coeffentry{307}{-\tfrac{20033}{1209600}}{aa bb eg fg}{cf df fh hh}
\coeffentry{308}{\tfrac{1363}{89600}}{aa bb eg fg}{cf df gh hh}
\coeffentry{309}{-\tfrac{1357}{403200}}{aa bb eg fg}{cg dg dh hh}
\coeffentry{310}{\tfrac{421}{1209600}}{aa bb eg fg}{cg dg gh hh}
\coeffentry{311}{-\tfrac{4349}{483840}}{aa bb eg fg}{ch df dh hh}
\coeffentry{312}{\tfrac{571}{120960}}{aa bb eg fg}{ch df eh hh}
\coeffentry{313}{\tfrac{1811}{115200}}{aa bb eg fg}{ch df fh hh}
\coeffentry{314}{\tfrac{4451}{268800}}{aa bb eg fg}{ch dg dh hh}
\coeffentry{315}{-\tfrac{8423}{1209600}}{aa bb eg fg}{ch dg gh hh}
\coeffentry{316}{\tfrac{1}{120960}}{aa bb eg fg}{ch dh fh hh}
\coeffentry{317}{-\tfrac{3281}{241920}}{aa bb eg fg}{ch dh gh hh}
\coeffentry{318}{-\tfrac{57149}{2419200}}{aa bb eg fh}{cd cd dh fg}
\coeffentry{319}{\tfrac{29053}{967680}}{aa bb eg fh}{cd cd fg hh}
\coeffentry{320}{\tfrac{42461}{2419200}}{aa bb eg fh}{cd cd fh gh}
\coeffentry{321}{-\tfrac{407}{345600}}{aa bb eg fh}{cd cg df hh}
\coeffentry{322}{-\tfrac{18317}{1209600}}{aa bb eg fh}{cd ch df dg}
\coeffentry{323}{-\tfrac{7541}{1209600}}{aa bb eg fh}{cd ch dg fh}
\coeffentry{324}{-\tfrac{3509}{1209600}}{aa bb eg fh}{cd ch dh fg}
\coeffentry{325}{-\tfrac{43259}{806400}}{aa bb eg fh}{cd dd fg hh}
\coeffentry{326}{\tfrac{1271}{483840}}{aa bb eg fh}{cd df dg hh}
\coeffentry{327}{\tfrac{6523}{268800}}{aa bb eg fh}{cd dg dh fh}
\coeffentry{328}{-\tfrac{14263}{604800}}{aa bb eg fh}{cd dg fg hh}
\coeffentry{329}{\tfrac{8083}{806400}}{aa bb eg fh}{cd dh fg gh}
\coeffentry{330}{\tfrac{47927}{2419200}}{aa bb eg fh}{cg df dg hh}
\coeffentry{331}{\tfrac{7997}{2419200}}{aa bb eg fh}{cg df dh eh}
\coeffentry{332}{-\tfrac{277}{43200}}{aa bb eg fh}{cg df eh hh}
\coeffentry{333}{-\tfrac{28139}{2419200}}{aa bb eg fh}{cg df fg hh}
\coeffentry{334}{-\tfrac{26141}{1209600}}{aa bb eg fh}{cg df gh hh}
\coeffentry{335}{-\tfrac{4253}{2419200}}{aa bb eg fh}{cg dg fg hh}
\coeffentry{336}{\tfrac{17933}{1209600}}{aa bb eg fh}{ch ch df dg}
\coeffentry{337}{\tfrac{8389}{2419200}}{aa bb eg fh}{ch df dg eh}
\coeffentry{338}{-\tfrac{12479}{604800}}{aa bb eg fh}{ch df dg gh}
\coeffentry{339}{-\tfrac{47269}{2419200}}{aa bb eg fh}{ch df dg hh}
\coeffentry{340}{\tfrac{41}{5600}}{aa bb eg fh}{ch df gh gh}
\coeffentry{341}{-\tfrac{2117}{120960}}{aa bb eg fh}{ch dg dh fg}
\coeffentry{342}{\tfrac{30491}{806400}}{aa bb eg fh}{ch dg fg hh}
\coeffentry{343}{\tfrac{37579}{2419200}}{aa bb eg fh}{ch dg fh gh}
\coeffentry{344}{\tfrac{6901}{604800}}{aa bb eg fh}{ch dh fg fg}
\coeffentry{345}{-\tfrac{13231}{806400}}{aa bb eg fh}{ch dh fg gh}
\coeffentry{346}{\tfrac{19}{50400}}{aa bb eg fh}{ch dh fh gh}
\coeffentry{347}{-\tfrac{1993}{645120}}{aa bb eh fg}{cd cd eg fh}
\coeffentry{348}{-\tfrac{7181}{1209600}}{aa bb eh fg}{cd ch dg ef}
\coeffentry{349}{\tfrac{2999}{201600}}{aa bb eh fg}{cd ch dg fh}
\coeffentry{350}{-\tfrac{3743}{2419200}}{aa bb eh fg}{cd ch dg gh}
\coeffentry{351}{-\tfrac{9377}{1209600}}{aa bb eh fg}{cd df dg dh}
\coeffentry{352}{-\tfrac{2083}{806400}}{aa bb eh fg}{cd dg dh ef}
\coeffentry{353}{-\tfrac{3569}{2419200}}{aa bb eh fg}{cd dg dh fh}
\coeffentry{354}{\tfrac{1679}{151200}}{aa bb eh fg}{cd dg ef hh}
\coeffentry{355}{\tfrac{5827}{2419200}}{aa bb eh fg}{cd dg eh fh}
\coeffentry{356}{\tfrac{22577}{2419200}}{aa bb eh fg}{cd dh eg fh}
\coeffentry{357}{-\tfrac{31}{16128}}{aa bb eh fg}{cd dh fh gh}
\coeffentry{358}{\tfrac{503}{483840}}{aa bb eh fg}{cg df eg hh}
\coeffentry{359}{\tfrac{187}{403200}}{aa bb eh fg}{cg dh dh fh}
\coeffentry{360}{\tfrac{37}{17280}}{aa bb eh fg}{ch dg dh fh}
\coeffentry{361}{-\tfrac{19}{50400}}{aa bb eh fg}{ch dh fh gh}
\coeffentry{362}{-\tfrac{1}{12600}}{aa bb fg gh}{ab cd ce hh}
\coeffentry{363}{-\tfrac{1361}{345600}}{aa bb fg gh}{cd cd ef hh}
\coeffentry{364}{-\tfrac{38287}{2419200}}{aa bb fg gh}{cd cd eg hh}
\coeffentry{365}{-\tfrac{6263}{302400}}{aa bb fg gh}{cd ce de hh}
\coeffentry{366}{\tfrac{15289}{2419200}}{aa bb fg gh}{cd de ee hh}
\coeffentry{367}{\tfrac{5359}{241920}}{aa bb fg gh}{ce de de hh}
\coeffentry{368}{\tfrac{761}{403200}}{aa bb fg gh}{ce de df hh}
\coeffentry{369}{\tfrac{76483}{2419200}}{aa bb fg gh}{ce de dg hh}
\coeffentry{370}{\tfrac{131}{161280}}{aa bb fg gh}{ce de ef hh}
\coeffentry{371}{-\tfrac{11111}{483840}}{aa bb fg gh}{ce de eg hh}
\coeffentry{372}{-\tfrac{607}{268800}}{aa bb fg gh}{cf de ef hh}
\coeffentry{373}{\tfrac{1817}{483840}}{aa bb fg gh}{cf de eg hh}
\coeffentry{374}{-\tfrac{421}{2419200}}{aa bb fg gh}{cf df ef hh}
\coeffentry{375}{\tfrac{341}{604800}}{aa bb fg gh}{cg de eg hh}
\coeffentry{376}{-\tfrac{709}{1209600}}{aa bb fg gh}{cg dg eg hh}
\coeffentry{377}{-\tfrac{1}{12600}}{aa bb fg hh}{ab cd ce gh}
\coeffentry{378}{\tfrac{3883}{4838400}}{aa bb fg hh}{cd cd eg fh}
\coeffentry{379}{\tfrac{1}{12600}}{aa bb fh gh}{ab cd ce fg}
\coeffentry{380}{-\tfrac{5891}{4838400}}{aa bb fh gh}{cd cd ef eg}
\coeffentry{381}{\tfrac{83}{15360}}{aa bb fh gh}{cd cd eg eh}
\coeffentry{382}{\tfrac{10427}{4838400}}{aa bb fh gh}{cd cd eg fg}
\coeffentry{383}{-\tfrac{1597}{2419200}}{aa bb fh gh}{cd cd eg fh}
\coeffentry{384}{\tfrac{77}{34560}}{aa bb fh gh}{cd cd eh eh}
\coeffentry{385}{\tfrac{28451}{1209600}}{aa bb fh gh}{cd ce de eg}
\coeffentry{386}{-\tfrac{23969}{2419200}}{aa bb fh gh}{cd ce de eh}
\coeffentry{387}{\tfrac{20641}{7257600}}{aa bb fh gh}{cd ce de fg}
\coeffentry{388}{\tfrac{1}{18900}}{aa bb fh gh}{cd ce de gh}
\coeffentry{389}{\tfrac{517}{100800}}{aa bb fh gh}{cd dd eh eh}
\coeffentry{390}{\tfrac{559}{403200}}{aa bb fh gh}{cd de ef eg}
\coeffentry{391}{-\tfrac{7291}{1209600}}{aa bb fh gh}{cd de eg eh}
\coeffentry{392}{-\tfrac{41}{4320}}{aa bb fh gh}{cd de eh eh}
\coeffentry{393}{-\tfrac{1363}{161280}}{aa bb fh gh}{ce cg de de}
\coeffentry{394}{\tfrac{1373}{2419200}}{aa bb fh gh}{ce cg de df}
\coeffentry{395}{-\tfrac{1447}{120960}}{aa bb fh gh}{ce cg de dg}
\coeffentry{396}{-\tfrac{7249}{2419200}}{aa bb fh gh}{ce ch de de}
\coeffentry{397}{\tfrac{1109}{172800}}{aa bb fh gh}{ce ch de dg}
\coeffentry{398}{\tfrac{3251}{1209600}}{aa bb fh gh}{ce ch de dh}
\coeffentry{399}{-\tfrac{571}{120960}}{aa bb fh gh}{ce de de fg}
\coeffentry{400}{-\tfrac{7261}{2419200}}{aa bb fh gh}{ce de dg ef}
\coeffentry{401}{-\tfrac{1}{1575}}{aa bb fh gh}{ce de dg fg}
\coeffentry{402}{-\tfrac{4021}{604800}}{aa bb fh gh}{ce de dh eg}
\coeffentry{403}{-\tfrac{7493}{2419200}}{aa bb fh gh}{ce de dh fg}
\coeffentry{404}{\tfrac{397}{1209600}}{aa bb fh gh}{ce de ef eg}
\coeffentry{405}{\tfrac{317}{48384}}{aa bb fh gh}{ce de eg eh}
\coeffentry{406}{\tfrac{479}{268800}}{aa bb fh gh}{ce de eg fg}
\coeffentry{407}{\tfrac{551}{80640}}{aa bb fh gh}{ce de eh eh}
\coeffentry{408}{\tfrac{7109}{2419200}}{aa bb fh gh}{ce de eh fg}
\coeffentry{409}{\tfrac{1373}{806400}}{aa bb fh gh}{ce dg dg ef}
\coeffentry{410}{\tfrac{11}{60480}}{aa bb fh gh}{ce dh dh eh}
\coeffentry{411}{-\tfrac{8237}{2419200}}{aa bb fh gh}{cf de dg eg}
\coeffentry{412}{-\tfrac{503}{604800}}{aa bb fh gh}{cg de dg eg}
\coeffentry{413}{\tfrac{1}{11520}}{aa bc bc dd}{cd ef eg eh}
\coeffentry{414}{-\tfrac{1}{5040}}{aa bc bc dd}{cd ef eg gh}
\coeffentry{415}{\tfrac{1}{4725}}{aa bc bc de}{bc df dg gh}
\coeffentry{416}{\tfrac{1}{3780}}{aa bc bc de}{bc dg dh ef}
\coeffentry{417}{\tfrac{17}{37800}}{aa bc bc ee}{bc de fg fh}
\coeffentry{418}{-\tfrac{1}{1890}}{aa bc bc ee}{cc de fg fh}
\coeffentry{419}{-\tfrac{7153}{7257600}}{aa bc bc ee}{de fg fh gh}
\coeffentry{420}{\tfrac{2623}{1209600}}{aa bc bc ee}{de fg gh hh}
\coeffentry{421}{\tfrac{23}{60480}}{aa bc bc ee}{de fh gh gh}
\coeffentry{422}{\tfrac{1}{21600}}{aa bc bc fg}{de dh eg hh}
\coeffentry{423}{-\tfrac{73}{1209600}}{aa bc bc fg}{de eg eh hh}
\coeffentry{424}{-\tfrac{8011}{1209600}}{aa bc bc fg}{de eg fh hh}
\coeffentry{425}{\tfrac{9299}{2419200}}{aa bc bc fg}{de eg gh hh}
\coeffentry{426}{-\tfrac{23}{25200}}{aa bc bc fg}{de eh gh hh}
\coeffentry{427}{\tfrac{229}{115200}}{aa bc bc fg}{dg ef eh hh}
\coeffentry{428}{-\tfrac{2039}{2419200}}{aa bc bc fg}{dg ef gh hh}
\coeffentry{429}{-\tfrac{221}{604800}}{aa bc bc fg}{dg eg eh hh}
\coeffentry{430}{-\tfrac{29}{60480}}{aa bc bc fg}{dg eg gh hh}
\coeffentry{431}{-\tfrac{29}{28800}}{aa bc bc fg}{dh eg eh hh}
\coeffentry{432}{\tfrac{161}{69120}}{aa bc bc fg}{dh eg fh hh}
\coeffentry{433}{\tfrac{67}{86400}}{aa bc bc fg}{dh eg gh hh}
\coeffentry{434}{\tfrac{323}{302400}}{aa bc bc fg}{dh eh gh hh}
\coeffentry{435}{\tfrac{101}{302400}}{aa bc bc gh}{de df ef fh}
\coeffentry{436}{-\tfrac{13}{48384}}{aa bc bc gh}{de ef fg fh}
\coeffentry{437}{\tfrac{77}{57600}}{aa bc bc gh}{de ef fh fh}
\coeffentry{438}{\tfrac{461}{1209600}}{aa bc bc gh}{df dh ef ef}
\coeffentry{439}{-\tfrac{517}{1209600}}{aa bc bc gh}{df dh ef eg}
\coeffentry{440}{-\tfrac{121}{302400}}{aa bc bc gh}{df dh ef eh}
\coeffentry{441}{-\tfrac{73}{151200}}{aa bc bc gh}{df ef ef eh}
\coeffentry{442}{\tfrac{8989}{2419200}}{aa bc bc gh}{df ef eg hh}
\coeffentry{443}{\tfrac{157}{241920}}{aa bc bc gh}{df ef eh fg}
\coeffentry{444}{\tfrac{67}{403200}}{aa bc bc gh}{df ef eh fh}
\coeffentry{445}{\tfrac{103}{241920}}{aa bc bc gh}{df ef eh gh}
\coeffentry{446}{-\tfrac{661}{161280}}{aa bc bc gh}{df ef eh hh}
\coeffentry{447}{\tfrac{43}{604800}}{aa bc bc gh}{df ef fg fh}
\coeffentry{448}{-\tfrac{23}{86400}}{aa bc bc gh}{df ef fg hh}
\coeffentry{449}{-\tfrac{641}{604800}}{aa bc bc gh}{df ef fh fh}
\coeffentry{450}{-\tfrac{23}{60480}}{aa bc bc gh}{df ef fh gh}
\coeffentry{451}{\tfrac{649}{1209600}}{aa bc bc gh}{df ef fh hh}
\coeffentry{452}{\tfrac{13}{604800}}{aa bc bc gh}{df eg eh fh}
\coeffentry{453}{\tfrac{8147}{2419200}}{aa bc bc gh}{df eh eh fg}
\coeffentry{454}{-\tfrac{487}{604800}}{aa bc bc gh}{df eh eh fh}
\coeffentry{455}{\tfrac{9}{12800}}{aa bc bc gh}{dg ef eh fh}
\coeffentry{456}{-\tfrac{8657}{2419200}}{aa bc bc gh}{dg ef fg hh}
\coeffentry{457}{-\tfrac{1753}{2419200}}{aa bc bc gh}{dg ef fh fh}
\coeffentry{458}{-\tfrac{1}{21600}}{aa bc bc gh}{dg eg fg hh}
\coeffentry{459}{-\tfrac{367}{172800}}{aa bc bc gh}{dh ef ef fg}
\coeffentry{460}{\tfrac{251}{403200}}{aa bc bc gh}{dh ef eh fh}
\coeffentry{461}{-\tfrac{1}{33600}}{aa bc bd cd}{cd ef eg eh}
\coeffentry{462}{-\tfrac{1}{50400}}{aa bc bd cd}{cd ef eg gh}
\coeffentry{463}{\tfrac{1}{9450}}{aa bc bd cd}{dd ef eg eh}
\coeffentry{464}{-\tfrac{1}{5600}}{aa bc bd cd}{dd ef eg gh}
\coeffentry{465}{-\tfrac{263}{29030400}}{aa bc bd cd}{eg eh fg fh}
\coeffentry{466}{\tfrac{419}{1209600}}{aa bc bd cd}{eg fg fh hh}
\coeffentry{467}{\tfrac{1471}{3628800}}{aa bc bd cd}{eg fg gh hh}
\coeffentry{468}{\tfrac{499}{7257600}}{aa bc bd cd}{eg fh fh gh}
\coeffentry{469}{-\tfrac{1423}{3628800}}{aa bc bd cd}{eh fg fh gh}
\coeffentry{470}{\tfrac{349}{2073600}}{aa bc bd cd}{eh fg gh gh}
\coeffentry{471}{-\tfrac{23}{151200}}{aa bc bd dd}{cc ef eg eh}
\coeffentry{472}{\tfrac{11}{50400}}{aa bc bd dd}{cc ef eg gh}
\coeffentry{473}{\tfrac{1}{6300}}{aa bc bd ef}{eh fg gg hh}
\coeffentry{474}{-\tfrac{1}{6300}}{aa bc bd ef}{fg fh gg hh}
\coeffentry{475}{-\tfrac{1}{3150}}{aa bc bd ef}{fh gg gh hh}
\coeffentry{476}{\tfrac{1}{6300}}{aa bc bd fg}{ef eg eh hh}
\coeffentry{477}{-\tfrac{1}{6300}}{aa bc bd fg}{ef gh gh hh}
\coeffentry{478}{\tfrac{1}{6300}}{aa bc bd fg}{eg eg eh hh}
\coeffentry{479}{-\tfrac{1}{6300}}{aa bc bd fg}{eg eh fg hh}
\coeffentry{480}{\tfrac{1}{6300}}{aa bc bd fg}{eg fh hh hh}
\coeffentry{481}{\tfrac{1}{6300}}{aa bc bd fg}{eh fh gh hh}
\coeffentry{482}{-\tfrac{1}{6300}}{aa bc bd fh}{ef eg gg hh}
\coeffentry{483}{\tfrac{1}{6300}}{aa bc bd fh}{ef fg gg hh}
\coeffentry{484}{-\tfrac{1}{6300}}{aa bc bd fh}{eg fg gg hh}
\coeffentry{485}{\tfrac{1}{6300}}{aa bc bd fh}{eg gg gh hh}
\coeffentry{486}{-\tfrac{1}{6300}}{aa bc bd gh}{ef fg fh fh}
\coeffentry{487}{\tfrac{1}{6300}}{aa bc bd gh}{ef fg gh hh}
\coeffentry{488}{-\tfrac{1}{16800}}{aa bc cd dd}{bc ef eg eh}
\coeffentry{489}{\tfrac{1}{10080}}{aa bc cd dd}{bc ef eg gh}
\coeffentry{490}{\tfrac{121}{67200}}{aa bc cd fg}{dd eg eh hh}
\coeffentry{491}{-\tfrac{583}{57600}}{aa bc cd fg}{dd eg fh hh}
\coeffentry{492}{\tfrac{131}{57600}}{aa bc cd fg}{dd eg gh hh}
\coeffentry{493}{-\tfrac{701}{241920}}{aa bc cd fg}{dd eh gh hh}
\coeffentry{494}{-\tfrac{359}{100800}}{aa bc cd gh}{dd ef ef fh}
\coeffentry{495}{\tfrac{97}{134400}}{aa bc cd gh}{dd ef eh fg}
\coeffentry{496}{-\tfrac{349}{1209600}}{aa bc cd gh}{dd ef eh fh}
\coeffentry{497}{-\tfrac{1}{100800}}{aa bc cd gh}{dd ef fg fh}
\coeffentry{498}{\tfrac{1541}{302400}}{aa bc cd gh}{dd ef fg hh}
\coeffentry{499}{\tfrac{1381}{151200}}{aa bc cd gh}{dd ef fh fh}
\coeffentry{500}{\tfrac{421}{604800}}{aa bc cd gh}{dd ef fh gh}
\coeffentry{501}{-\tfrac{15307}{1209600}}{aa bc cd gh}{dd ef fh hh}
\coeffentry{502}{-\tfrac{1657}{1209600}}{aa bc cd gh}{dd eg fg hh}
\coeffentry{503}{\tfrac{913}{241920}}{aa bc cd gh}{dd eg fh fh}
\coeffentry{504}{-\tfrac{271}{241920}}{aa bc cd gh}{dd eh fg fh}
\coeffentry{505}{\tfrac{313}{604800}}{aa bc cd gh}{dd eh fh fh}
\coeffentry{506}{\tfrac{29}{9450}}{aa bc dd fg}{cd eg eh hh}
\coeffentry{507}{-\tfrac{5561}{604800}}{aa bc dd fg}{cd eg fh hh}
\coeffentry{508}{\tfrac{1283}{134400}}{aa bc dd fg}{cd eg gh hh}
\coeffentry{509}{-\tfrac{367}{37800}}{aa bc dd fg}{cd eh gh hh}
\coeffentry{510}{\tfrac{2003}{604800}}{aa bc dd gh}{cd ef ef fh}
\coeffentry{511}{-\tfrac{221}{604800}}{aa bc dd gh}{cd ef eh fg}
\coeffentry{512}{-\tfrac{1819}{302400}}{aa bc dd gh}{cd ef eh fh}
\coeffentry{513}{\tfrac{6373}{1209600}}{aa bc dd gh}{cd ef fg hh}
\coeffentry{514}{\tfrac{3869}{604800}}{aa bc dd gh}{cd ef fh fh}
\coeffentry{515}{\tfrac{221}{604800}}{aa bc dd gh}{cd ef fh gh}
\coeffentry{516}{-\tfrac{23}{5376}}{aa bc dd gh}{cd ef fh hh}
\coeffentry{517}{-\tfrac{4271}{1209600}}{aa bc dd gh}{cd eg fg hh}
\coeffentry{518}{\tfrac{2431}{403200}}{aa bc dd gh}{cd eg fh fh}
\coeffentry{519}{\tfrac{1}{12600}}{aa bc de de}{bc de fg fh}
\coeffentry{520}{-\tfrac{219421}{3628800}}{aa bc de fg}{bc ee gh hh}
\coeffentry{521}{\tfrac{24589}{483840}}{aa bc de fg}{cc ee gh hh}
\coeffentry{522}{\tfrac{13043}{907200}}{aa bc de gh}{bc ee fg fh}
\coeffentry{523}{\tfrac{2959}{907200}}{aa bc de gh}{bc ee fg hh}
\coeffentry{524}{-\tfrac{36227}{1814400}}{aa bc de gh}{bc ee fh fh}
\coeffentry{525}{\tfrac{883}{1209600}}{aa bc de gh}{cc ee fg fh}
\coeffentry{526}{-\tfrac{1417}{120960}}{aa bc de gh}{cc ee fg hh}
\coeffentry{527}{-\tfrac{523}{120960}}{aa bc de gh}{cc ee fh fh}
\coeffentry{528}{-\tfrac{41381}{518400}}{aa bc dg ef}{bc eh fg hh}
\coeffentry{529}{\tfrac{3529}{86400}}{aa bc dg ef}{bc fg gh hh}
\coeffentry{530}{\tfrac{1331}{302400}}{aa bc dg ef}{bc fh gh hh}
\coeffentry{531}{\tfrac{389}{12096}}{aa bc dg ef}{cc eh fg hh}
\coeffentry{532}{-\tfrac{14717}{1209600}}{aa bc dg ef}{cc fg gh hh}
\coeffentry{533}{-\tfrac{2299}{403200}}{aa bc dg ef}{cc fh gh hh}
\coeffentry{534}{\tfrac{4627}{172800}}{aa bc ef gh}{bc dg dh fh}
\coeffentry{535}{-\tfrac{8789}{1209600}}{aa bc ef gh}{bc dg eh fh}
\coeffentry{536}{-\tfrac{5951}{86400}}{aa bc ef gh}{bc dg fg hh}
\coeffentry{537}{-\tfrac{3473}{302400}}{aa bc ef gh}{bc dg fh hh}
\coeffentry{538}{-\tfrac{8311}{518400}}{aa bc ef gh}{bc dh fg gh}
\coeffentry{539}{-\tfrac{2623}{134400}}{aa bc ef gh}{cc dg dh fh}
\coeffentry{540}{\tfrac{23}{100800}}{aa bc ef gh}{cc dg eh fh}
\coeffentry{541}{\tfrac{1453}{40320}}{aa bc ef gh}{cc dg fg hh}
\coeffentry{542}{\tfrac{2353}{80640}}{aa bc ef gh}{cc dg fh hh}
\coeffentry{543}{\tfrac{557}{1209600}}{aa bc ef gh}{cc dh fg gh}
\coeffentry{544}{-\tfrac{89}{134400}}{aa bc eg fg}{bc df dh hh}
\coeffentry{545}{-\tfrac{1667}{302400}}{aa bc eg fg}{bc df eh hh}
\coeffentry{546}{-\tfrac{67}{241920}}{aa bc eg fg}{bc df fh hh}
\coeffentry{547}{\tfrac{71}{17280}}{aa bc eg fg}{bc df gh hh}
\coeffentry{548}{-\tfrac{11}{37800}}{aa bc eg fg}{bc dg dh hh}
\coeffentry{549}{\tfrac{473}{201600}}{aa bc eg fg}{bc dg gh hh}
\coeffentry{550}{\tfrac{221}{403200}}{aa bc eg fg}{bc dh fh hh}
\coeffentry{551}{\tfrac{2161}{201600}}{aa bc eg fg}{cc df dh hh}
\coeffentry{552}{\tfrac{2927}{1209600}}{aa bc eg fg}{cc df eh hh}
\coeffentry{553}{-\tfrac{1349}{403200}}{aa bc eg fg}{cc df fh hh}
\coeffentry{554}{-\tfrac{461}{403200}}{aa bc eg fg}{cc df gh hh}
\coeffentry{555}{-\tfrac{83}{34560}}{aa bc eg fg}{cc dg dh hh}
\coeffentry{556}{\tfrac{43}{11520}}{aa bc eg fg}{cc dg gh hh}
\coeffentry{557}{\tfrac{2593}{1209600}}{aa bc eg fg}{cc dh fh hh}
\coeffentry{558}{\tfrac{2549}{1209600}}{aa bc eg fg}{cc dh gh hh}
\coeffentry{559}{\tfrac{11}{24192}}{aa bc eg fg}{cd dd fh hh}
\coeffentry{560}{-\tfrac{1087}{1209600}}{aa bc eg fg}{cd dd gh hh}
\coeffentry{561}{\tfrac{3529}{67200}}{aa bc eg fh}{bc df dg hh}
\coeffentry{562}{-\tfrac{5981}{518400}}{aa bc eg fh}{bc dg dh fh}
\coeffentry{563}{\tfrac{685}{72576}}{aa bc eg fh}{bc dg fg hh}
\coeffentry{564}{\tfrac{4999}{403200}}{aa bc eg fh}{bc dh dh fg}
\coeffentry{565}{\tfrac{58741}{3628800}}{aa bc eg fh}{bc dh fg gh}
\coeffentry{566}{\tfrac{23179}{725760}}{aa bc eg fh}{bc dh fg hh}
\coeffentry{567}{\tfrac{397}{80640}}{aa bc eg fh}{bc dh fh gh}
\coeffentry{568}{-\tfrac{59377}{1209600}}{aa bc eg fh}{cc df dg hh}
\coeffentry{569}{\tfrac{241}{11520}}{aa bc eg fh}{cc dg dh fh}
\coeffentry{570}{\tfrac{11}{201600}}{aa bc eg fh}{cc dg fg hh}
\coeffentry{571}{\tfrac{3077}{1209600}}{aa bc eg fh}{cc dh dh fg}
\coeffentry{572}{-\tfrac{16091}{1209600}}{aa bc eg fh}{cc dh fg gh}
\coeffentry{573}{-\tfrac{34207}{1209600}}{aa bc eg fh}{cc dh fg hh}
\coeffentry{574}{-\tfrac{1063}{172800}}{aa bc eg fh}{cc dh fh gh}
\coeffentry{575}{-\tfrac{1}{160}}{aa bc eg fh}{cd dd fg hh}
\coeffentry{576}{\tfrac{8861}{302400}}{aa bc eg fh}{cd dd fh gh}
\coeffentry{577}{-\tfrac{379}{75600}}{aa bc eh fg}{bc df dg dh}
\coeffentry{578}{\tfrac{34213}{3628800}}{aa bc eh fg}{bc df gh gh}
\coeffentry{579}{\tfrac{14183}{3628800}}{aa bc eh fg}{bc dg dh ef}
\coeffentry{580}{-\tfrac{209}{6300}}{aa bc eh fg}{bc dg dh fh}
\coeffentry{581}{\tfrac{1103}{129600}}{aa bc eh fg}{bc dg ef hh}
\coeffentry{582}{\tfrac{4433}{1814400}}{aa bc eh fg}{bc dg eh fh}
\coeffentry{583}{\tfrac{1763}{241920}}{aa bc eh fg}{cc df dg dh}
\coeffentry{584}{-\tfrac{9167}{604800}}{aa bc eh fg}{cc df gh gh}
\coeffentry{585}{-\tfrac{3667}{302400}}{aa bc eh fg}{cc dg dh ef}
\coeffentry{586}{\tfrac{16847}{604800}}{aa bc eh fg}{cc dg dh fh}
\coeffentry{587}{\tfrac{5041}{403200}}{aa bc eh fg}{cc dg ef hh}
\coeffentry{588}{\tfrac{437}{120960}}{aa bc eh fg}{cc dg eh fh}
\coeffentry{589}{\tfrac{1751}{604800}}{aa bc eh fg}{cc dh fh gh}
\coeffentry{590}{-\tfrac{3727}{1209600}}{aa bc eh fg}{cd dd eg fh}
\coeffentry{591}{\tfrac{11561}{604800}}{aa bc eh fg}{cd dd fh gh}
\coeffentry{592}{-\tfrac{6269}{345600}}{aa bc eh fg}{cd dd gh gh}
\coeffentry{593}{\tfrac{19}{60480}}{aa bc fg gh}{bc de ef hh}
\coeffentry{594}{-\tfrac{5989}{1209600}}{aa bc fg gh}{cc de ef hh}
\coeffentry{595}{\tfrac{899}{75600}}{aa bc fg gh}{cc de eg hh}
\coeffentry{596}{\tfrac{9343}{1209600}}{aa bc fg gh}{cc df ef hh}
\coeffentry{597}{-\tfrac{1}{403200}}{aa bc fg gh}{cc dg eg hh}
\coeffentry{598}{\tfrac{61}{15120}}{aa bc fg gh}{cd dd ef hh}
\coeffentry{599}{\tfrac{47}{201600}}{aa bc fg gh}{cd dd eg hh}
\coeffentry{600}{\tfrac{17}{40320}}{aa bc fh gh}{bc de dg ef}
\coeffentry{601}{\tfrac{323}{604800}}{aa bc fh gh}{bc de dh eg}
\coeffentry{602}{-\tfrac{19}{30240}}{aa bc fh gh}{bc de eg fg}
\coeffentry{603}{-\tfrac{1}{4200}}{aa bc fh gh}{bc de eg fh}
\coeffentry{604}{-\tfrac{257}{60480}}{aa bc fh gh}{cc de de eg}
\coeffentry{605}{-\tfrac{437}{201600}}{aa bc fh gh}{cc de dg ef}
\coeffentry{606}{-\tfrac{181}{1209600}}{aa bc fh gh}{cc de dh eg}
\coeffentry{607}{-\tfrac{341}{151200}}{aa bc fh gh}{cc de dh eh}
\coeffentry{608}{-\tfrac{193}{302400}}{aa bc fh gh}{cc de ef eg}
\coeffentry{609}{\tfrac{1537}{604800}}{aa bc fh gh}{cc de eg fg}
\coeffentry{610}{\tfrac{797}{403200}}{aa bc fh gh}{cc de eg fh}
\coeffentry{611}{-\tfrac{73}{604800}}{aa bc fh gh}{cc dh eh eh}
\coeffentry{612}{\tfrac{317}{120960}}{aa bc fh gh}{cd dd ef eg}
\coeffentry{613}{-\tfrac{19}{26880}}{aa bc fh gh}{cd dd eg eh}
\coeffentry{614}{-\tfrac{5987}{1209600}}{aa bc fh gh}{cd dd eg fg}
\coeffentry{615}{\tfrac{1}{768}}{aa bc fh gh}{cd dd eg fh}
\coeffentry{616}{-\tfrac{11}{50400}}{aa bc fh gh}{cd dd eh eh}
\coeffentry{617}{-\tfrac{1}{21600}}{aa bd cd cd}{bc ef eg eh}
\coeffentry{618}{\tfrac{11}{50400}}{aa bd cd cd}{bc ef eg gh}
\coeffentry{619}{-\tfrac{1}{21600}}{aa bd cd cd}{bd ef eg eh}
\coeffentry{620}{-\tfrac{1}{50400}}{aa bd cd cd}{bd ef eg gh}
\coeffentry{621}{\tfrac{701}{2419200}}{aa bd cd cd}{eg eh fg fh}
\coeffentry{622}{-\tfrac{701}{604800}}{aa bd cd cd}{eg fg fh hh}
\coeffentry{623}{-\tfrac{1}{134400}}{aa bd cd cd}{eg fg gh hh}
\coeffentry{624}{-\tfrac{1}{134400}}{aa bd cd cd}{eg fh fh gh}
\coeffentry{625}{\tfrac{1}{134400}}{aa bd cd cd}{eh fg fh gh}
\coeffentry{626}{\tfrac{11}{10080}}{aa bd cd fg}{bc eg eh hh}
\coeffentry{627}{\tfrac{1}{151200}}{aa bd cd fg}{bc eg fh hh}
\coeffentry{628}{\tfrac{313}{241920}}{aa bd cd fg}{bc eh gh hh}
\coeffentry{629}{-\tfrac{2069}{1209600}}{aa bd cd fg}{cd eg eh hh}
\coeffentry{630}{\tfrac{469}{172800}}{aa bd cd fg}{cd eg fh hh}
\coeffentry{631}{-\tfrac{337}{151200}}{aa bd cd fg}{cd eg gh hh}
\coeffentry{632}{\tfrac{1429}{1209600}}{aa bd cd fg}{cd eh gh hh}
\coeffentry{633}{-\tfrac{1}{151200}}{aa bd cd gh}{bc ef eh fg}
\coeffentry{634}{-\tfrac{1}{43200}}{aa bd cd gh}{bc ef fg fh}
\coeffentry{635}{-\tfrac{1399}{604800}}{aa bd cd gh}{bc ef fg hh}
\coeffentry{636}{\tfrac{2069}{1209600}}{aa bd cd gh}{cd ef eh fh}
\coeffentry{637}{-\tfrac{469}{172800}}{aa bd cd gh}{cd ef fg hh}
\coeffentry{638}{-\tfrac{1429}{1209600}}{aa bd cd gh}{cd ef fh fh}
\coeffentry{639}{\tfrac{1}{1890}}{aa bd cd gh}{cd eg fg hh}
\coeffentry{640}{-\tfrac{13}{1209600}}{aa bd cd gh}{cd eh fh fh}
\coeffentry{641}{\tfrac{61}{17280}}{aa bf ce dg}{df eg gh hh}
\coeffentry{642}{-\tfrac{23}{172800}}{aa bf ce dg}{dh eg fg hh}
\coeffentry{643}{\tfrac{2539}{403200}}{aa bg cf de}{ch dg ef hh}
\coeffentry{644}{-\tfrac{1453}{172800}}{aa bg cf de}{dg ef gh hh}
\coeffentry{645}{\tfrac{3677}{241920}}{aa bg cf de}{dh eg fg hh}
\coeffentry{646}{-\tfrac{739}{86400}}{aa bg cf de}{dh eg fh hh}
\coeffentry{647}{-\tfrac{307}{24192}}{aa bg cf de}{eg fg fh hh}
\coeffentry{648}{-\tfrac{1}{640}}{aa bg cf de}{eg fg gh hh}
\coeffentry{649}{\tfrac{59}{30240}}{aa bg cf de}{eh fg gh hh}
\coeffentry{650}{-\tfrac{589}{604800}}{aa bg cf de}{eh fh gh hh}
\coeffentry{651}{\tfrac{1}{21600}}{aa cd cd cd}{bd ef eg eh}
\coeffentry{652}{\tfrac{1}{50400}}{aa cd cd cd}{bd ef eg gh}
\coeffentry{653}{\tfrac{1121}{302400}}{aa cd eg fg}{bc bd fh hh}
\coeffentry{654}{-\tfrac{187}{60480}}{aa cd eg fg}{bc bd gh hh}
\coeffentry{655}{-\tfrac{1223}{302400}}{aa cd eg fg}{bc dd fh hh}
\coeffentry{656}{\tfrac{421}{1209600}}{aa cd eg fg}{bc dd gh hh}
\coeffentry{657}{-\tfrac{127}{302400}}{aa cd eg fg}{bd bd fh hh}
\coeffentry{658}{\tfrac{421}{120960}}{aa cd eg fg}{bd bd gh hh}
\coeffentry{659}{-\tfrac{83}{75600}}{aa cd eg fg}{be df fh hh}
\coeffentry{660}{\tfrac{1699}{241920}}{aa cd eg fh}{bc bd fg hh}
\coeffentry{661}{-\tfrac{247}{80640}}{aa cd eg fh}{bc bd fh gh}
\coeffentry{662}{-\tfrac{83}{50400}}{aa cd eg fh}{bc dd fg hh}
\coeffentry{663}{\tfrac{341}{100800}}{aa cd eg fh}{bc dd fh gh}
\coeffentry{664}{-\tfrac{3487}{1209600}}{aa cd eg fh}{bd bd fg hh}
\coeffentry{665}{-\tfrac{547}{57600}}{aa cd eg fh}{bd bd fh gh}
\coeffentry{666}{\tfrac{811}{1209600}}{aa cd eh fg}{bc bd eg fh}
\coeffentry{667}{-\tfrac{197}{21600}}{aa cd eh fg}{bc bd fh gh}
\coeffentry{668}{\tfrac{529}{80640}}{aa cd eh fg}{bc bd gh gh}
\coeffentry{669}{-\tfrac{79}{89600}}{aa cd eh fg}{bc dd eg fh}
\coeffentry{670}{\tfrac{5171}{201600}}{aa cd eh fg}{bc dd fh gh}
\coeffentry{671}{-\tfrac{10937}{1209600}}{aa cd eh fg}{bc dd gh gh}
\coeffentry{672}{\tfrac{463}{1209600}}{aa cd eh fg}{bd bd eg fh}
\coeffentry{673}{-\tfrac{29}{403200}}{aa cd eh fg}{bd bd fh gh}
\coeffentry{674}{\tfrac{207}{22400}}{aa cd eh fg}{bd bd gh gh}
\coeffentry{675}{-\tfrac{2021}{1209600}}{aa cd fg gh}{bc bd ef hh}
\coeffentry{676}{\tfrac{3347}{1209600}}{aa cd fg gh}{bc bd eg hh}
\coeffentry{677}{-\tfrac{4849}{1209600}}{aa cd fg gh}{bc dd ef hh}
\coeffentry{678}{-\tfrac{857}{1209600}}{aa cd fg gh}{bd bd ef hh}
\coeffentry{679}{-\tfrac{109}{37800}}{aa cd fg gh}{bd bd eg hh}
\coeffentry{680}{-\tfrac{3091}{1209600}}{aa cd fh gh}{bc bd eg eh}
\coeffentry{681}{-\tfrac{1}{25200}}{aa cd fh gh}{bc bd eh eh}
\coeffentry{682}{\tfrac{47}{12096}}{aa cd fh gh}{bc dd eg eh}
\coeffentry{683}{-\tfrac{1}{1260}}{aa cd fh gh}{bd bd eg eh}
\coeffentry{684}{\tfrac{47}{8400}}{aa ce df gh}{bf dg eg hh}
\coeffentry{685}{\tfrac{3851}{403200}}{aa ce df gh}{bf dh eg gh}
\coeffentry{686}{-\tfrac{1289}{201600}}{aa ce dg fg}{bf bh de hh}
\coeffentry{687}{-\tfrac{31}{134400}}{aa ce dg fg}{bf de fh hh}
\coeffentry{688}{-\tfrac{59}{43200}}{aa ce dg fg}{bf de gh hh}
\coeffentry{689}{-\tfrac{131}{15120}}{aa ce dg fg}{bf dh ef hh}
\coeffentry{690}{\tfrac{53}{604800}}{aa ce dg fg}{bf dh eg hh}
\coeffentry{691}{\tfrac{1597}{1209600}}{aa ce dg fg}{bf dh eh hh}
\coeffentry{692}{-\tfrac{71}{44800}}{aa ce dg fh}{bf bg de hh}
\coeffentry{693}{-\tfrac{193}{151200}}{aa ce dg fh}{bf bg dh eh}
\coeffentry{694}{\tfrac{1123}{1209600}}{aa ce dh fg}{bf bg bh de}
\coeffentry{695}{\tfrac{4673}{302400}}{aa ce dh fg}{bf de gh gh}
\coeffentry{696}{\tfrac{893}{80640}}{aa ce dh fg}{bf dg eg hh}
\coeffentry{697}{-\tfrac{1447}{604800}}{aa ce dh fg}{bf dg eh gh}
\coeffentry{698}{-\tfrac{919}{604800}}{aa ce dh fg}{bf dh eg gh}
\coeffentry{699}{-\tfrac{19}{16128}}{aa cf de gh}{bf bg eg hh}
\coeffentry{700}{-\tfrac{167}{12096}}{aa cf de gh}{bf bh eg gh}
\coeffentry{701}{-\tfrac{43}{10080}}{aa cf de gh}{bg bh dh ef}
\coeffentry{702}{\tfrac{3583}{241920}}{aa cf de gh}{bg bh eh fg}
\coeffentry{703}{-\tfrac{11}{3456}}{aa cf de gh}{bg bh eh fh}
\coeffentry{704}{\tfrac{247}{201600}}{aa cf de gh}{bg ch dh ef}
\coeffentry{705}{\tfrac{359}{100800}}{aa cf de gh}{bg dg ef hh}
\coeffentry{706}{-\tfrac{2177}{172800}}{aa cf de gh}{bg dh ef hh}
\coeffentry{707}{\tfrac{2813}{1209600}}{aa cf de gh}{bg dh eh fh}
\coeffentry{708}{\tfrac{17791}{1209600}}{aa cf de gh}{bg ef fg hh}
\coeffentry{709}{\tfrac{709}{172800}}{aa cf de gh}{bg eg fg hh}
\coeffentry{710}{-\tfrac{101}{7560}}{aa cf de gh}{bg eh fg hh}
\coeffentry{711}{-\tfrac{37}{75600}}{aa cf de gh}{bg eh fh gh}
\coeffentry{712}{\tfrac{47309}{1209600}}{aa cf de gh}{bg eh fh hh}
\coeffentry{713}{\tfrac{8081}{1209600}}{aa cf de gh}{bh bh eg fg}
\coeffentry{714}{-\tfrac{3223}{241920}}{aa cf de gh}{bh dg ef gh}
\coeffentry{715}{\tfrac{181}{22400}}{aa cf de gh}{bh dg eg fh}
\coeffentry{716}{-\tfrac{613}{151200}}{aa cf de gh}{bh dh eg fg}
\coeffentry{717}{-\tfrac{11849}{1209600}}{aa cf de gh}{bh eg fg hh}
\coeffentry{718}{-\tfrac{3487}{1209600}}{aa cf de gh}{bh eh fg gh}
\coeffentry{719}{-\tfrac{173}{50400}}{aa cf dg eg}{be bf dh hh}
\coeffentry{720}{\tfrac{1657}{604800}}{aa cf dg eg}{be df fh hh}
\coeffentry{721}{\tfrac{449}{1209600}}{aa cf dg eg}{bg dh ef hh}
\coeffentry{722}{-\tfrac{377}{57600}}{aa cf dg eg}{bh dh ef hh}
\coeffentry{723}{-\tfrac{289}{43200}}{aa cf dg eh}{bh dg ef gh}
\coeffentry{724}{-\tfrac{3901}{403200}}{aa cf dh eg}{bg bh de fh}
\coeffentry{725}{-\tfrac{3223}{302400}}{aa cf dh eg}{bg bh df eh}
\coeffentry{726}{\tfrac{8791}{302400}}{aa cf dh eg}{bg bh dh ef}
\coeffentry{727}{\tfrac{619}{86400}}{aa cf dh eg}{bg de fg hh}
\coeffentry{728}{-\tfrac{11929}{1209600}}{aa cf dh eg}{bg de fh hh}
\coeffentry{729}{-\tfrac{8461}{1209600}}{aa cf dh eg}{bg dg ef hh}
\coeffentry{730}{-\tfrac{761}{604800}}{aa cf dh eg}{bg dh eh fh}
\coeffentry{731}{\tfrac{11951}{1209600}}{aa cf dh eg}{bh de fg gh}
\coeffentry{732}{\tfrac{193}{48384}}{aa cf dh eg}{bh dg ef gh}
\coeffentry{733}{\tfrac{329}{21600}}{aa cf dh eg}{bh dg eg fh}
\coeffentry{734}{\tfrac{421}{67200}}{aa cf dh eg}{bh dg eh fg}
\coeffentry{735}{\tfrac{137}{134400}}{aa cg de fg}{be bf fh hh}
\coeffentry{736}{\tfrac{5389}{1209600}}{aa cg de fg}{be df fh hh}
\coeffentry{737}{\tfrac{1}{9600}}{aa cg de fg}{bf bh eg hh}
\coeffentry{738}{-\tfrac{937}{403200}}{aa cg de fg}{bf bh eh hh}
\coeffentry{739}{\tfrac{1}{151200}}{aa cg de fg}{bf ce dh hh}
\coeffentry{740}{\tfrac{59}{13440}}{aa cg de fg}{bf dh ef hh}
\coeffentry{741}{\tfrac{1}{8400}}{aa cg de fg}{bf dh eg hh}
\coeffentry{742}{\tfrac{3037}{1209600}}{aa cg de fg}{bf ef eh hh}
\coeffentry{743}{-\tfrac{439}{172800}}{aa cg de fg}{bf ef fh hh}
\coeffentry{744}{\tfrac{931}{172800}}{aa cg de fg}{bf ef gh hh}
\coeffentry{745}{-\tfrac{1}{9600}}{aa cg de fg}{bf eg fh hh}
\coeffentry{746}{\tfrac{4}{945}}{aa cg de fg}{bf eg gh hh}
\coeffentry{747}{\tfrac{533}{302400}}{aa cg de fg}{bf eh fh hh}
\coeffentry{748}{-\tfrac{173}{57600}}{aa cg de fg}{bf eh gh hh}
\coeffentry{749}{-\tfrac{103}{302400}}{aa cg de fg}{bg ef fh hh}
\coeffentry{750}{\tfrac{409}{172800}}{aa cg de fg}{bh ef fh hh}
\coeffentry{751}{\tfrac{3343}{1209600}}{aa cg de fh}{bf bg bh eh}
\coeffentry{752}{-\tfrac{12521}{1209600}}{aa cg de fh}{bf bg dh eh}
\coeffentry{753}{-\tfrac{1483}{86400}}{aa cg de fh}{bf bg eg hh}
\coeffentry{754}{\tfrac{8089}{1209600}}{aa cg de fh}{bf bg eh hh}
\coeffentry{755}{-\tfrac{8717}{604800}}{aa cg de fh}{bf bh eg gh}
\coeffentry{756}{\tfrac{473}{604800}}{aa cg de fh}{bf eg gh hh}
\coeffentry{757}{-\tfrac{46159}{1209600}}{aa cg de fh}{bf eh gh gh}
\coeffentry{758}{\tfrac{1279}{151200}}{aa cg de fh}{bh ef fg gh}
\coeffentry{759}{\tfrac{3149}{403200}}{aa cg de fh}{bh eg fg fh}
\coeffentry{760}{-\tfrac{13009}{1209600}}{aa cg de fh}{bh eh fg fg}
\coeffentry{761}{\tfrac{1}{33600}}{aa cg df eg}{be bf bh hh}
\coeffentry{762}{\tfrac{61}{43200}}{aa cg df eg}{be bf dh hh}
\coeffentry{763}{-\tfrac{653}{604800}}{aa cg df eg}{be bf fh hh}
\coeffentry{764}{\tfrac{881}{1209600}}{aa cg df eg}{be bf gh hh}
\coeffentry{765}{-\tfrac{667}{67200}}{aa cg df eg}{be fg fh hh}
\coeffentry{766}{\tfrac{1213}{604800}}{aa cg df eg}{bf bh ef hh}
\coeffentry{767}{-\tfrac{161}{172800}}{aa cg df eg}{bf bh eh hh}
\coeffentry{768}{-\tfrac{39}{8960}}{aa cg df eg}{bf ef eh hh}
\coeffentry{769}{-\tfrac{463}{604800}}{aa cg df eg}{bf ef fh hh}
\coeffentry{770}{\tfrac{983}{403200}}{aa cg df eg}{bf ef gh hh}
\coeffentry{771}{\tfrac{503}{241920}}{aa cg df eg}{bf eh fg hh}
\coeffentry{772}{-\tfrac{23}{60480}}{aa cg df eg}{bf eh fh hh}
\coeffentry{773}{-\tfrac{1061}{302400}}{aa cg df eg}{bg ef fh hh}
\coeffentry{774}{\tfrac{667}{604800}}{aa cg df eg}{bg ef gh hh}
\coeffentry{775}{-\tfrac{11}{151200}}{aa cg df eg}{bg eh fg hh}
\coeffentry{776}{\tfrac{379}{172800}}{aa cg df eg}{bg eh fh hh}
\coeffentry{777}{-\tfrac{19}{16800}}{aa cg df eg}{bh ef gh hh}
\coeffentry{778}{\tfrac{41}{201600}}{aa cg df eg}{bh eh fh hh}
\coeffentry{779}{-\tfrac{137}{86400}}{aa cg df eh}{be bf bg hh}
\coeffentry{780}{\tfrac{37}{60480}}{aa cg df eh}{bf bg bh eh}
\coeffentry{781}{\tfrac{2077}{403200}}{aa cg df eh}{bf bg eg hh}
\coeffentry{782}{\tfrac{359}{57600}}{aa cg df eh}{bg bh bh ef}
\coeffentry{783}{\tfrac{619}{120960}}{aa cg df eh}{bg bh eh fh}
\coeffentry{784}{\tfrac{2033}{241920}}{aa cg df eh}{bg ef fg hh}
\coeffentry{785}{-\tfrac{31}{172800}}{aa cg df eh}{bg eg fg hh}
\coeffentry{786}{-\tfrac{77}{43200}}{aa cg df eh}{bg eg fh hh}
\coeffentry{787}{\tfrac{953}{302400}}{aa cg df eh}{bg eh fh gh}
\coeffentry{788}{\tfrac{719}{151200}}{aa cg df eh}{bh bh eg fg}
\coeffentry{789}{\tfrac{23}{15120}}{aa cg df eh}{bh eg fg fh}
\coeffentry{790}{-\tfrac{17629}{604800}}{aa cg df eh}{bh eg fg hh}
\coeffentry{791}{-\tfrac{3457}{1209600}}{aa cg df eh}{bh eh fg gh}
\coeffentry{792}{-\tfrac{19}{25200}}{aa cg df eh}{bh eh fh gh}
\coeffentry{793}{\tfrac{149}{75600}}{aa cg dg ef}{be df fh hh}
\coeffentry{794}{-\tfrac{47}{26880}}{aa cg dg ef}{be fg fh hh}
\coeffentry{795}{-\tfrac{79}{100800}}{aa cg dg ef}{bf bg eh hh}
\coeffentry{796}{-\tfrac{1453}{1209600}}{aa cg dg ef}{bf bh de hh}
\coeffentry{797}{\tfrac{103}{302400}}{aa cg dg ef}{bf bh eg hh}
\coeffentry{798}{\tfrac{47}{151200}}{aa cg dg ef}{bf ch de hh}
\coeffentry{799}{\tfrac{451}{403200}}{aa cg dg ef}{bf de fh hh}
\coeffentry{800}{-\tfrac{1843}{1209600}}{aa cg dg ef}{bf de gh hh}
\coeffentry{801}{-\tfrac{307}{604800}}{aa cg dg ef}{bf dh eg hh}
\coeffentry{802}{-\tfrac{281}{241920}}{aa cg dg ef}{bf dh eh hh}
\coeffentry{803}{\tfrac{7}{14400}}{aa cg dg ef}{bf eg gh hh}
\coeffentry{804}{\tfrac{1}{6048}}{aa cg dg ef}{bf eh gh hh}
\coeffentry{805}{\tfrac{41}{37800}}{aa cg dg ef}{bf fg fh hh}
\coeffentry{806}{-\tfrac{2641}{604800}}{aa cg dg ef}{bf fg gh hh}
\coeffentry{807}{\tfrac{673}{302400}}{aa cg dg ef}{bf fh gh hh}
\coeffentry{808}{-\tfrac{13}{9600}}{aa cg dg ef}{bg eh fg hh}
\coeffentry{809}{\tfrac{31}{8400}}{aa cg dg ef}{bg fg fh hh}
\coeffentry{810}{\tfrac{11}{60480}}{aa cg dg ef}{bg fg gh hh}
\coeffentry{811}{-\tfrac{379}{172800}}{aa cg dg ef}{bg fh gh hh}
\coeffentry{812}{\tfrac{97}{302400}}{aa cg dg ef}{bh df eh hh}
\coeffentry{813}{\tfrac{443}{1209600}}{aa cg dg ef}{bh eh fg hh}
\coeffentry{814}{-\tfrac{743}{302400}}{aa cg dg ef}{bh fg fh hh}
\coeffentry{815}{-\tfrac{47}{604800}}{aa cg dg ef}{bh fh gh hh}
\coeffentry{816}{-\tfrac{8231}{1209600}}{aa cg dh ef}{bf dg eg hh}
\coeffentry{817}{-\tfrac{233}{22400}}{aa cg dh ef}{bg df eg hh}
\coeffentry{818}{-\tfrac{13}{4032}}{aa cg dh ef}{bh bh df eg}
\coeffentry{819}{-\tfrac{4519}{403200}}{aa cg dh ef}{bh df eg gh}
\coeffentry{820}{-\tfrac{47}{172800}}{aa cg dh ef}{bh df eh gh}
\coeffentry{821}{-\tfrac{5}{5376}}{aa cg dh ef}{bh dg eh fg}
\coeffentry{822}{-\tfrac{277}{302400}}{aa cg dh ef}{bh dg eh fh}
\coeffentry{823}{\tfrac{233}{1209600}}{aa cg dh ef}{bh dh eh fg}
\coeffentry{824}{-\tfrac{4019}{1209600}}{aa ch de fg}{bf bg bh eh}
\coeffentry{825}{\tfrac{7151}{604800}}{aa ch de fg}{bf dh eg gh}
\coeffentry{826}{-\tfrac{1093}{604800}}{aa ch de fg}{bf eh gh gh}
\coeffentry{827}{-\tfrac{2099}{1209600}}{aa ch de fg}{bg cf dh eh}
\coeffentry{828}{-\tfrac{4051}{604800}}{aa ch df eg}{bf bh eg gh}
\coeffentry{829}{\tfrac{4433}{403200}}{aa ch df eg}{bg bh dh ef}
\coeffentry{830}{\tfrac{1037}{172800}}{aa ch df eg}{bg bh eh fh}
\coeffentry{831}{-\tfrac{1307}{604800}}{aa ch df eg}{bg ch dh ef}
\coeffentry{832}{\tfrac{211}{151200}}{aa ch df eg}{bg dh ef hh}
\coeffentry{833}{\tfrac{1859}{1209600}}{aa ch df eg}{bh cg dh ef}
\coeffentry{834}{-\tfrac{379}{75600}}{aa ch df eg}{bh eg fh gh}
\coeffentry{835}{\tfrac{589}{604800}}{aa ch dg ef}{be bf bg bh}
\coeffentry{836}{\tfrac{15907}{1209600}}{aa ch dg ef}{be fg fh gh}
\coeffentry{837}{-\tfrac{4763}{1209600}}{aa ch dg ef}{bf bg dh eh}
\coeffentry{838}{\tfrac{11}{5376}}{aa ch dg ef}{bf bh eg gh}
\coeffentry{839}{-\tfrac{33}{8960}}{aa ch dg ef}{bf dg eh gh}
\coeffentry{840}{-\tfrac{3679}{241920}}{aa ch dg ef}{bf dh eg gh}
\coeffentry{841}{-\tfrac{607}{1209600}}{aa ch dg ef}{bg bh cf de}
\coeffentry{842}{\tfrac{1093}{302400}}{aa ch dg ef}{bg bh fg fh}
\coeffentry{843}{-\tfrac{481}{1209600}}{aa ch dg ef}{bg cf de hh}
\coeffentry{844}{-\tfrac{1}{6400}}{aa ch dg ef}{bg dh eh fh}
\coeffentry{845}{-\tfrac{1037}{151200}}{aa ch dg ef}{bg eh fg hh}
\coeffentry{846}{\tfrac{139}{302400}}{aa ch dg ef}{bg eh fh gh}
\coeffentry{847}{\tfrac{701}{86400}}{aa ch dg ef}{bh dg eh fg}
\coeffentry{848}{-\tfrac{7}{14400}}{aa ch dg ef}{bh dh eh fg}
\coeffentry{849}{\tfrac{53}{50400}}{aa ch dg ef}{bh eh fh gh}
\coeffentry{850}{-\tfrac{1}{120960}}{aa de df dg}{bb bc cc dh}
\coeffentry{851}{-\tfrac{1}{120960}}{aa de df dg}{bc bc bc dh}
\coeffentry{852}{-\tfrac{1003}{201600}}{aa de fg gh}{bc cf ef hh}
\coeffentry{853}{\tfrac{869}{302400}}{aa de fg gh}{bc cf eg hh}
\coeffentry{854}{\tfrac{643}{604800}}{aa de fg gh}{bc cf eh hh}
\coeffentry{855}{\tfrac{185}{48384}}{aa de fg gh}{bf ce cf hh}
\coeffentry{856}{-\tfrac{181}{19200}}{aa de fg gh}{bf ce cg hh}
\coeffentry{857}{\tfrac{1193}{604800}}{aa de fg gh}{bf ce df hh}
\coeffentry{858}{\tfrac{117}{44800}}{aa de fg gh}{bf cf ef hh}
\coeffentry{859}{\tfrac{947}{1209600}}{aa de fg gh}{bf cf eg hh}
\coeffentry{860}{-\tfrac{29}{10800}}{aa de fg gh}{bf cf eh hh}
\coeffentry{861}{-\tfrac{1219}{302400}}{aa de fg gh}{bg cf ef hh}
\coeffentry{862}{\tfrac{377}{302400}}{aa de fg gh}{bg cf eg hh}
\coeffentry{863}{\tfrac{4859}{604800}}{aa de fg gh}{bg cf eh hh}
\coeffentry{864}{\tfrac{113}{75600}}{aa de fg hh}{bc cf eg gh}
\coeffentry{865}{-\tfrac{1511}{604800}}{aa de fg hh}{bg cf dh eg}
\coeffentry{866}{-\tfrac{4337}{604800}}{aa de fg hh}{bg cf eg fh}
\coeffentry{867}{-\tfrac{79}{1209600}}{aa de fh gh}{bc bg cf eg}
\coeffentry{868}{-\tfrac{143}{604800}}{aa de fh gh}{bc bg cf eh}
\coeffentry{869}{-\tfrac{1433}{604800}}{aa de fh gh}{bc cf cg eg}
\coeffentry{870}{\tfrac{529}{241920}}{aa de fh gh}{bc cf cg eh}
\coeffentry{871}{\tfrac{907}{201600}}{aa de fh gh}{bc cf dg eg}
\coeffentry{872}{-\tfrac{1913}{1209600}}{aa de fh gh}{bc cf eg gh}
\coeffentry{873}{-\tfrac{269}{120960}}{aa de fh gh}{bc cg ef fg}
\coeffentry{874}{\tfrac{1}{7560}}{aa de fh gh}{bf ce cg dg}
\coeffentry{875}{\tfrac{3331}{1209600}}{aa de fh gh}{bf ce cg gh}
\coeffentry{876}{\tfrac{4129}{1209600}}{aa de fh gh}{bg ce cf fg}
\coeffentry{877}{\tfrac{73}{16800}}{aa de fh gh}{bg ce df fg}
\coeffentry{878}{\tfrac{367}{60480}}{aa de fh gh}{bg cf cf eg}
\coeffentry{879}{-\tfrac{5377}{1209600}}{aa de fh gh}{bg cf cg eg}
\coeffentry{880}{-\tfrac{2377}{1209600}}{aa de fh gh}{bg cf cg eh}
\coeffentry{881}{-\tfrac{4813}{1209600}}{aa de fh gh}{bg cf ch eg}
\coeffentry{882}{\tfrac{67}{8064}}{aa de fh gh}{bg cf ch eh}
\coeffentry{883}{\tfrac{107}{80640}}{aa de fh gh}{bg cf dg ef}
\coeffentry{884}{\tfrac{9}{8960}}{aa de fh gh}{bg cf dg eg}
\coeffentry{885}{-\tfrac{1753}{1209600}}{aa de fh gh}{bg cf dh eg}
\coeffentry{886}{-\tfrac{247}{201600}}{aa de fh gh}{bg cf dh eh}
\coeffentry{887}{-\tfrac{139}{33600}}{aa de fh gh}{bg cf ef eg}
\coeffentry{888}{\tfrac{271}{100800}}{aa de fh gh}{bg cf eg fg}
\coeffentry{889}{\tfrac{2749}{1209600}}{aa de fh gh}{bg cf eh fg}
\coeffentry{890}{-\tfrac{17}{2400}}{aa de fh gh}{bg cf eh gh}
\coeffentry{891}{-\tfrac{1}{432}}{aa de fh gh}{bg cg df ef}
\coeffentry{892}{\tfrac{1457}{604800}}{aa de fh gh}{bg cg ef fg}
\coeffentry{893}{-\tfrac{43}{50400}}{aa de fh gh}{bg cg ef fh}
\coeffentry{894}{\tfrac{2369}{1209600}}{aa de fh gh}{bh cf cg eg}
\coeffentry{895}{-\tfrac{359}{100800}}{aa de fh gh}{bh cf cg eh}
\coeffentry{896}{\tfrac{3091}{1209600}}{aa de fh gh}{bh cf dg eg}
\coeffentry{897}{-\tfrac{47}{15120}}{aa de fh gh}{bh cf eg gh}
\coeffentry{898}{-\tfrac{5}{2304}}{aa de fh gh}{bh cg ef fg}
\coeffentry{899}{-\tfrac{23}{4725}}{aa df eg fg}{bc ce gh hh}
\coeffentry{900}{\tfrac{13}{86400}}{aa df eg fg}{bc cg eh hh}
\coeffentry{901}{\tfrac{37}{302400}}{aa df eg fg}{be ce gh hh}
\coeffentry{902}{\tfrac{131}{1209600}}{aa df eg fg}{be cg ch hh}
\coeffentry{903}{\tfrac{1}{18900}}{aa df eg fg}{bg ce ch hh}
\coeffentry{904}{-\tfrac{1}{5400}}{aa df eg fg}{bg ce dh hh}
\coeffentry{905}{\tfrac{11}{302400}}{aa df eg fg}{bg ce eh hh}
\coeffentry{906}{\tfrac{6241}{1209600}}{aa df eg fg}{bg ce fh hh}
\coeffentry{907}{-\tfrac{1657}{302400}}{aa df eg fg}{bg ce gh hh}
\coeffentry{908}{\tfrac{1}{30240}}{aa df eg fg}{bg cg eh hh}
\coeffentry{909}{-\tfrac{1}{67200}}{aa df eg fg}{bh ce gh hh}
\coeffentry{910}{\tfrac{19}{86400}}{aa df eg fg}{bh cg eh hh}
\coeffentry{911}{\tfrac{79}{43200}}{aa df eg gh}{bc bc ef hh}
\coeffentry{912}{\tfrac{803}{604800}}{aa df eg gh}{bc bf ce hh}
\coeffentry{913}{-\tfrac{3223}{1209600}}{aa df eg gh}{bc ce cf hh}
\coeffentry{914}{\tfrac{81}{44800}}{aa df eg gh}{bc cf ef hh}
\coeffentry{915}{-\tfrac{1}{60480}}{aa df eg gh}{bf ce cf hh}
\coeffentry{916}{-\tfrac{317}{172800}}{aa df eg gh}{bf ce dg hh}
\coeffentry{917}{\tfrac{127}{67200}}{aa df eg gh}{bf ce dh hh}
\coeffentry{918}{-\tfrac{19}{9600}}{aa df eg gh}{bf ce ef hh}
\coeffentry{919}{\tfrac{209}{43200}}{aa df eg gh}{bf ce fg hh}
\coeffentry{920}{-\tfrac{1}{10800}}{aa df eg gh}{bf cf ef hh}
\coeffentry{921}{-\tfrac{3247}{1209600}}{aa df eg gh}{bg ce cf hh}
\coeffentry{922}{\tfrac{647}{403200}}{aa df eg gh}{bg cf ef hh}
\coeffentry{923}{-\tfrac{11}{22400}}{aa df eg hh}{bc cg dh ef}
\coeffentry{924}{-\tfrac{533}{604800}}{aa df eg hh}{bc cg eh fh}
\coeffentry{925}{\tfrac{377}{57600}}{aa df eh gh}{bc bc cg ef}
\coeffentry{926}{\tfrac{53}{17280}}{aa df eh gh}{bc bg ce cf}
\coeffentry{927}{-\tfrac{41}{37800}}{aa df eh gh}{bc bg cf eg}
\coeffentry{928}{-\tfrac{1537}{1209600}}{aa df eh gh}{bc cf cg eg}
\coeffentry{929}{\tfrac{19}{30240}}{aa df eh gh}{bc cg ef fg}
\coeffentry{930}{-\tfrac{143}{604800}}{aa df eh gh}{bc cg eg fh}
\coeffentry{931}{-\tfrac{47}{604800}}{aa df eh gh}{bc cg eh fh}
\coeffentry{932}{-\tfrac{4939}{1209600}}{aa df eh gh}{bf ce cg dg}
\coeffentry{933}{-\tfrac{163}{43200}}{aa df eh gh}{bg ce cf dg}
\coeffentry{934}{\tfrac{17}{86400}}{aa df eh gh}{bg ce cf dh}
\coeffentry{935}{\tfrac{5129}{1209600}}{aa df eh gh}{bg ce cf fg}
\coeffentry{936}{\tfrac{1019}{241920}}{aa df eh gh}{bg ce df fg}
\coeffentry{937}{-\tfrac{527}{201600}}{aa df eh gh}{bg ce fg fg}
\coeffentry{938}{\tfrac{317}{134400}}{aa df eh gh}{bg ce fg fh}
\coeffentry{939}{-\tfrac{1}{48384}}{aa df eh gh}{bg cf cg ef}
\coeffentry{940}{-\tfrac{3631}{1209600}}{aa df eh gh}{bg cf eg fg}
\coeffentry{941}{-\tfrac{1937}{604800}}{aa df eh gh}{bg cg ef ef}
\coeffentry{942}{\tfrac{3601}{1209600}}{aa df eh gh}{bg cg ef fg}
\coeffentry{943}{\tfrac{1223}{1209600}}{aa df eh gh}{bg cg ef fh}
\coeffentry{944}{-\tfrac{17}{60480}}{aa df eh gh}{bg cg eg fg}
\coeffentry{945}{-\tfrac{803}{403200}}{aa df eh gh}{bg cg eg fh}
\coeffentry{946}{\tfrac{109}{1209600}}{aa df eh gh}{bg cg eh fg}
\coeffentry{947}{\tfrac{821}{201600}}{aa df eh gh}{bg cg eh fh}
\coeffentry{948}{\tfrac{1139}{604800}}{aa df eh gh}{bh cg ch ef}
\coeffentry{949}{-\tfrac{2}{4725}}{aa df eh gh}{bh cg ef fg}
\coeffentry{950}{-\tfrac{107}{302400}}{aa df eh gh}{bh cg eh fg}
\coeffentry{951}{-\tfrac{11}{60480}}{aa df eh gh}{bh cg eh fh}
\coeffentry{952}{-\tfrac{1}{14400}}{aa dg ef fg}{bc bc eh hh}
\coeffentry{953}{\tfrac{1}{8960}}{aa dg ef fg}{bc bc gh hh}
\coeffentry{954}{\tfrac{1}{25200}}{aa dg ef fg}{bc bh ce hh}
\coeffentry{955}{\tfrac{163}{1209600}}{aa dg ef fg}{bc bh cg hh}
\coeffentry{956}{-\tfrac{1}{25200}}{aa dg ef fg}{bc ce ch hh}
\coeffentry{957}{-\tfrac{1}{25200}}{aa dg ef fg}{bc ce dh hh}
\coeffentry{958}{\tfrac{6059}{1209600}}{aa dg ef fg}{bc ce gh hh}
\coeffentry{959}{\tfrac{29}{120960}}{aa dg ef fg}{bc cg ch hh}
\coeffentry{960}{-\tfrac{163}{604800}}{aa dg ef fg}{bc ch eh hh}
\coeffentry{961}{-\tfrac{41}{134400}}{aa dg ef fg}{bc ch gh hh}
\coeffentry{962}{-\tfrac{1}{8400}}{aa dg ef fg}{be cd ch hh}
\coeffentry{963}{\tfrac{23}{60480}}{aa dg ef fg}{bg cf gh hh}
\coeffentry{964}{\tfrac{139}{604800}}{aa dg ef fg}{bh ce ch hh}
\coeffentry{965}{-\tfrac{1}{40320}}{aa dg ef fg}{bh cg fh hh}
\coeffentry{966}{-\tfrac{19}{86400}}{aa dg ef fg}{bh ch gh hh}
\coeffentry{967}{\tfrac{13}{38400}}{aa dg ef gh}{bc bc fg hh}
\coeffentry{968}{-\tfrac{37}{57600}}{aa dg ef gh}{bc cf de hh}
\coeffentry{969}{-\tfrac{667}{172800}}{aa dg ef gh}{bc cf eg hh}
\coeffentry{970}{\tfrac{23}{5400}}{aa dg ef gh}{bf cd ce hh}
\coeffentry{971}{-\tfrac{269}{604800}}{aa dg ef gh}{bf ce df hh}
\coeffentry{972}{-\tfrac{1019}{1209600}}{aa dg ef gh}{bf ce fg hh}
\coeffentry{973}{-\tfrac{4087}{1209600}}{aa dg ef gh}{bf cf fg hh}
\coeffentry{974}{\tfrac{511}{172800}}{aa dg ef gh}{bf cf fh hh}
\coeffentry{975}{-\tfrac{737}{604800}}{aa dg ef gh}{bg cf eg hh}
\coeffentry{976}{\tfrac{37}{17280}}{aa dg ef gh}{bg cf fg hh}
\coeffentry{977}{\tfrac{73}{403200}}{aa dg eg fg}{bc bc fh hh}
\coeffentry{978}{\tfrac{1}{8960}}{aa dg eg fg}{bc bc gh hh}
\coeffentry{979}{\tfrac{53}{86400}}{aa dg eg fg}{bc bh cf hh}
\coeffentry{980}{-\tfrac{23}{40320}}{aa dg eg fg}{bc bh cg hh}
\coeffentry{981}{-\tfrac{13}{22400}}{aa dg eg fg}{bc cf ch hh}
\coeffentry{982}{\tfrac{1}{12600}}{aa dg eg fg}{bc cf eh hh}
\coeffentry{983}{-\tfrac{1}{40320}}{aa dg eg fg}{bc cf fh hh}
\coeffentry{984}{-\tfrac{79}{302400}}{aa dg eg fg}{bc cf gh hh}
\coeffentry{985}{\tfrac{47}{86400}}{aa dg eg fg}{bc cg ch hh}
\coeffentry{986}{\tfrac{53}{120960}}{aa dg eg fg}{bc cg fh hh}
\coeffentry{987}{-\tfrac{11}{60480}}{aa dg eg fg}{bc cg gh hh}
\coeffentry{988}{\tfrac{1}{43200}}{aa dg eg fg}{bc ch fh hh}
\coeffentry{989}{\tfrac{1}{37800}}{aa dg eg fg}{bc ch gh hh}
\coeffentry{990}{\tfrac{11}{201600}}{aa dg eg fg}{bf ce ch hh}
\coeffentry{991}{-\tfrac{1}{151200}}{aa dg eg fg}{bf ce dh hh}
\coeffentry{992}{-\tfrac{1}{120960}}{aa dg eg fg}{bf ce fh hh}
\coeffentry{993}{-\tfrac{89}{151200}}{aa dg eg fg}{bf cf ch hh}
\coeffentry{994}{\tfrac{13}{22400}}{aa dg eg fg}{bf cf fh hh}
\coeffentry{995}{\tfrac{13}{60480}}{aa dg eg fg}{bf cf gh hh}
\coeffentry{996}{\tfrac{19}{604800}}{aa dg eg fg}{bg cg ch hh}
\coeffentry{997}{\tfrac{101}{604800}}{aa dg eg fg}{bg cg gh hh}
\coeffentry{998}{\tfrac{1}{120960}}{aa dg eg fg}{bh cf ch hh}
\coeffentry{999}{\tfrac{1}{43200}}{aa dg eg fg}{bh cf eh hh}
\coeffentry{1000}{-\tfrac{19}{604800}}{aa dg eg fg}{bh cf fh hh}
\coeffentry{1001}{\tfrac{1}{604800}}{aa dg eg fg}{bh cg ch hh}
\coeffentry{1002}{-\tfrac{19}{604800}}{aa dg eg fg}{bh cg gh hh}
\coeffentry{1003}{-\tfrac{1}{604800}}{aa dg eg fg}{bh ch gh hh}
\coeffentry{1004}{\tfrac{293}{604800}}{aa dg eg fh}{bc bc fg hh}
\coeffentry{1005}{\tfrac{157}{1209600}}{aa dg eg fh}{bc bg cf hh}
\coeffentry{1006}{-\tfrac{377}{604800}}{aa dg eg fh}{bc cf cg hh}
\coeffentry{1007}{-\tfrac{359}{604800}}{aa dg eg fh}{bc cf ef hh}
\coeffentry{1008}{-\tfrac{809}{604800}}{aa dg eg fh}{bc cf eh hh}
\coeffentry{1009}{\tfrac{323}{201600}}{aa dg eg fh}{bc cf fg hh}
\coeffentry{1010}{\tfrac{761}{604800}}{aa dg eg fh}{bc cf gh hh}
\coeffentry{1011}{-\tfrac{1}{1440}}{aa dg eg fh}{bc cg fg hh}
\coeffentry{1012}{-\tfrac{11}{60480}}{aa dg eg fh}{bc ch fg hh}
\coeffentry{1013}{\tfrac{899}{241920}}{aa dg eg fh}{bf ce cf hh}
\coeffentry{1014}{-\tfrac{47}{1209600}}{aa dg eg fh}{bf ce ch hh}
\coeffentry{1015}{\tfrac{409}{201600}}{aa dg eg fh}{bf ce df hh}
\coeffentry{1016}{-\tfrac{71}{50400}}{aa dg eg fh}{bf ce fg hh}
\coeffentry{1017}{-\tfrac{97}{18900}}{aa dg eg fh}{bf cf cg hh}
\coeffentry{1018}{-\tfrac{577}{241920}}{aa dg eg fh}{bf cf ef hh}
\coeffentry{1019}{\tfrac{1289}{604800}}{aa dg eg fh}{bf cf eh hh}
\coeffentry{1020}{\tfrac{607}{403200}}{aa dg eg fh}{bf cf fg hh}
\coeffentry{1021}{-\tfrac{1241}{604800}}{aa dg eg fh}{bf cf gh hh}
\coeffentry{1022}{\tfrac{431}{302400}}{aa dg eg fh}{bg cf cg hh}
\coeffentry{1023}{\tfrac{6649}{1209600}}{aa dg eg fh}{bg cf ef hh}
\coeffentry{1024}{-\tfrac{61}{50400}}{aa dg eg fh}{bg cf fg hh}
\coeffentry{1025}{\tfrac{11}{60480}}{aa dg eg fh}{bg cf gh hh}
\coeffentry{1026}{\tfrac{19}{33600}}{aa dg eg fh}{bg cg fg hh}
\coeffentry{1027}{-\tfrac{59}{57600}}{aa dg eg fh}{bh cf cg hh}
\coeffentry{1028}{-\tfrac{23}{1209600}}{aa dg eg fh}{bh cf ef hh}
\coeffentry{1029}{\tfrac{89}{100800}}{aa dg eg fh}{bh cf eh hh}
\coeffentry{1030}{\tfrac{983}{1209600}}{aa dg eg fh}{bh cf fg hh}
\coeffentry{1031}{-\tfrac{41}{302400}}{aa dg eg fh}{bh cf gh hh}
\coeffentry{1032}{\tfrac{13}{9600}}{aa dg eg fh}{bh cg fg hh}
\coeffentry{1033}{-\tfrac{443}{1209600}}{aa dg eg fh}{bh ch fg hh}
\coeffentry{1034}{\tfrac{6599}{2419200}}{aa dg eh fg}{bc bc ef hh}
\coeffentry{1035}{\tfrac{439}{1209600}}{aa dg eh fg}{bc bf ce hh}
\coeffentry{1036}{\tfrac{223}{151200}}{aa dg eh fg}{bc ce cf hh}
\coeffentry{1037}{-\tfrac{773}{201600}}{aa dg eh fg}{bc cf ef hh}
\coeffentry{1038}{-\tfrac{881}{604800}}{aa dg eh fg}{bc cf eg hh}
\coeffentry{1039}{-\tfrac{317}{1209600}}{aa dg eh fg}{bc cg ef hh}
\coeffentry{1040}{\tfrac{101}{403200}}{aa dg eh fg}{bf ce cf hh}
\coeffentry{1041}{\tfrac{1657}{604800}}{aa dg eh fg}{bf ce dh hh}
\coeffentry{1042}{-\tfrac{317}{134400}}{aa dg eh fg}{bf ce gh hh}
\coeffentry{1043}{\tfrac{997}{1209600}}{aa dg eh fg}{bf cf ef hh}
\coeffentry{1044}{-\tfrac{1}{13440}}{aa dg eh fg}{bh ce cf hh}
\coeffentry{1045}{\tfrac{407}{1209600}}{aa dg eh fg}{bh cf ef hh}
\coeffentry{1046}{\tfrac{1}{1260}}{aa dg eh fg}{bh cf eg hh}
\coeffentry{1047}{-\tfrac{313}{604800}}{aa dg eh fg}{bh ch ef hh}
\coeffentry{1048}{-\tfrac{409}{1209600}}{aa dg eh fh}{bc bg cf eg}
\coeffentry{1049}{-\tfrac{113}{151200}}{aa dg eh fh}{bc cg eg fg}
\coeffentry{1050}{\tfrac{857}{1209600}}{aa dg eh fh}{bf ce cg dg}
\coeffentry{1051}{\tfrac{283}{172800}}{aa dg eh fh}{bg ce cf dh}
\coeffentry{1052}{\tfrac{181}{403200}}{aa dg eh fh}{bg ce cf gh}
\coeffentry{1053}{-\tfrac{103}{21600}}{aa dg eh fh}{bg ce df fg}
\coeffentry{1054}{\tfrac{1037}{1209600}}{aa dg eh fh}{bg cf cg eg}
\coeffentry{1055}{-\tfrac{67}{134400}}{aa dg eh fh}{bg cf eg fg}
\coeffentry{1056}{-\tfrac{19}{50400}}{aa dg eh fh}{bg cg eg fg}
\coeffentry{1057}{\tfrac{13}{1209600}}{aa dg eh fh}{bg cg eh fg}
\coeffentry{1058}{-\tfrac{1}{7200}}{aa dg eh fh}{bh cg eh fg}
\coeffentry{1059}{-\tfrac{11}{60480}}{aa dg eh fh}{bh ch eh fg}
\coeffentry{1060}{-\tfrac{1403}{483840}}{aa dh ef gh}{bc bc eg fg}
\coeffentry{1061}{\tfrac{1}{6300}}{aa dh ef gh}{bc bg ce cf}
\coeffentry{1062}{-\tfrac{167}{302400}}{aa dh ef gh}{bc bg cf eg}
\coeffentry{1063}{\tfrac{983}{1209600}}{aa dh ef gh}{bc cf cg eg}
\coeffentry{1064}{\tfrac{1417}{604800}}{aa dh ef gh}{bc cf dg eg}
\coeffentry{1065}{\tfrac{881}{403200}}{aa dh ef gh}{bc cf eg gh}
\coeffentry{1066}{\tfrac{19}{151200}}{aa dh ef gh}{bc cg df eg}
\coeffentry{1067}{-\tfrac{233}{604800}}{aa dh ef gh}{bc cg df eh}
\coeffentry{1068}{\tfrac{439}{604800}}{aa dh ef gh}{bc cg ef fg}
\coeffentry{1069}{\tfrac{479}{1209600}}{aa dh ef gh}{bc cg eg fg}
\coeffentry{1070}{\tfrac{79}{302400}}{aa dh ef gh}{bc cg eh fg}
\coeffentry{1071}{\tfrac{71}{604800}}{aa dh ef gh}{bc cg eh fh}
\coeffentry{1072}{-\tfrac{389}{1209600}}{aa dh ef gh}{bf bg ce cg}
\coeffentry{1073}{-\tfrac{881}{403200}}{aa dh ef gh}{bf ce cg gh}
\coeffentry{1074}{-\tfrac{1187}{1209600}}{aa dh ef gh}{bg cd ce cf}
\coeffentry{1075}{\tfrac{47}{1209600}}{aa dh ef gh}{bg ce cf fg}
\coeffentry{1076}{-\tfrac{5}{48384}}{aa dh ef gh}{bg ce df fg}
\coeffentry{1077}{-\tfrac{43}{151200}}{aa dh ef gh}{bg cf cg de}
\coeffentry{1078}{\tfrac{19}{67200}}{aa dh ef gh}{bg cf cg eg}
\coeffentry{1079}{-\tfrac{37}{403200}}{aa dh ef gh}{bg cf dg eg}
\coeffentry{1080}{-\tfrac{127}{100800}}{aa dh ef gh}{bg cf eg fg}
\coeffentry{1081}{\tfrac{247}{201600}}{aa dh ef gh}{bg cf eh gh}
\coeffentry{1082}{-\tfrac{109}{1209600}}{aa dh ef gh}{bg cg df eg}
\coeffentry{1083}{-\tfrac{1}{60480}}{aa dh ef gh}{bg cg eg fg}
\coeffentry{1084}{-\tfrac{37}{241920}}{aa dh ef gh}{bg cg eh fg}
\coeffentry{1085}{-\tfrac{37}{67200}}{aa dh ef gh}{bg cg eh fh}
\coeffentry{1086}{\tfrac{1}{12600}}{aa dh ef gh}{bh cf eg gh}
\coeffentry{1087}{\tfrac{11}{60480}}{aa dh ef gh}{bh cg eh fh}
\coeffentry{1088}{-\tfrac{137}{172800}}{aa dh ef gh}{bh ch eg fg}
\coeffentry{1089}{-\tfrac{5}{24192}}{aa dh eg fg}{bg cg gh hh}
\coeffentry{1090}{\tfrac{443}{1209600}}{aa dh eg fg}{bh cg gh hh}
\coeffentry{1091}{\tfrac{19}{50400}}{aa dh eg fg}{bh ch gh hh}
\coeffentry{1092}{\tfrac{311}{1209600}}{aa dh eg fh}{bc bg cf eg}
\coeffentry{1093}{-\tfrac{1}{4800}}{aa dh eg fh}{bc bg cf eh}
\coeffentry{1094}{-\tfrac{221}{403200}}{aa dh eg fh}{bc cf cg eg}
\coeffentry{1095}{-\tfrac{29}{302400}}{aa dh eg fh}{bc cf cg eh}
\coeffentry{1096}{-\tfrac{403}{604800}}{aa dh eg fh}{bc cf eg gh}
\coeffentry{1097}{-\tfrac{1}{1890}}{aa dh eg fh}{bg ce fg fh}
\coeffentry{1098}{\tfrac{131}{302400}}{aa dh eg fh}{bh cf cg eh}
\coeffentry{1099}{-\tfrac{19}{151200}}{aa dh eh fg}{bc ce cf cg}
\coeffentry{1100}{\tfrac{173}{1209600}}{aa dh eh fg}{bg cf dg ef}
\coeffentry{1101}{-\tfrac{83}{604800}}{aa ef fg gh}{bc bc dg hh}
\coeffentry{1102}{-\tfrac{31}{201600}}{aa ef fg gh}{bd cd cg hh}
\coeffentry{1103}{\tfrac{11}{302400}}{aa ef fg gh}{bd cd dg hh}
\coeffentry{1104}{\tfrac{1}{6300}}{aa ef fg gh}{be cd dg hh}
\coeffentry{1105}{-\tfrac{31}{604800}}{aa ef fg gh}{bg cd de hh}
\coeffentry{1106}{\tfrac{29}{201600}}{aa ef fg gh}{bg cd df hh}
\coeffentry{1107}{-\tfrac{13}{80640}}{aa ef gh gh}{bc bc df eh}
\coeffentry{1108}{-\tfrac{1}{4200}}{aa ef gh hh}{bd cd cg fg}
\coeffentry{1109}{\tfrac{19}{302400}}{aa eg fg fh}{bc bc de hh}
\coeffentry{1110}{\tfrac{1}{10368}}{aa eg fg fh}{bc bd cd hh}
\coeffentry{1111}{-\tfrac{1}{100800}}{aa eg fg fh}{bd cd cd hh}
\coeffentry{1112}{-\tfrac{31}{151200}}{aa eg fg fh}{bd cd ce hh}
\coeffentry{1113}{-\tfrac{1}{37800}}{aa eg fg fh}{bd cd de hh}
\coeffentry{1114}{-\tfrac{17}{241920}}{aa eg fg gh}{bc bc df hh}
\coeffentry{1115}{\tfrac{127}{302400}}{aa eg fg gh}{bc bc dg hh}
\coeffentry{1116}{-\tfrac{13}{604800}}{aa eg fg gh}{bc bd cd hh}
\coeffentry{1117}{\tfrac{1}{67200}}{aa eg fg gh}{bd cd cd hh}
\coeffentry{1118}{-\tfrac{19}{604800}}{aa eg fg gh}{bd cd cf hh}
\coeffentry{1119}{\tfrac{7}{86400}}{aa eg fg gh}{bd cd cg hh}
\coeffentry{1120}{\tfrac{1}{25200}}{aa eg fg gh}{bd cd df hh}
\coeffentry{1121}{-\tfrac{11}{201600}}{aa eg fg gh}{bd cd dg hh}
\coeffentry{1122}{-\tfrac{37}{604800}}{aa eg fg gh}{bf cd df hh}
\coeffentry{1123}{-\tfrac{1}{33600}}{aa eg fg gh}{bf cf de hh}
\coeffentry{1124}{-\tfrac{1}{302400}}{aa eg fg gh}{bf cf df hh}
\coeffentry{1125}{\tfrac{1}{37800}}{aa eg fg hh}{bc bd cd fh}
\coeffentry{1126}{-\tfrac{1}{18900}}{aa eg fg hh}{bc bd cd gh}
\coeffentry{1127}{-\tfrac{19}{115200}}{aa eg fh gh}{bc bc df dh}
\coeffentry{1128}{-\tfrac{89}{1209600}}{aa eg fh gh}{bc bc dh fg}
\coeffentry{1129}{-\tfrac{1}{18900}}{aa eg fh gh}{bc bd cd fh}
\coeffentry{1130}{\tfrac{229}{403200}}{aa eg fh gh}{bc cd df dh}
\coeffentry{1131}{-\tfrac{227}{1209600}}{aa eg fh gh}{bd bh cd cf}
\coeffentry{1132}{-\tfrac{13}{60480}}{aa eg fh gh}{bd cd cf dh}
\coeffentry{1133}{-\tfrac{59}{302400}}{aa eg fh gh}{bd cd ch df}
\coeffentry{1134}{\tfrac{1}{6300}}{aa eg fh gh}{bd cd ch fg}
\coeffentry{1135}{-\tfrac{1}{3600}}{aa eg fh gh}{be cd df fh}
\coeffentry{1136}{\tfrac{1}{8400}}{aa eg fh gh}{bf cd dh ef}
\coeffentry{1137}{\tfrac{1}{6300}}{aa eg fh gh}{bh cf df ef}
\coeffentry{1138}{\tfrac{1}{37800}}{aa eh fg gh}{bc bd cd ef}
\coeffentry{1139}{\tfrac{1}{12600}}{aa eh fg gh}{bd bf cd ce}
\coeffentry{1140}{\tfrac{17}{120960}}{aa eh fg gh}{bd cd de df}
\coeffentry{1141}{-\tfrac{1}{14400}}{aa eh fh gh}{bc bc df dg}
\coeffentry{1142}{-\tfrac{73}{302400}}{aa eh fh gh}{bc bc dg dh}
\coeffentry{1143}{\tfrac{1}{181440}}{aa eh fh gh}{bc bd cd fg}
\coeffentry{1144}{-\tfrac{11}{151200}}{aa eh fh gh}{bc cd dg dh}
\coeffentry{1145}{-\tfrac{1}{25200}}{aa eh fh gh}{bd bg cd cf}
\coeffentry{1146}{-\tfrac{1}{43200}}{aa eh fh gh}{bd cd cd fg}
\coeffentry{1147}{\tfrac{1}{151200}}{aa eh fh gh}{bd cd cg ef}
\coeffentry{1148}{\tfrac{1}{21600}}{aa eh fh gh}{bd cd ch dg}
\coeffentry{1149}{\tfrac{1}{604800}}{aa eh fh gh}{bd cd df dg}
\coeffentry{1150}{\tfrac{1}{43200}}{aa eh fh gh}{bd cd dg ef}
\coeffentry{1151}{-\tfrac{13}{241920}}{ab ab ab dd}{cd ef eg gh}
\coeffentry{1152}{-\tfrac{1}{604800}}{ab ab bc cc}{de df dg dh}
\coeffentry{1153}{\tfrac{1}{25200}}{ab ab bc cc}{de dg ef gh}
\coeffentry{1154}{\tfrac{1}{60480}}{ab ab bc de}{cc df dg dh}
\coeffentry{1155}{-\tfrac{13}{1209600}}{ab ab cc de}{bc df dg dh}
\coeffentry{1156}{\tfrac{17}{302400}}{ab ab cd cd}{eg fg gh hh}
\coeffentry{1157}{-\tfrac{11}{302400}}{ab ab cd cd}{eh fh gh gh}
\coeffentry{1158}{-\tfrac{1}{9450}}{ab ab cd ce}{ab cf cg gh}
\coeffentry{1159}{-\tfrac{1}{3780}}{ab ab cd ce}{ab cg ch ef}
\coeffentry{1160}{-\tfrac{1}{12600}}{ab ab cd cf}{ab de fg fh}
\coeffentry{1161}{\tfrac{23}{86400}}{ab ab cd fg}{de ee gh hh}
\coeffentry{1162}{-\tfrac{191}{1209600}}{ab ab cd gh}{de ee fg fh}
\coeffentry{1163}{\tfrac{167}{1209600}}{ab ab cd gh}{de ee fg hh}
\coeffentry{1164}{\tfrac{3}{44800}}{ab ab cd gh}{de ee fh fh}
\coeffentry{1165}{-\tfrac{17}{37800}}{ab ab dd ef}{ab cd eg gh}
\coeffentry{1166}{\tfrac{1}{5400}}{ab ab dd ef}{ab cd eh fg}
\coeffentry{1167}{\tfrac{1}{1890}}{ab ab dd ef}{bb cd eg gh}
\coeffentry{1168}{\tfrac{53}{2419200}}{ab ab dd fg}{cd eg eh hh}
\coeffentry{1169}{\tfrac{11}{134400}}{ab ab dd fg}{cd eg fh hh}
\coeffentry{1170}{-\tfrac{11}{268800}}{ab ab dd gh}{cd ef eh fg}
\coeffentry{1171}{-\tfrac{113}{1209600}}{ab ab de fg}{cf eg gh hh}
\coeffentry{1172}{\tfrac{1}{43200}}{ab ab de gh}{cd ce fg fh}
\coeffentry{1173}{-\tfrac{1}{9450}}{ab ab de gh}{cd ce fh fh}
\coeffentry{1174}{\tfrac{1}{75600}}{ab ab de gh}{cd ee fg hh}
\coeffentry{1175}{\tfrac{17}{151200}}{ab ab de gh}{cd ee fh fh}
\coeffentry{1176}{-\tfrac{1}{11200}}{ab ab de gh}{ce ce fh fh}
\coeffentry{1177}{\tfrac{17}{172800}}{ab ab df eg}{cg ch ef hh}
\coeffentry{1178}{-\tfrac{17}{172800}}{ab ab df eg}{cg ef gh hh}
\coeffentry{1179}{\tfrac{47}{201600}}{ab ab df eg}{cg eh fg hh}
\coeffentry{1180}{\tfrac{1}{80640}}{ab ab df eg}{cg eh fh hh}
\coeffentry{1181}{\tfrac{1}{302400}}{ab ab dg ef}{cf cg ch hh}
\coeffentry{1182}{-\tfrac{1}{20160}}{ab ab dg ef}{cf cg eh hh}
\coeffentry{1183}{\tfrac{1}{21600}}{ab ab dg ef}{cf cg gh hh}
\coeffentry{1184}{-\tfrac{13}{151200}}{ab ab dg ef}{cf eg gh hh}
\coeffentry{1185}{-\tfrac{1}{302400}}{ab ab dg ef}{cg ch fh hh}
\coeffentry{1186}{\tfrac{1}{37800}}{ab ab dg ef}{cg eh fg hh}
\coeffentry{1187}{\tfrac{1}{43200}}{ab ab dg ef}{cg fg fh hh}
\coeffentry{1188}{-\tfrac{1}{18900}}{ab ab dg ef}{cg fg gh hh}
\coeffentry{1189}{\tfrac{1}{6720}}{ab ab dg ef}{cg fh gh hh}
\coeffentry{1190}{-\tfrac{17}{172800}}{ab ab dg ef}{ch fg gh hh}
\coeffentry{1191}{\tfrac{43}{604800}}{ab ab dg ef}{ch fh gh hh}
\coeffentry{1192}{-\tfrac{17}{172800}}{ab ab ef gh}{cd ch dg fh}
\coeffentry{1193}{\tfrac{103}{1209600}}{ab ab ef gh}{cd dg dh fh}
\coeffentry{1194}{-\tfrac{1}{12600}}{ab ab ef gh}{cd dg eh fh}
\coeffentry{1195}{\tfrac{1}{10800}}{ab ab ef gh}{cd dg fg hh}
\coeffentry{1196}{\tfrac{1}{5040}}{ab ab ef gh}{cd dg fh hh}
\coeffentry{1197}{-\tfrac{1}{12600}}{ab ab ef gh}{cd dh fg gh}
\coeffentry{1198}{\tfrac{1}{60480}}{ab ab ef gh}{cg df dg hh}
\coeffentry{1199}{-\tfrac{199}{1209600}}{ab ab ef gh}{cg df dh dh}
\coeffentry{1200}{\tfrac{1}{75600}}{ab ab ef gh}{cg df dh eh}
\coeffentry{1201}{-\tfrac{1}{75600}}{ab ab ef gh}{cg df eg hh}
\coeffentry{1202}{\tfrac{151}{1209600}}{ab ab ef gh}{cg dg fg hh}
\coeffentry{1203}{-\tfrac{229}{1209600}}{ab ab ef gh}{cg dg fh hh}
\coeffentry{1204}{\tfrac{31}{151200}}{ab ab ef gh}{cg dh dh fh}
\coeffentry{1205}{\tfrac{263}{1209600}}{ab ab ef gh}{ch df dg gh}
\coeffentry{1206}{\tfrac{1}{75600}}{ab ab ef gh}{ch df eg gh}
\coeffentry{1207}{-\tfrac{37}{403200}}{ab ab ef gh}{ch dg dg fh}
\coeffentry{1208}{-\tfrac{1}{8400}}{ab ab ef gh}{ch dg dh fh}
\coeffentry{1209}{-\tfrac{1}{15120}}{ab ab ef gh}{ch dg eh fg}
\coeffentry{1210}{-\tfrac{1}{75600}}{ab ab ef gh}{ch dg eh fh}
\coeffentry{1211}{-\tfrac{1}{75600}}{ab ab ef gh}{ch dg fg fh}
\coeffentry{1212}{\tfrac{1}{30240}}{ab ab ef gh}{ch dh eg fg}
\coeffentry{1213}{\tfrac{29}{1209600}}{ab ab ef gh}{ch dh fg gh}
\coeffentry{1214}{\tfrac{1}{50400}}{ab ab eg fg}{cd ch df hh}
\coeffentry{1215}{-\tfrac{1}{33600}}{ab ab eg fg}{cd df dh hh}
\coeffentry{1216}{\tfrac{1}{12600}}{ab ab eg fg}{cd df eh hh}
\coeffentry{1217}{\tfrac{1}{16800}}{ab ab eg fg}{cd dg dh hh}
\coeffentry{1218}{-\tfrac{1}{12600}}{ab ab eg fg}{cd dg fh hh}
\coeffentry{1219}{\tfrac{1}{50400}}{ab ab eg fg}{cd dg gh hh}
\coeffentry{1220}{-\tfrac{1}{16800}}{ab ab eg fg}{cd dh gh hh}
\coeffentry{1221}{-\tfrac{1}{50400}}{ab ab eg fg}{cf df dh hh}
\coeffentry{1222}{-\tfrac{1}{20160}}{ab ab eg fg}{cf df fh hh}
\coeffentry{1223}{\tfrac{1}{14400}}{ab ab eg fg}{cf df gh hh}
\coeffentry{1224}{-\tfrac{179}{1209600}}{ab ab eg fg}{ch df dh hh}
\coeffentry{1225}{\tfrac{179}{1209600}}{ab ab eg fg}{ch df fh hh}
\coeffentry{1226}{\tfrac{179}{1209600}}{ab ab eg fg}{ch dg dh hh}
\coeffentry{1227}{-\tfrac{19}{241920}}{ab ab eg fg}{ch dg gh hh}
\coeffentry{1228}{\tfrac{11}{75600}}{ab ab eg fg}{ch dh fh hh}
\coeffentry{1229}{-\tfrac{179}{1209600}}{ab ab eg fg}{ch dh gh hh}
\coeffentry{1230}{-\tfrac{1}{20160}}{ab ab eg fh}{cd cg df hh}
\coeffentry{1231}{\tfrac{13}{120960}}{ab ab eg fh}{cd df dg hh}
\coeffentry{1232}{\tfrac{1}{5400}}{ab ab eg fh}{cd dg fg hh}
\coeffentry{1233}{-\tfrac{1}{21600}}{ab ab eg fh}{cd dh fg gh}
\coeffentry{1234}{-\tfrac{1}{7200}}{ab ab eg fh}{cg df dg hh}
\coeffentry{1235}{-\tfrac{1}{37800}}{ab ab eg fh}{cg df dh eh}
\coeffentry{1236}{\tfrac{1}{37800}}{ab ab eg fh}{cg df fg hh}
\coeffentry{1237}{\tfrac{11}{151200}}{ab ab eg fh}{cg df gh hh}
\coeffentry{1238}{\tfrac{1}{12600}}{ab ab eg fh}{ch df dg eh}
\coeffentry{1239}{\tfrac{11}{151200}}{ab ab eg fh}{ch dg fh gh}
\coeffentry{1240}{\tfrac{1}{9450}}{ab ab eg fh}{ch dh fg fg}
\coeffentry{1241}{-\tfrac{43}{604800}}{ab ab eh fg}{cd df dg dh}
\coeffentry{1242}{\tfrac{1}{75600}}{ab ab eh fg}{cd dg dh ef}
\coeffentry{1243}{-\tfrac{11}{151200}}{ab ab eh fg}{cd dh fh gh}
\coeffentry{1244}{\tfrac{13}{151200}}{ab ab eh fg}{cg dh dh fh}
\coeffentry{1245}{-\tfrac{1}{12600}}{ab ab fg gh}{ab cd ce hh}
\coeffentry{1246}{\tfrac{1}{403200}}{ab ab fg gh}{cd ce de hh}
\coeffentry{1247}{\tfrac{1}{403200}}{ab ab fg gh}{ce de de hh}
\coeffentry{1248}{-\tfrac{1}{12600}}{ab ab fg gh}{ce de df hh}
\coeffentry{1249}{\tfrac{1}{14400}}{ab ab fg gh}{ce de dg hh}
\coeffentry{1250}{-\tfrac{1}{403200}}{ab ab fg gh}{ce de ef hh}
\coeffentry{1251}{-\tfrac{1}{172800}}{ab ab fg gh}{cf df ef hh}
\coeffentry{1252}{-\tfrac{1}{12600}}{ab ab fg hh}{ab cd ce gh}
\coeffentry{1253}{\tfrac{1}{12600}}{ab ab fh gh}{ab cd ce fg}
\coeffentry{1254}{\tfrac{1}{12600}}{ab ab fh gh}{cd de eg eh}
\coeffentry{1255}{-\tfrac{1}{50400}}{ab ab fh gh}{cd de eh eh}
\coeffentry{1256}{-\tfrac{1}{14400}}{ab ab fh gh}{ce de dh eg}
\coeffentry{1257}{\tfrac{1}{403200}}{ab ab fh gh}{ce de ef eg}
\coeffentry{1258}{-\tfrac{1}{60480}}{ab ac ad fg}{ef eg eh hh}
\coeffentry{1259}{\tfrac{1}{60480}}{ab ac ad fg}{ef eg gh hh}
\coeffentry{1260}{-\tfrac{23}{302400}}{ab ac ad fg}{ef gh gh hh}
\coeffentry{1261}{\tfrac{1}{100800}}{ab ac ad fg}{eg eg eh hh}
\coeffentry{1262}{\tfrac{1}{75600}}{ab ac ad fg}{eg eg fh hh}
\coeffentry{1263}{-\tfrac{1}{21600}}{ab ac ad fg}{eg eg gh hh}
\coeffentry{1264}{-\tfrac{1}{60480}}{ab ac ad fg}{eg eh eh hh}
\coeffentry{1265}{-\tfrac{1}{16800}}{ab ac ad fg}{eg eh fg hh}
\coeffentry{1266}{\tfrac{1}{25200}}{ab ac ad fg}{eg eh gh hh}
\coeffentry{1267}{\tfrac{1}{16800}}{ab ac ad fg}{eg fg fh hh}
\coeffentry{1268}{\tfrac{23}{302400}}{ab ac ad fg}{eg fh gh hh}
\coeffentry{1269}{-\tfrac{1}{16800}}{ab ac bb cc}{de dg ef gh}
\coeffentry{1270}{\tfrac{1}{75600}}{ab ac bc fg}{de dh eg hh}
\coeffentry{1271}{-\tfrac{1}{75600}}{ab ac bc fg}{de eg eh hh}
\coeffentry{1272}{-\tfrac{1}{75600}}{ab ac bc fg}{dh eg eh hh}
\coeffentry{1273}{\tfrac{1}{25200}}{ab ac cd fg}{ef eg eh hh}
\coeffentry{1274}{-\tfrac{1}{10080}}{ab ac cd fg}{ef eg gh hh}
\coeffentry{1275}{\tfrac{1}{50400}}{ab ac cd fg}{ef gh gh hh}
\coeffentry{1276}{-\tfrac{1}{7560}}{ab ac cd fg}{eg eg eh hh}
\coeffentry{1277}{\tfrac{11}{151200}}{ab ac cd fg}{eg eg gh hh}
\coeffentry{1278}{-\tfrac{23}{151200}}{ab ac cd fg}{eg eh eh hh}
\coeffentry{1279}{-\tfrac{1}{50400}}{ab ac cd fg}{eg eh fg hh}
\coeffentry{1280}{\tfrac{1}{6300}}{ab ac cd fg}{eg eh fh hh}
\coeffentry{1281}{\tfrac{1}{30240}}{ab ac cd fg}{eg eh gh hh}
\coeffentry{1282}{\tfrac{19}{151200}}{ab ac cd fg}{eg eh hh hh}
\coeffentry{1283}{\tfrac{1}{12600}}{ab ac cd fg}{eg fg fh hh}
\coeffentry{1284}{-\tfrac{1}{8400}}{ab ac cd fg}{eg fh gh hh}
\coeffentry{1285}{-\tfrac{1}{16800}}{ab ac cd fg}{eg fh hh hh}
\coeffentry{1286}{\tfrac{1}{12600}}{ab ac cd fg}{eh eh gh hh}
\coeffentry{1287}{-\tfrac{1}{12600}}{ab ac cd fg}{eh fh gh hh}
\coeffentry{1288}{\tfrac{1}{6300}}{ab ac de fh}{ef fg gg hh}
\coeffentry{1289}{-\tfrac{1}{12600}}{ab ac df ef}{dh eg gg hh}
\coeffentry{1290}{-\tfrac{1}{6300}}{ab ac dg ef}{eg fg fh hh}
\end{multicols}
\endgroup